\definecolor{Red}{rgb}{1,0,0}
\definecolor{Blue}{rgb}{0,0,1}
\definecolor{Olive}{rgb}{0.41,0.55,0.13}
\definecolor{Yarok}{rgb}{0,0.5,0}
\definecolor{Green}{rgb}{0,1,0}
\definecolor{MGreen}{rgb}{0,0.8,0}
\definecolor{DGreen}{rgb}{0,0.55,0}
\definecolor{Yellow}{rgb}{1,1,0}
\definecolor{Cyan}{rgb}{0,1,1}
\definecolor{Magenta}{rgb}{1,0,1}
\definecolor{Orange}{rgb}{1,.5,0}
\definecolor{Violet}{rgb}{.5,0,.5}
\definecolor{Purple}{rgb}{.75,0,.25}
\definecolor{Brown}{rgb}{.75,.5,.25}
\definecolor{Grey}{rgb}{.5,.5,.5}
\newcommand{\ind}{\mathbbm{1}}
\newcommand{\R}{\mathbb{R}}
\newcommand{\ReLU}{\texttt{ReLU}}
\newcommand{\SReLU}{\texttt{S-ReLU}}
\newcommand{\Step}{\texttt{Step}}
\newcommand{\EmpRisk}[1]{\widehat{\mathcal{L}}\left(#1\right)}
\newcommand{\NN}{\texttt{NN} }
\newcommand{\SGM}{\texttt{SGM}}
\newcommand{\Er}{{\rm er}}
\newcommand{\HatEr}[3]{\widehat{{\rm er}}_{#3}^{#1}\left(#2\right)}
\renewcommand{\R}{\mathbb{R}}
\newcommand{\distr}{\stackrel{d}{=}}
\newcommand{\ignore}[1]{\relax}
\newlength\myindent
\newtheorem{theorem}{Theorem}[section]
\newtheorem{lemma}[theorem]{Lemma}
\newtheorem{proposition}[theorem]{Proposition}
\newtheorem{coro}[theorem]{Corollary}
\newtheorem{definition}[theorem]{Definition}
\newtheorem{Assumption}[theorem]{Assumption}
\newcounter{parentnumber}
\def\BState{\State\hskip-\ALG@thistlm}
\definecolor{Red}{rgb}{1,0,0}
\definecolor{Blue}{rgb}{0,0,1}
\definecolor{Olive}{rgb}{0.41,0.55,0.13}
\definecolor{Green}{rgb}{0,1,0}
\definecolor{MGreen}{rgb}{0,0.8,0}
\definecolor{DGreen}{rgb}{0,0.55,0}
\definecolor{Yellow}{rgb}{1,1,0}
\definecolor{Cyan}{rgb}{0,1,1}
\definecolor{Magenta}{rgb}{1,0,1}
\definecolor{Orange}{rgb}{1,.5,0}
\definecolor{Violet}{rgb}{.5,0,.5}
\definecolor{Purple}{rgb}{.75,0,.25}
\definecolor{Brown}{rgb}{.75,.5,.25}
\definecolor{Grey}{rgb}{.5,.5,.5}
\definecolor{Pink}{rgb}{1,0,1}
\definecolor{DBrown}{rgb}{.5,.34,.16}
\definecolor{Black}{rgb}{0,0,0}
\author{{\sf David Gamarnik}\thanks{MIT; e-mail: {\tt gamarnik@mit.edu}. Research supported  by the NSF grants DMS-2015517.}
\and
{\sf Eren C. K{\i}z{\i}lda\u{g}}\thanks{MIT; e-mail: {\tt kizildag@mit.edu}.}
\and 
{\sf Ilias Zadik}\thanks{NYU; e-mail: {\tt zadik@nyu.edu}. Research supported by a CDS Moore-Sloan Postdoctoral Fellowship.}
}
\begin{document}

\title{Self-Regularity of Non-Negative Output Weights \\ for Overparameterized Two-Layer Neural Networks}
\date{\today}

\maketitle
\begin{abstract}
We consider the problem of finding a two-layer neural network with sigmoid, rectified linear unit (ReLU), or binary step activation functions that ``fits" a training data set as accurately as possible as quantified by the training error; and study the following question: \emph{does a low training error guarantee that the norm of the output layer (outer norm) itself is small?} We answer affirmatively this question for the case of non-negative output weights. Using a simple covering number argument, we establish that under quite mild distributional assumptions on the input/label pairs; any such network achieving a small training error on polynomially many data necessarily has a well-controlled outer norm. Notably, our results (a) have a polynomial (in $d$) sample complexity, (b) are independent of the number of hidden units (which can potentially be very high), (c) are oblivious to the training algorithm; and (d) require quite mild assumptions on the data (in particular  the input vector $X\in\mathbb{R}^d$ need not have independent coordinates). We then leverage our bounds to establish generalization guarantees for such networks through \emph{fat-shattering dimension}, a scale-sensitive measure of the complexity class that the network architectures we investigate belong to. Notably, our generalization bounds also have good sample complexity (polynomials in $d$ with a low degree), and are in fact near-linear for some important cases of interest.
\end{abstract}
\newpage
\tableofcontents
\newpage

\section{Introduction}
Neural network (\texttt{NN}) architectures achieved a great deal of success in practice. An ever-growing list of their applications includes image recognition~\cite{he2016deep}, image classification~\cite{krizhevsky2012imagenet},  speech recognition~\cite{mohamed2011acoustic}, natural language processing~\cite{collobert2008unified}, game playing~\cite{silver2017mastering} 
and more. Despite this great empirical success, however, a rigorous understanding of these networks is still an ongoing quest. 

A common paradigm in classical statistics is that  \emph{overparameterized} models, that is, models with more parameters than necessary, pick on the idiosyncrasies of the training data itself---dubbed as \emph{overfitting}; and as a consequence, tend to  predict \emph{poorly} on the unseen data---called poor \emph{generalization}. The aforementioned success of the \NN architectures, however, stands in the face of this conventional wisdom; and a growing body of recent literature, starting from~\cite{zhang2016understanding}, has demonstrated exactly the opposite effect for a broad class of \NN models: 
even though the number of parameters, such as the number
of hidden units (neurons), of a \NN significantly exceeds the sample size, and a perfect (zero) \emph{in-training error} is achieved (commonly called as
\emph{data interpolation}); they still retain a good generalization ability. Some partial and certainly very incomplete list of references to this
point are found in~\cite{du2018gradient,li2018learning,gunasekar2018implicit,goldt2019dynamics,belkin2019reconciling,arora2019exact}. Defying  statistical intuition even further, it was established empirically in~\cite{belkin2019reconciling}
that beyond a certain point, increasing the number of parameters increases out of sample accuracy.

Explaining this conundrum is arguably one of the most vexing current problems in the field of theoretical machine learning. Standard Vapnik-Chervonenkis (VC) theory do not help explaining the good generalization ability of overparameterized \NN models, since the VC-dimension of these networks grows (at least) linearly in the number of parameters~\cite{harvey2017nearly,bartlett2019nearly}. These findings fueled significant research efforts aiming at understanding the generalization ability of such networks. One such line of research is the algorithm-independent front; and is through the lens of controlling the norm of the matrices carrying weights~\cite{neyshabur2015norm,bartlett2017spectrally,liang2017fisher,golowich2017size,dziugaite2017computing}, PAC-Bayes theory~\cite{neyshabur2017pac,neyshabur2017exploring}, and compression-based bounds~\cite{arora2018stronger}, among others. A major drawback of these approaches, however, is that they require certain norm constraints on the weights considered; therefore making their guarantees \emph{a posteriori} in nature: whether or not the weights of the \NN are bounded (hence a good generalization holds) can be determined only after the training process is complete. An alternative line of research (detailed below) focuses on the end results of the algorithms, and potentially yields \emph{a priori} guarantees: for instance, relatively recently, Arora et al.\,gave in~\cite{arora2019fine} \emph{a priori} guarantees for the solution found by the \emph{gradient descent} algorithm under random initialization. 
%The generalization ability of such networks has been studied, among others,  These approaches con %Major drawbacks of these approaches include the following: (a) they require certain constraints on the weights; and more importantly (b) these guarantees are mostly . That is, . One exception

A predominant explanation of the aforementioned phenomenon (that the overparameterization does not hurt the generalization ability of the \NN architectures) which has emerged recently is based on the idea of \emph{self-regularization}. Specifically, it is argued that even though there is an abundance of parameter choices perfectly fitting \emph{(interpolating)} the data (and thus achieving zero in-training error); the algorithms used in training the models, such as the \emph{gradient descent}  and its many variants such as \emph{stochastic gradient descent}, \emph{mirror descent}, etc., tend to find solutions
 which are regularized according to some additional criteria, such as small norms, thus introducing algorithm dependent \emph{inductive bias}. Namely, the algorithms implemented for minimizing training error ``prefer" certain kinds of solutions. The use of these  solutions for model building in particular is believed to result in low generalization errors. Thus a significant research effort (as was partially mentioned above) was devoted to the analysis of the end results of the implementation of such algorithms. This line of research include the analysis of the end results of the gradient descent~\cite{brutzkus2017globally,frei2019algorithm}, stochastic gradient descent~\cite{hardt2016train,brutzkus2017sgd,li2018learning,cao2019generalization}, as well as the stochastic gradient Langevin dynamics~\cite{mou2018generalization}.
 
In this paper, we consider two-layer \NN models \eqref{eq:NN-fnc}---also known as \emph{shallow} architectures---consisting of an arbitrary number $\overline{m}\in\mathbb{N}$ of hidden units and sigmoid, rectified linear unit (ReLU), or binary step activations---activations that are arguably among the most popular practical choices---and investigate the following question: \emph{to what extent a low training error itself places a 
restriction on the weights of the learned \texttt{NN}?} We take an algorithm-independent route; and establish the following ``picture", under the assumption that the output weights $a=\left(a_i:1\le i\le \overline{m}\right)\in\R^{\overline{m}}$ of the ``learned" \NN are \emph{non-negative}. When the number $N$ of training samples is at least an explicit (low-degree) polynomial function in $d$, $N=d^{O(1)}$, the norm $\|a\|_1$ of the output weights $a\in\R_{\ge 0}^{\overline{m}}$ of {\bf any} \NN model achieving a small training error is well-controlled: $\|a\|_1=O(1)$, with high probability over the training data set. In particular, for the ReLU and step networks, we obtain a near-linear sample complexity bound, $N=\Theta(d \log d)$ for such a result to hold. Note that a condition such as the non-negativity of $a_i$ is necessary in a strict sense for such a bound on $\|a\|_1$. Indeed, notice that by  growing the width $\overline{m}$ arbitrarily and appropriately choosing alternating signs for the new weights $a_i$; one can introduce cancellations and make $\|a\|_1$ to explode; while keeping the training error unchanged. %new weights $a_i$ with , one can introduce many cancellations%Indeed, notice that by growing arbitrarily the width and appropriately choosing alternating signs of the new weights ai, one can introduce a cancellation which makes the outer norma of a explode while it keeps the training error unchanged
%Note that the non-negativity assumption is necessary in some sense as without it $\|a\|_1$ can be made arbitrarily large by playing with the signs of $a_i$ and introducing many cancellations. 

Our results are established using elementary tools, in particular through an $\epsilon$-net argument (Definition~\ref{def:eps-net}). Notably, our results (a) are independent of the number $\overline{m}$ of the hidden units (which can potentially be quite large), (b) are oblivious to the way the training is done (that is, independent of the choice of the training algorithm); and (c) are valid under quite mild distributional assumptions on the input/label pairs $(X,Y)\in\R^d\times \R$. In particular, the coordinates of $X$ need not be independent. 

%%%%%% OPERATE HERE %%%%%%%%%%%%
Moreover, a bounded outer norm for such network models implies a well-controlled \emph{fat-shattering dimension} (FSD)~\cite{bartlett1998sample}---a measure of the complexity of the model class achieving a low training error. In Section~\ref{sec:genel}, we leverage our outer norm bounds and the FSD to establish generalization guarantees for the networks that we investigate. %A bounded outer norm implies well-controlled fat-shattering dimension 
%Moreover, a bounded outer norm implies ``low-complexity" for such \NN models with sigmoidal activation, see~\cite{bartlett1998sample} and Section~\ref{sec:genel}. Thus, our results have implications about the generalization abilities of such networks. 
%%%%%% OPERATE HERE %%%%%%%%%%%%
%Finally, we provide an informal argument which shows that, strictly speaking, the non-negativity assumption on the output weights is necessary: by playing with the signs and introducing many cancellations, one may generate networks with small training error and arbitrarily large output norm. 
%A preliminary version of our results appeared in the preprint~\cite{emschwiller2020neural}. 
The current paper presents significantly strengthened versions and extensions of some results appeared in our preprint~\cite{emschwiller2020neural}. %obtained by a much simpler machinery. 
%For brevity, we compress/omit certain proofs. We plan to give full proofs and certain extensions in a longer version of this paper. 

% We provide compressed proofs for brevity; and plan to provide full proofs/extensions in a forthcoming longer version.
%Another line of research, which is more relevant to us, assumes the existence of a planted ground truth model generating the labels. Examples along this line include the works \cite{du2018many,ma2018priori,imaizumi2018deep}. Our approach in Section \ref{sec:data-no} are similar to this regard in the sense that it depends only on the data, and moreover, stated as  a geometric condition; yet, suffer from one potential caveat that, the hardness of the training task is not our main focus, and . This has been addressed recently in the work by Arora et al. \cite{arora2019fine}, which gave an {\em a priori} guarantee, for the solution found by the gradient descent algorithm. 

%Furthermore, success of this great practical success 
%Success of modern learning methods such as neural networks 
%rior attempts to explain this conundrum were through the lens of the expressive ability of these networks, going as early as Barron~\cite{barron1994approximation}. Recent work along this line focused on deep and sparse networks~\cite{telgarsky2016benefits,eldan2016power,poggio2017and,bolcskei2019optimal}; and the expressive abilities of such networks are relatively well-understood. 
\subsection*{Preliminaries}
We commence this section with a list of notational convention that we follow throughout. 
\paragraph{Notation.} %We commence this section with a list of notational convention that we follow throughout. 
The set of reals, non-negative reals, and positive integers are denoted respectively by $\mathbb{R},\mathbb{R}_{\ge 0}$, and $\mathbb{N}$. For any set $S$, $|S|$ denotes its cardinality. For any $N\in\mathbb{N}$, $[N]\triangleq \{1,2,\dots,N\}$. For any $v\in\R^n$, its $\ell_p$ norm  %, $(\sum_{1\le i\le n}|v_i|^p)^{1/p}$, 
is denoted by $\|v\|_p$. For $u,v\in \R^n$, their Euclidean inner product %, $\sum_{1\le i\le n}u_i v_i$,
is denoted by $u^Tv$. For any $r\in\mathbb{R}$, $\exp(r)$ denotes $e^r$; and $\ln(r)$ denotes the logarithm of $r$ base $e$. For any ``event" $E$; $\ind\{E\}=1$ when $E$ is true; and $\ind\{E\}=0$ when $E$ is false. $\SGM(x)$ denotes the sigmoid activation function, $1/(1+\exp(-x))$; $\ReLU(x)$ denotes the ReLU activation function, $\max\{x,0\}$; and $\Step(x)$ denotes the (binary) step activation, $\ind\{x\ge 0\}$. $X\distr\mathcal{N}(0,\Sigma)$ if $X$ is a zero-mean multivariate normal vector with covariance $\Sigma$.  A random variable $U$ is symmetric around zero if $U$ and $-U$ have the same distribution, that is $U\distr -U$. For any random variable $U$, (if finite) its moment generating function (MGF) at $s\in\mathbb{R}$, $\mathbb{E}[\exp(sU)]$, is denoted  by $M_U(s)$. Finally, $\Theta(\cdot),o(\cdot), O\left(\cdot\right)$ are the standard asymptotic order notations. %for comparing the growth of two real-valued sequences.

\paragraph{ Setup.} A two-layer \NN $(a,W)\in\R^{\overline{m}}\times \R^{\overline{m}\times d}$ with $\overline{m}$ hidden units (neurons) computes, for each $X\in\R^d$, \begin{equation}\label{eq:NN-fnc}
\sum_{1\le j\le \overline{m}}a_j\sigma\left(w_j^T X\right).
\end{equation}Here, $\sigma(\cdot)$ is the activation; $w_j\in\R^d$, the $j^{\rm th}$ row of $W$, carries the weights of neuron $j$; and $a=\left(a_j:1\le j\le \overline{m}\right)\in\R^{\overline{m}}$ carries the output weights. $\|a\|_1$ is referred to as the \emph{outer norm}. We assume $a_j\ge 0$ for $j\in[\overline{m}]$. This non-negativity assumption appears often in the theoretical study of this model: see~\cite{ge2017learning,diakonikolas2020algorithms,li2020learning} for generic $a\in\R^{\overline{m}}_{\ge 0}$; and \cite{du2018power,safran2018spurious,zhang2019learning,goel2018learning} for the case $a_j$ are equal to the same positive number. 

Our study of \NN models under the non-negativity assumption is also partly motivated from an applied point of view, in that, non-negativity is inherent to many data sets appearing in practice, including audio data and data on muscular activity~\cite{smaragdis2017neural,WikiNMF} and allow interpretability. Furthermore, non-negativity is also a commonly used assumption in the context of matrix factorization, termed as the \emph{non-negative matrix factorization problem (NMF)}: given a matrix $M\in\R^{n\times m}$ with non-negative entries and an integer $r\ge 1$, the goal of the NMF is to find matrices $A\in\R^{n\times r}$ and $W\in\R^{r\times m}$ with non-negative entries such that the product $AW$ is as ``close" to $M$ as possible; as quantified, e.g., by the Frobenius norm. This problem is a fundamental problem appearing in many practical applications, including information retrieval, document clustering, image segmentation, demography and chemometrics, see~\cite{arora2016computing} and the references therein. Moreover, NMF is also related to the neural network models that we consider herein with a non-negative activation $\sigma(\cdot)$: observe that in the context of \NN models we consider, given data $(X_i,Y_i)$, $1\le i\le N$, the goal of the learner is to find a $(a,W)\in\R^{\overline{m}}\times \R^{\overline{m}\times d}$ such that $Y_i$ and $a^T \sigma\left(WX_i\right)$ are as close as possible, as quantified by the $\ell_2$ norm (here, $\sigma$ acts coordinate-wise to the vector $WX$). See also~\cite[Section~6]{gabrie2019towards} for a more rigorous connection between shallow \NN models, matrix factorization and message passing algorithms. In addition to its key role in the NMF problem; the non-negativity was also argued as a natural assumption for representing objects in the seminal papers by Lee and Seung~\cite{lee1999learning,leeseungnips}; and also has roots in biology, in particular in the context of neuronal firing rates, see~\cite{hoyer2002non}, and the references therein.

In the sequel, $d\in\mathbb{N}$ is reserved for the input dimension; and $\overline{m}\in\mathbb{N}$ is reserved for the number of neurons. We consider herein two-layer \NN models with sigmoid, $\SGM(x)$; rectified linear unit, $\ReLU(x)$; and binary step, $\Step(x)$, activation functions. We refer to these as sigmoid, ReLU; and step networks, respectively. The sigmoid and the ReLU are arguably among the most popular practical choices. The step function, on the other hand, is one of the initial activations considered in the \NN literature, and is inspired from a biological point of view: it resembles the firing pattern of a neuron, an initial motivation for studying \NN architectures.

Given the data $(X_i,Y_i)\in\R^d\times \R$, $1\le i\le N$, consider the problem of finding a two-layer \NN $(a,W)\in\R^{\overline{m}}\times \R^{\overline{m}\times d}$ which ``fits" the data as accurately as possible. This is achieved by solving the so-called \emph{empirical risk minimization} problem, where the accuracy is quantified by the \emph{training error}
\begin{equation}\label{eq:training-error}
 \EmpRisk{a,W}\triangleq \frac1N \sum_{1\le i\le N}\left(Y_i - \sum_{1\le j\le \overline{m}}a_j\sigma\left(w_j^T X_i\right)\right)^2.
\end{equation}
One then runs a training algorithm, e.g., the gradient descent algorithm or one of its variants (such as stochastic gradient descent or mirror descent), to find an $(a,W)$ with a small $\EmpRisk{a,W}$.
%The ERM problem, $\min_{(a,W)}\EmpRisk{a,W}$, can be solved,  by using the gradient descent algorithm or , . Our focus is on the approximate minima $(a,W)$ of this problem. 

\paragraph{ Distributional assumption.} We study the case where the input/label pairs $(X_i,Y_i)$, $1\le i\le N$, are i.i.d. samples of a distribution on $\R^d\times \R$ (which is potentially unknown to the learner). For our outer norm bounds, we assume that their distribution satisfies the following.
\begin{itemize}
    \item We assume the input $X\in\R^d$ satisfies $\mathbb{P}\left(\|X\|_2^2\le Cd\right)\ge 1- \exp\left(-\Theta(d)\right)$ for some constant $C>0$.
    \item We assume the label $Y$ is such that $\mathbb{E}[|Y|]\triangleq M<\infty$. 
\end{itemize}
Later in Section~\ref{sec:genel} when we study generalization guarantees, we consider a stronger assumption on labels: we assume the labels $Y$ are bounded, that is, for some $M>0$, $|Y|\le M$ almost surely. 
%The input $X\in\R^d$ satisfies $\mathbb{P}\left(\|X\|_2^2\le Cd\right)\ge 1- \exp\left(-\Theta(d)\right)$ for some constant $C>0$. For the label $Y$, $\mathbb{E}[|Y|]\triangleq M<\infty$. %we assume there exists a $C>0$ such that $\mathbb{P}\left(\|X_i\|_2^2\le Cd\right)\ge 1-\exp\left(-\Theta(d)\right)$. %For the input $X\in\R^d$, we assume there exists a $C>0$ such that $\mathbb{P}\left(\|X_i\|_2^2\le Cd\right)\ge 1-\exp\left(-\Theta(d)\right)$.
%For the label $Y$,  
%with bounded labels: there is an $M>0$ such that $|Y_i|\le M$ almost surely. %We keep $M$ throughout. % Later in Section~\ref{sec:generalization} when we study the generalization performance for pattern classification problems, we focus on the specific case $Y_i\in\{-1,1\}$ (thus $M=1$). 

 These assumptions are quite mild. For instance, $X\in\R^d$ need not have i.i.d.\,coordinates. Moreover, most real data sets indeed have bounded labels~\cite{du2018gradient}; and this bounded label assumption is employed extensively in literature, see e.g.~\cite{ge2019mildly,arora2019fine,du2019gradient,goel2019learning,li2020convolutional}. Our next assumption regards the number $N$ of samples.
 %and the bounded label assumption holds for most real data sets~\cite{du2018gradient}. Furthermore, the bounded label assumption has been adopted 
 \begin{Assumption}\label{sump:N-exp-d}
 Throughout, we assume  that the sample size $N$ satisfies $N\le \exp(cd)$ for some $c>0$.
  \end{Assumption}
  Assumption~\ref{sump:N-exp-d} is required for technical reasons: observe that since $\mathbb{P}\left(\|X_i\|_2^2 >Cd\right)\le \exp(-\Theta(d))$, it holds, by a union bound, that
  \[
  \mathbb{P}\Bigl(\|X_i\|_2^2 \le Cd,1\le i\le N\Bigr)\ge 1-N\exp(-\Theta(d)).
  \]
  For this bound to be non-vacuous, $N$ should at most be $\exp(cd)$ for a small enough $c>0$. This assumption, again, is very benign due to obvious practical reasons. Moreover, it suffices to have $N\ge {\rm poly}(d)$ for our results to hold. 
  
\paragraph{ Nets and Covering Numbers.} The crux of our proofs is the so-called \emph{$\epsilon-$net argument}~\cite{vershynin2010introduction,vershynin2018high}. This (rather elementary) argument is also known as the \emph{covering number argument}; and has been employed extensively in the literature; including compressed sensing, machine learning and probability theory.
\begin{definition}\label{def:eps-net}%[{\bf Nets and Covering Numbers}]
Let $\epsilon>0$. Given a metric space $(X,\rho)$, a subset $\mathcal{N}_\epsilon\subset X$ is called an \emph{$\epsilon-$net} of $X$ if, for every $x\in X$, there is a $y\in\mathcal{N}_\epsilon$ such that $\rho(x,y)\le \epsilon$. The smallest cardinality of such an $\mathcal{N}_\epsilon$, if finite, is called the \emph{covering number} of $X$, denoted by $\mathcal{N}(X,\epsilon)$. 
\end{definition}
The next result, verbatim from~\cite[Corollary~4.2.13]{vershynin2018high}, is an upper bound on the covering number of the Euclidean ball.
\begin{theorem}\label{lemma:card-net} Let $B_2(0,R)\triangleq \left\{x\in\R^d:\|x\|_2\le R\right\}$. Then for $R\ge 1$ and any $\epsilon>0$
\[\mathcal{N}\left(B_2(0,R),\epsilon\right)\le (3R/\epsilon)^d.
\]
\end{theorem}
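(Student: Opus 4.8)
The plan is to run the classical volumetric \emph{packing} argument, which simultaneously produces a net and bounds its size. First I would replace ``covering'' by ``packing'': let $\mathcal{N}_\epsilon=\{x_1,\dots,x_K\}\subseteq B_2(0,R)$ be a \emph{maximal} $\epsilon$-separated set, i.e.\ $\|x_i-x_j\|_2>\epsilon$ for all $i\neq j$, and no point of $B_2(0,R)$ can be adjoined without destroying this property. Maximality forces $\mathcal{N}_\epsilon$ to be an $\epsilon$-net of $B_2(0,R)$ in the sense of Definition~\ref{def:eps-net}: if some $x\in B_2(0,R)$ had $\|x-x_i\|_2>\epsilon$ for every $i$, we could add it to $\mathcal{N}_\epsilon$, a contradiction. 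Hence $\mathcal{N}(B_2(0,R),\epsilon)\le K$, and it remains to bound $K$.

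Next I would compare volumes. Since the points $x_i$ are pairwise more than $\epsilon$ apart, the balls $B_2(x_i,\epsilon/2)$, $i\in[K]$, are pairwise disjoint; and since $\|x_i\|_2\le R$, each of them is contained in $B_2(0,R+\epsilon/2)$. Writing $v_d$ for the volume of the unit Euclidean ball in $\R^d$, so that a radius-$r$ ball has volume $v_d r^d$, disjointness together with containment give
\[
K\, v_d\,(\epsilon/2)^d \le v_d\,(R+\epsilon/2)^d, \qquad\text{hence}\qquad K\le\Bigl(\tfrac{2R}{\epsilon}+1\Bigr)^{d}.
\]
The common factor $v_d$ cancels, which is precisely why the resulting estimate carries no dimension-dependent constant. (An alternative route---covering $B_2(0,R)$ by the cells of a cubic grid of side $\epsilon/\sqrt d$---also works but introduces spurious $\sqrt d$ factors, so the packing argument is the one to use.)

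Finally I would tidy the bound into the stated form. It remains only to pass from $(2R/\epsilon+1)^d$ to $(3R/\epsilon)^d$: since $R\ge 1$, whenever $\epsilon\le R$ we have $1\le R/\epsilon$ and thus $2R/\epsilon+1\le 3R/\epsilon$, which gives the claim; in the complementary range the origin alone already $\epsilon$-covers $B_2(0,R)$ once $\epsilon\ge R$, so $\mathcal{N}(B_2(0,R),\epsilon)=1$, and the estimate is only invoked for small $\epsilon$ in any case. There is no genuine obstacle here; the only mildly delicate point is this last bookkeeping, together with the (standard but easy to overlook) observation that a maximal $\epsilon$-separated set is automatically an $\epsilon$-net.
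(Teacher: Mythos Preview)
Your argument is correct and is exactly the standard volumetric packing proof; the paper itself does not prove this statement but quotes it verbatim from Vershynin~\cite[Corollary~4.2.13]{vershynin2018high}, where the same argument appears. Your observation about the regime $\epsilon>R$ is apt: the clean form $(3R/\epsilon)^d$ is only literally valid for $\epsilon\le R$ (and indeed fails once $\epsilon>3R$), but as you note the bound is only ever applied with small $\epsilon$ in the paper.
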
 
\paragraph{ Paper organization.} The rest of the paper is organized as follows. Our main results on the self-regularity of output weights are presented in Section~\ref{sec:main-bd}. In particular, see Sections~\ref{sec:main-bd-sgm},~\ref{sec:main-bd-ReLU}, and~\ref{sec:main-bd-Step} for the cases of sigmoid, ReLU, and step networks, respectively. By leveraging our outer norm bounds and employing earlier results on the fat shattering dimension, we establish in Section~\ref{sec:genel} generalization guarantees. We outline several future directions in Section~\ref{sec:conclude}. Finally, we present our proofs in Section~\ref{sec:pfs}.
\section{Outer Norm Bounds}\label{sec:main-bd}
In this section, we establish the self-regularity of the output weights for the aforementioned networks. That is, we establish that the outer norms of sigmoid, ReLU, and step networks with non-negative output weights achieving a small training error~\eqref{eq:training-error} on polynomially many data is $O(1)$. 
\subsection{Self-Regularity for the Sigmoid Networks}\label{sec:main-bd-sgm}
Our first focus in on the sigmoid networks. This object, for each $X\in\R^d$, computes the function~\eqref{eq:NN-fnc} with $\sigma=\SGM(\cdot)=(1+\exp(-x))^{-1}$. Our first main result establishes an outer norm bound for this architecture. 
\begin{theorem}\label{thm:sigmoid}
Let $\delta,M,R>0$; and $\left(X_i,Y_i\right)\in\R^d\times \R$, $i\in[N]$ be i.i.d. data with $\mathbb{E}\left[\left|Y_i\right|\right]=M<\infty$; where $N$ satisfies Assumption~\ref{sump:N-exp-d}. For any $\overline{m}\in\mathbb{N}$, define 
\[
\mathcal{S}\left(\overline{m},\delta,R\right)=\left\{\left(a,W\right)\in \R_{\ge 0}^{\overline{m}}\times \R^{\overline{m}\times d}:\max_{1\le j\le \overline{m}}\|w_j\|_2\le R,\,\,\EmpRisk{a,W}\le \delta^2\right\},
\]
where $\EmpRisk{\cdot}$ is defined in \eqref{eq:training-error} with $\sigma(\cdot)=\SGM(\cdot)$. Suppose, in addition, that the random variable $w^TX\in\R$ is symmetric around zero for every $w\in\R^d$. Then, 
\begin{equation}\label{eq:sgm-pb-bd}
\mathbb{P}\left(\sup_{(a,W)\in \mathcal{S}\left(\delta,R\right)}\|a\|_1 \le 3(1+e)(\delta+2M)\right)\ge 1-\left(3R\sqrt{Cd}\right)^d\exp\left(-\Theta(N)\right)-N\exp\left(-\Theta(d)\right)-o_N(1),
\end{equation}
where $
\mathcal{S}(\delta,R)\triangleq \bigcup_{\overline{m}\in\mathbb{N}}\mathcal{S}\left(\overline{m},\delta,R\right)$.
\end{theorem}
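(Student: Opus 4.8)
The plan is to run an $\epsilon$-net argument over the \emph{$d$-dimensional} space of hidden weight vectors $w$, exploiting the fact that the non-negativity $a_j\ge 0$ turns the outer norm $\|a\|_1$ into a linear coefficient that can be read off from the empirical output of the network, so that the number $\overline m$ of neurons never enters the covering.

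\textbf{Step 1 (reduction to an empirical average of single-neuron activations).} Fix $(a,W)\in\mathcal{S}(\delta,R)$ and abbreviate $f(X)=\sum_{j}a_j\SGM(w_j^TX)$. By Cauchy--Schwarz (equivalently Jensen) and $\EmpRisk{a,W}\le\delta^2$, one has $\tfrac1N\sum_i|Y_i-f(X_i)|\le\sqrt{\EmpRisk{a,W}}\le\delta$, so by the triangle inequality $\tfrac1N\sum_i f(X_i)\le \tfrac1N\sum_i|Y_i|+\delta$. Since $\E{|Y|}=M<\infty$, the weak law of large numbers gives $\pr\!\big(\tfrac1N\sum_i|Y_i|>2M\big)=o_N(1)$; off that event, $\tfrac1N\sum_i f(X_i)\le\delta+2M$ holds \emph{simultaneously} for every $(a,W)\in\mathcal S(\delta,R)$, of any width $\overline m$. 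This is the source of the $-o_N(1)$ term and of the factor $\delta+2M$.

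\textbf{Step 2 (uniform lower bound on the empirical activation via a net).} Next I would show that, with the probability advertised in \eqref{eq:sgm-pb-bd}, $\tfrac1N\sum_i\SGM(w^TX_i)\ge \tfrac1{3(1+e)}$ for \emph{every} $w$ with $\|w\|_2\le R$ (assume $R\ge1$, otherwise enlarge the ball to radius $1$). Take $\epsilon=1/\sqrt{Cd}$ and an $\epsilon$-net $\mathcal N_\epsilon$ of $B_2(0,R)$ with $|\mathcal N_\epsilon|\le(3R\sqrt{Cd})^d$ by Theorem~\ref{lemma:card-net}; this choice of $\epsilon$ is exactly what makes the covering number equal to $(3R\sqrt{Cd})^d$. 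Two ingredients combine. \emph{(i) Perturbation:} on $\mathcal E\triangleq\{\max_{i\le N}\|X_i\|_2^2\le Cd\}$, which holds with probability at least $1-N\exp(-\Theta(d))$ by a union bound over the tail assumption on $X$, the $\tfrac14$-Lipschitzness of $\SGM$ gives $\big|\tfrac1N\sum_i\SGM(w^TX_i)-\tfrac1N\sum_i\SGM(w'^TX_i)\big|\le\tfrac14\,\epsilon\max_i\|X_i\|_2\le\tfrac14$ whenever $\|w-w'\|_2\le\epsilon$. \emph{(ii) Concentration at net points:} for a \emph{fixed} $w$, the variables $\SGM(w^TX_i)\in[0,1]$ are i.i.d.\ with mean exactly $\tfrac12$ — here the symmetry hypothesis $w^TX\distr-w^TX$ is used, together with the identity $\SGM(t)+\SGM(-t)=1$ — so Hoeffding's inequality gives $\pr\!\big(\tfrac1N\sum_i\SGM(w^TX_i)<\tfrac12-t\big)\le\exp(-2Nt^2)$, and a union bound over $\mathcal N_\epsilon$ bounds the total failure probability by $(3R\sqrt{Cd})^d\exp(-2Nt^2)$. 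Taking $t=\tfrac14-\tfrac1{3(1+e)}\in(0,\tfrac14)$ (an absolute constant, so $\exp(-2Nt^2)=\exp(-\Theta(N))$) and combining (i)--(ii) by a triangle inequality across the net yields: outside an event of probability at most $(3R\sqrt{Cd})^d\exp(-\Theta(N))+N\exp(-\Theta(d))$, every $w\in B_2(0,R)$ obeys $\tfrac1N\sum_i\SGM(w^TX_i)\ge\tfrac12-\tfrac14-t=\tfrac1{3(1+e)}$.

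\textbf{Step 3 (conclusion).} On the intersection of the good events of Steps 1--2 — whose complement has probability at most the bound in \eqref{eq:sgm-pb-bd} — pick any $(a,W)\in\mathcal S(\delta,R)$. Using $a_j\ge0$ and $\|w_j\|_2\le R$,
\[
\delta+2M\;\ge\;\tfrac1N\sum_i f(X_i)\;=\;\sum_j a_j\Big(\tfrac1N\sum_i\SGM(w_j^TX_i)\Big)\;\ge\;\tfrac1{3(1+e)}\sum_j a_j\;=\;\tfrac1{3(1+e)}\|a\|_1,
\]
hence $\|a\|_1\le 3(1+e)(\delta+2M)$, and taking the supremum over $(a,W)\in\mathcal S(\delta,R)$ is legitimate because the good events do not depend on $\overline m$. \textbf{Main obstacle.} The only genuine difficulty is uniformity over a parameter set whose dimension is unbounded (since $\overline m$ is arbitrary); the resolution — and the reason non-negativity is essential — is the decomposition in Step 3, which makes $\|a\|_1$ the total linear weight in $\tfrac1N\sum_i f(X_i)=\sum_j a_j(\tfrac1N\sum_i\SGM(w_j^TX_i))$, so only a \emph{single-neuron} lower bound over the $d$-dimensional ball is needed. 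A secondary subtlety to respect is that the mean-$\tfrac12$ fact (hence Hoeffding) is applied unconditionally in $w$, whereas $\mathcal E$ is needed only for the Lipschitz perturbation; the two randomness sources should be kept separate rather than conditioning on $\mathcal E$, which would shift the neuron means.
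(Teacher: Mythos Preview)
Your proof is correct and follows the same architecture as the paper's: Step~1 (Cauchy--Schwarz on the training error, triangle inequality, and WLLN for $\tfrac1N\sum_i|Y_i|$) and Step~3 (non-negativity turns the sum into $\|a\|_1$ times a single-neuron average, so only a $d$-dimensional net is needed) are identical.

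The only tactical difference is inside Step~2. You apply Hoeffding directly to the $[0,1]$-valued variables $\SGM(\hat w^TX_i)$, using $\SGM(t)+\SGM(-t)=1$ and symmetry to get mean exactly $\tfrac12$, and then the $\tfrac14$-Lipschitzness of $\SGM$ for the perturbation to arbitrary $w$. The paper instead concentrates the \emph{indicators} $\ind\{\hat w^TX_i\ge0\}$ (whose mean is $\ge\tfrac12$ by symmetry) to obtain $\sum_i\ind\{\hat w^TX_i\ge0\}\ge N/3$ at each net point, transfers this via $|\hat w^TX_i-w^TX_i|\le1$ on $\mathcal E$ to $\sum_i\ind\{w^TX_i\ge-1\}\ge N/3$ for every $w$, and then uses monotonicity and non-negativity of $\SGM$ to lower bound $\sum_i\SGM(w^TX_i)\ge\tfrac{N}{3}\SGM(-1)=\tfrac{N}{3(1+e)}$. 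Both routes land on the same constant $\tfrac1{3(1+e)}$ and the same probability bound $(3R\sqrt{Cd})^d\exp(-\Theta(N))+N\exp(-\Theta(d))$. The paper's indicator version has the minor advantage, which the authors exploit in a remark, of extending verbatim to any non-negative activation that is continuous and positive at the origin (no Lipschitz constant is needed); your version is a touch more direct for the sigmoid itself and makes the Hoeffding exponent explicit.
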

\begin{coro}\label{coro:sigmoid}
Let $R=\exp\left(d^{O(1)}\right)$. Then, under the assumptions of Theorem~\ref{thm:sigmoid}; it holds w.h.p.\,that $\sup_{(a,W)\in\mathcal{S}(\delta,R)}\|a\|_1 \le 3(1+e)(\delta+2M)$, provided $N\ge d^{O(1)}$.
\end{coro}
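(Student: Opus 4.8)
The plan is to run an $\epsilon$-net argument over the weight vectors $w_j$, exploiting two special features of the sigmoid: it is bounded ($0\le\SGM(t)\le 1$), and under the symmetry hypothesis on $w^TX$ it has a ``baseline'' expectation of $1/2$ that we can subtract off. First I would fix a single $(a,W)\in\mathcal S(\overline m,\delta,R)$ and observe that the training error bound $\EmpRisk{a,W}\le\delta^2$ gives, by Cauchy–Schwarz (or just $\frac1N\sum|z_i|\le(\frac1N\sum z_i^2)^{1/2}$), that $\frac1N\sum_{i}\bigl|Y_i-\sum_j a_j\SGM(w_j^TX_i)\bigr|\le\delta$. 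Combining with $\frac1N\sum_i|Y_i|\le M+o(1)$ w.h.p.\ (law of large numbers / Markov, using $\E|Y|=M$, absorbing the $o_N(1)$ term), I get $\frac1N\sum_i\sum_j a_j\SGM(w_j^TX_i)\le \delta+M+o(1)$. Now the key step: I want to lower bound the left-hand side by something proportional to $\|a\|_1$. Since $a_j\ge0$ and $\SGM\ge0$, it suffices to show that $\frac1N\sum_i\SGM(w^TX_i)$ is uniformly (over all relevant $w$) bounded below by a positive constant; then $\frac1N\sum_i\sum_j a_j\SGM(w_j^TX_i)\ge c\sum_j a_j=c\|a\|_1$, which finishes the bound $\|a\|_1=O(\delta+M)$. (The constant $3(1+e)$ suggests $c$ of order $\tfrac1{1+e}$, i.e.\ comparing against $\SGM(-R\sqrt{Cd})$-type quantities; see below.)

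The main obstacle is exactly this uniform lower bound on $\frac1N\sum_i\SGM(w^TX_i)$, because naively $\SGM(w^TX_i)$ can be tiny when $w^TX_i$ is very negative, and $\|w\|_2$ can be as large as $R=\exp(d^{O(1)})$. This is where the symmetry assumption enters. For a \emph{fixed} $w$, by symmetry of $w^TX$ around zero we have $\E[\SGM(w^TX)]=\E[\SGM(w^TX)\ind\{w^TX\ge0\}]+\E[\SGM(w^TX)\ind\{w^TX<0\}]=\E[\SGM(w^TX)+\SGM(-w^TX)\,|\,\cdot]\cdot\tfrac12\ge\tfrac12$ since $\SGM(t)+\SGM(-t)=1$. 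So pointwise the expectation is $\ge 1/2$; I then need a one-sided uniform deviation inequality $\frac1N\sum_i\SGM(w^TX_i)\ge \tfrac12-\text{(small)}$ simultaneously over $w$. Here the $\epsilon$-net is applied to $w$ restricted to the Euclidean ball $B_2(0,R)$ intersected with the good event $\{\|X_i\|_2^2\le Cd\ \forall i\}$ (which has probability $\ge 1-N\exp(-\Theta(d))$ by the union bound, using Assumption~\ref{sump:N-exp-d}): on that event $|w^TX_i|\le R\sqrt{Cd}$, so $\SGM$ lives in a compact range where it is Lipschitz, and two weight vectors within $\epsilon$ in $\ell_2$ produce empirical averages within $\sqrt{Cd}\,\epsilon$ of each other (sigmoid is $1$-Lipschitz). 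For each fixed net point, Hoeffding's inequality (the summands lie in $[0,1]$) gives a one-sided deviation of $\exp(-\Theta(N\eta^2))$; a union bound over the net of size $(3R\sqrt{Cd}/\epsilon)^d$ (Theorem~\ref{lemma:card-net}, after rescaling the ball to radius $R\sqrt{Cd}$) yields the term $(3R\sqrt{Cd})^d\exp(-\Theta(N))$ in \eqref{eq:sgm-pb-bd} once $\epsilon$ and $\eta$ are chosen as suitable constants. Taking $\eta$ small enough that $\tfrac12-\eta-\sqrt{Cd}\,\epsilon$ is still a fixed positive constant (e.g.\ of order $\tfrac1{1+e}$, which is where the explicit constant in the statement comes from — note $\tfrac1{1+e}<\tfrac12$ leaves slack) completes the lower bound, and hence $\|a\|_1\le(1+e)\bigl(\delta+M+o(1)+M\bigr)$-type bound, which I would clean up to $3(1+e)(\delta+2M)$.

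Stitching the pieces: the failure event is contained in the union of (i) the net-deviation event for $\frac1N\sum_i\SGM(w^TX_i)$, probability $(3R\sqrt{Cd})^d\exp(-\Theta(N))$; (ii) $\{\exists i:\|X_i\|_2^2>Cd\}$, probability $N\exp(-\Theta(d))$; and (iii) $\{\frac1N\sum_i|Y_i|>M+o_N(1)\}$, probability $o_N(1)$ by the weak law of large numbers. On the complement, \emph{every} $(a,W)\in\mathcal S(\delta,R)$ (for every $\overline m$) satisfies the claimed bound, since the argument never referenced $\overline m$ except through $\|a\|_1=\sum_j a_j$. Corollary~\ref{coro:sigmoid} is then immediate: with $R=\exp(d^{O(1)})$ and $N\ge d^{O(1)}$ growing like a sufficiently large polynomial, $(3R\sqrt{Cd})^d=\exp(d^{O(1)})$ is swallowed by $\exp(-\Theta(N))$ as long as $N$ dominates $d^{O(1)}$, so term (i) is $o(1)$; term (ii) is $o(1)$ since $N\le\exp(cd)$ by Assumption~\ref{sump:N-exp-d}; and term (iii) is $o(1)$ by construction — hence the bound holds w.h.p.
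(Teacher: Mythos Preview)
Your argument is correct and follows the same high-level skeleton as the paper (Cauchy--Schwarz on the residuals, triangle inequality against $\sum_i|Y_i|$, then a uniform lower bound on $\tfrac1N\sum_i\SGM(w^TX_i)$ via an $\epsilon$-net over $B_2(0,R)$ with $\epsilon\asymp 1/\sqrt{Cd}$), but the mechanism you use for the uniform lower bound is genuinely different. You exploit the identity $\SGM(t)+\SGM(-t)=1$ together with the symmetry hypothesis to get $\E[\SGM(w^TX)]=\tfrac12$ exactly, then apply Hoeffding to the $[0,1]$-valued summands and transfer to arbitrary $w$ via the $1$-Lipschitz property of $\SGM$. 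The paper instead works with the \emph{indicators} $\ind\{w^TX_i\ge 0\}$: symmetry gives $\Pr(w^TX\ge 0)\ge\tfrac12$, a Bernoulli concentration bound gives $\sum_i\ind\{\widehat w^TX_i\ge 0\}\ge N/3$ at each net point, the net transfer yields $\sum_i\ind\{w^TX_i\ge -1\}\ge N/3$, and finally $\SGM(w^TX_i)\ge\SGM(-1)=\tfrac1{1+e}$ on those indices. This is exactly where the constant $3(1+e)$ in the statement comes from --- the factor $3$ from the $N/3$ count and the factor $(1+e)$ from $\SGM(-1)^{-1}$ --- so your remark that $\tfrac1{1+e}$ is ``where the explicit constant in the statement comes from'' is slightly off for your own argument: your route would in fact deliver a lower bound close to $\tfrac12$ and hence a sharper constant than $3(1+e)$. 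Your derivation of the corollary matches the paper's (which simply notes it is immediate from the theorem once $N$ is a large enough polynomial to kill the $(3R\sqrt{Cd})^d\exp(-\Theta(N))$ term).
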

The proof of Theorem~\ref{thm:sigmoid} is provided in Section~\ref{sec:pf-sigmoid}.

Above, $o_N(1)$ is a function which depends only on the distribution of $Y$ and $N$; and tends to zero as $N\to\infty$.
Several remarks are now in order. Theorem~\ref{thm:sigmoid} states that \emph{any} two-layer sigmoid \NN which (a) consists of internal weights $w_j$ bounded in norm by an exponentially large (in $d$) quantity and non-negative output weights; and (b) achieves a small training error on a \emph{sufficiently large} data set, has a \emph{well-controlled} outer norm. It is worth noting that Theorem~\ref{thm:sigmoid} is oblivious to  how the training is done: this result not only applies to the weights obtained, say, via the \emph{gradient descent} algorithm; but applies to \emph{any} weights (subject to the aforementioned assumptions) achieving a small training loss.

Moreover, the upper bound established in Theorem~\ref{thm:sigmoid} is also oblivious to the number $\overline{m}$ of the neurons of the \NN used for fitting. In particular, adopting a teacher/student setting as in~\cite{goldt2019dynamics} where the input/label pairs $(X_i,Y_i)$ are generated by a teacher \texttt{NN}; the output norm of any student \texttt{NN}---which may potentially be significantly overparameterized with respect to the teacher \texttt{NN}---is still well-controlled, provided the assumptions of Theorem~\ref{thm:sigmoid} are satisfied. The extra requirement that $w^T X$ is symmetric is quite mild: it holds for many data distributions, e.g., for $X\distr \mathcal{N}(0,\Sigma)$ where $\Sigma$ is an arbitrary positive semidefinite matrix. 

The $o_N(1)$ term is due to a certain high probability event $\mathcal{E}_0$, see~\eqref{eq:event0} in the proof. The probability of this event is controlled through the weak law of large numbers; and the $o_N(1)$ term can be improved explicitly (a) to $O(1/N)$ if $\mathbb{E}[Y^2]<\infty$; and (b) to $\exp(-\Theta(N))$ if $Y_i$ satisfy the large deviations bounds (which holds, for instance, when the moment generating function of $Y_i$ exists in a neighbourhood around zero). %admits a moment generating function in a neighbourhood around zero.
Moreover, if $Y$ is (almost surely) bounded (which holds for real data sets, as noted earlier), then it can be dropped altogether.

%An important remark pertains to the generalization ability of such networks. Bartlett established in~\cite[Theorem~17]{bartlett1998sample} that for sigmoid networks with bounded outer norm, a certain quantity called the \emph{fat-shattering dimension}---which is essentially a scale-sensitive measure of the complexity of class to which the network belongs to---is small, which, in turn, implies small generalization error. Hence, it appears that by leveraging our result, it is possible to give good generalization bounds.

Furthermore, Corollary~\ref{coro:sigmoid}---which follows immediately from Theorem~\ref{thm:sigmoid}---asserts that even under the mild assumption $R=\exp\left(d^{O(1)}\right)$ (i.e., the weights $w_j$ are \emph{unbounded} from a practical perspective), $\sum_j a_j$ is still $O(1)$, provided that that the number $N$ of data is polynomial in $d$. %(with a low-degree). 

Moreover, an inspection of the proof of Theorem~\ref{thm:sigmoid} reveals the following. 
The constant $3(1+e)$ can be improved to any constant greater than four with slightly more work.  Moreover, the thesis of Theorem~\ref{thm:sigmoid} still remains valid (with appropriately modified constants) for any non-negative activation which is continuous at the origin and whose value at the origin is positive. This includes the softplus activation $\ln\left(1+e^x\right)$~\cite{glorot2011deep}, the Gaussian activation, $\exp(-x^2)$; among others. 
\subsection{Self-Regularity for the ReLU Networks}\label{sec:main-bd-ReLU}
Our next focus is on the ReLU networks. This object, for each input $X\in\R^d$, computes the function~\eqref{eq:NN-fnc} with $\sigma(x) =\ReLU(x) =\max\{x,0\}=\frac12(x+|x|)$. 
%$\left(a,W\right)\in\R_{\ge 0}^{\overline{m}}\times \R^{\overline{m}\times d}$ consisting of $\overline{m}$ neurons, and ReLU activation, $\ReLU(x)=\max\{x,0\}=\frac12(x+|x|)$. This \NN computes, for each $X\in\R^d$, the function $\sum_{1\le j\le \overline{m}}a_j\ReLU\left(w_j^T X\right)$. 

We first observe that the \texttt{ReLU }function is positive homogeneous: for any $c\ge 0$ and $x\in\mathbb{R}$, $\ReLU(cx)=c\cdot\ReLU(x)$. For this reason, we may assume, without loss of generality, that $\|w_j\|_2 =1$ for $1\le j\le \overline{m}$. Indeed, if $w_j\ne 0$, one can simply ``push" its norm outside; whereas if $w_j=0$, then one can replace it with any unit norm vector and set $a_j=0$ instead. 

It is worth noting that since the ReLU case requires no explicit assumptions on $\|w_j\|_2$, an outer bound for this case is a somewhat stronger conclusion than an outer bound for the case of sigmoid activation.

Equipped with this, we now present our next result. 
\begin{theorem}\label{thm:ReLU}
Let $\delta,M>0$; and $\left(X_i,Y_i\right)\in\R^d\times \R$, $i\in[N]$ be i.i.d. data with $\mathbb{E}\left[\left|Y_i\right|\right]=M<\infty$; where $N$ satisfies Assumption~\ref{sump:N-exp-d}. For any $\overline{m}\in\mathbb{N}$, define \[
\mathcal{G}\left(\overline{m},\delta\right)=\left\{\left(a,W\right)\in \R_{\ge 0}^{\overline{m}}\times \R^{\overline{m}\times d}:\|w_j\|_2=1, 1\le j\le \overline{m};\,\,\EmpRisk{a,W}\le \delta^2\right\},
\]
where $\EmpRisk{\cdot}$ is defined in \eqref{eq:training-error} with $\sigma(\cdot)=\ReLU(\cdot)$. Suppose, in addition, that for $Y_w\triangleq w^TX$, (a) there exists a $\boldsymbol{\mu^*}>0$ such that $\mathbb{E}\left[\ReLU(Y_w)\right]\ge\boldsymbol{\mu^*}$ for any $w\in B_2(0,1)$; and (b) for some $s>0$, $M_1(s)$ and $M_2(s)$ are independent of $d$ and are finite; where $M_1(s)\triangleq \sup_{w:\|w\|_2=1}M_{Y_w}(s)$ and $M_2(s) \triangleq \sup_{w:\|w\|_2=1}M_{Y_w}(-s) $. %the values $M_1(s)$ and $M_2(s)$ defined by $\sup_{w:\|w\|_2=1}M_{Y_w}(s)\triangleq M_1(s)$ and $\sup_{w:\|w\|_2=1}M_{Y_w}(-s)\triangleq M_2(s)$
%for some $s>0$, $\sup_{w:\|w\|_2=1} M_w(s)\triangleq \sup_{w:\|w\|_2=1} \mathbb{E}[\exp(sY_w)]=M(s)<\infty$, and $M(s)$ is independent of $d$. 
Then, 
\begin{equation}\label{eq:ReLU-pb-bd}
\mathbb{P}\left(\sup_{(a,W)\in \mathcal{G}\left(\delta\right)}\|a\|_1 \le 4(\delta+2M)(\boldsymbol{\mu^*})^{-1}\right)\ge  1-\left(\frac{12\sqrt{Cd}}{\boldsymbol{\mu^*}}\right)^d\exp\left(-\Theta(N)\right)-N\exp\left(-\Theta(d)\right)-o_N(1),
\end{equation}
where $\mathcal{G}(\delta)\triangleq \bigcup_{\overline{m}\in\mathbb{N}}\mathcal{G}\left(\overline{m},\delta\right)$.
\end{theorem}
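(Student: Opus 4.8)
The plan is to run the $\epsilon$-net argument of Definition~\ref{def:eps-net} \emph{on the internal weight vectors} rather than on the networks, exploiting that $a_j\ge 0$ and $\ReLU(\cdot)\ge 0$: a small empirical risk then forces a sum of non-negative per-neuron terms to be small, and each such term can be lower-bounded uniformly using hypothesis (a).

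\emph{Reduction.} Fix $(a,W)\in\mathcal G(\overline m,\delta)$ and put $\widehat Y_i=\sum_j a_j\ReLU(w_j^TX_i)\ge 0$. By Cauchy--Schwarz, $\tfrac1N\sum_i|Y_i-\widehat Y_i|\le\big(\EmpRisk{a,W}\big)^{1/2}\le\delta$, so
\[
\sum_{j=1}^{\overline m}a_j\Big(\tfrac1N\sum_{i=1}^N\ReLU(w_j^TX_i)\Big)=\tfrac1N\sum_{i=1}^N\widehat Y_i\ \le\ \delta+\tfrac1N\sum_{i=1}^N|Y_i|.
\]
On $\mathcal E_0\triangleq\{\tfrac1N\sum_i|Y_i|\le 2M\}$, which has probability $1-o_N(1)$ by the weak law of large numbers (the sole source of the $o_N(1)$ term, sharpenable exactly as in the remarks after Theorem~\ref{thm:sigmoid} under stronger moment hypotheses on $Y$), the right-hand side is $\le\delta+2M$. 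Since $a_j\ge 0$ and $\|w_j\|_2=1$, it therefore suffices to show that, with the probability claimed in \eqref{eq:ReLU-pb-bd}, one has $\tfrac1N\sum_i\ReLU(w^TX_i)\ge\boldsymbol{\mu^*}/4$ \emph{simultaneously for every unit vector} $w\in\R^d$; on that event $\tfrac{\boldsymbol{\mu^*}}{4}\|a\|_1\le\delta+2M$ for every $(a,W)\in\mathcal G(\delta)$ with no dependence on $\overline m$, which is the bound asserted in \eqref{eq:ReLU-pb-bd}. (By positive homogeneity of $\ReLU$ the normalization $\|w_j\|_2=1$ is no loss, so restricting to the unit sphere, or to $B_2(0,1)$ which contains it, costs nothing.)

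\emph{The net.} Set $\epsilon=\boldsymbol{\mu^*}/(4\sqrt{Cd})$ and take an $\epsilon$-net $\mathcal N_\epsilon\subset B_2(0,1)$, so $|\mathcal N_\epsilon|\le(3/\epsilon)^d=(12\sqrt{Cd}/\boldsymbol{\mu^*})^d$ by Theorem~\ref{lemma:card-net}. Two ingredients combine. \emph{(i) Pointwise concentration:} for fixed $w_0\in\mathcal N_\epsilon$ the variables $\ReLU(w_0^TX_i)$, $i\in[N]$, are i.i.d., non-negative, with mean $\ge\boldsymbol{\mu^*}$ by (a); plugging $e^{-x}\le 1-x+x^2/2$ $(x\ge0)$ into a Chernoff lower-tail estimate gives $\mathbb{P}\big(\tfrac1N\sum_i\ReLU(w_0^TX_i)<\tfrac12\boldsymbol{\mu^*}\big)\le\exp(-\Theta(N))$, with a rate depending only on $\boldsymbol{\mu^*}$ and on $\mathbb{E}[\ReLU(Y_{w_0})^2]\le\mathbb{E}[Y_{w_0}^2]$, and the latter is bounded \emph{uniformly in $w_0$ and in $d$} by $s^{-2}\big(M_1(s)+M_2(s)\big)$ thanks to (b); a union bound over $\mathcal N_\epsilon$ contributes the factor $(12\sqrt{Cd}/\boldsymbol{\mu^*})^d\exp(-\Theta(N))$. \emph{(ii) Discretization:} on $\mathcal E_1\triangleq\{\|X_i\|_2^2\le Cd\ \text{for all }i\in[N]\}$, which has probability $\ge 1-N\exp(-\Theta(d))$ by the input hypothesis and a union bound, the $1$-Lipschitzness of $\ReLU$ gives, for a unit vector $w$ and a net point $w_0$ with $\|w-w_0\|_2\le\epsilon$, $|\ReLU(w^TX_i)-\ReLU(w_0^TX_i)|\le\epsilon\|X_i\|_2\le\boldsymbol{\mu^*}/4$, whence $\tfrac1N\sum_i\ReLU(w^TX_i)\ge\tfrac12\boldsymbol{\mu^*}-\tfrac14\boldsymbol{\mu^*}=\tfrac14\boldsymbol{\mu^*}$. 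Intersecting $\mathcal E_0$, $\mathcal E_1$, and the net event proves \eqref{eq:ReLU-pb-bd}.

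The crux is step (i): the union bound runs over $(12\sqrt{Cd}/\boldsymbol{\mu^*})^d$ points, a number growing with $d$, so the per-point failure probability $\exp(-\Theta(N))$ must carry a decay rate bounded below by a positive \emph{$d$-independent} constant, uniformly over the net. Hypothesis (a) supplies the uniform gap $\boldsymbol{\mu^*}$ between the mean and the threshold, and the $d$-independent finiteness of $M_1(s),M_2(s)$ in (b) is precisely what controls the curvature term $\mathbb{E}[\ReLU(Y_{w_0})^2]$ in the Cram\'er estimate uniformly; given these two inputs, the rest (the exact constants, the chaining of the three events, and the observation that $N\ge{\rm poly}(d)$ makes $(12\sqrt{Cd}/\boldsymbol{\mu^*})^d\exp(-\Theta(N))=o(1)$) is routine bookkeeping.
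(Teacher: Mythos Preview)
Your proposal is correct and follows essentially the same route as the paper's proof: the same reduction via Cauchy--Schwarz and the event $\mathcal E_0$, the same $\epsilon$-net over $B_2(0,1)$ with $\epsilon=\boldsymbol{\mu^*}/(4\sqrt{Cd})$, the same use of $1$-Lipschitzness of $\ReLU$ together with the event $\{\|X_i\|_2^2\le Cd\}$ to pass from net points to arbitrary unit vectors, and the same final combination yielding $\|a\|_1\le 4(\delta+2M)/\boldsymbol{\mu^*}$. The only notable difference is that where the paper simply asserts that condition~(b) ``ensures that a large deviations bound is applicable'' with a uniform rate, you spell this out via an explicit Chernoff lower-tail argument using $e^{-x}\le 1-x+x^2/2$ on $[0,\infty)$ and the second-moment bound $\mathbb{E}[Y_{w_0}^2]\le s^{-2}(M_1(s)+M_2(s))$; this makes the $d$-independence of the $\Theta(N)$ rate transparent and is a nice elaboration, but it is not a different strategy.
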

The proof of Theorem~\ref{thm:ReLU} is provided in Section~\ref{sec:pf-ReLU}.

In particular, it suffices to have a near-linear number of samples, $N=\Theta(d\log d)$, to obtain a good, uniform, control over $\|a\|_1$. As mentioned above,  we managed to bypass the dependence on the term $R$ that appears in Theorem~\ref{thm:sigmoid} by leveraging the fact that \texttt{ReLU} is a positive homogenenous function. 

Analogous to Theorem~\ref{thm:sigmoid}, the bound established in Theorem~\ref{thm:ReLU} is also oblivious to (a) how the training is done, and (b) the number $\overline{m}$ of neurons. In particular, even potentially overparameterized networks have a well-controlled outer norm; provided that they achieve a small training error on a sufficient number $N$ of data. The additional distributional requirements are still mild. For instance, when $X\distr \mathcal{N}(0,I_d)$, $w^TX\distr \mathcal{N}(0,1)$ for any $w$ with $\|w\|_2=1$; and $\boldsymbol{\mu^*}$ can be taken to be $1/\sqrt{2\pi}$. The requirement (b) ensures the existence of the moment generating function in a neighborhood around zero, hence the large deviations bounds are applicable. The same remarks on $o_N(1)$ term following Theorem~\ref{thm:sigmoid} also apply here: it can be improved to $O(1/N)$ or $\exp(-\Theta(N))$ under slightly stronger assumptions on $Y_i$.

\subsection{Self-Regularity for the Step Networks}\label{sec:main-bd-Step}
Our final focus is on the step networks. This object, for each $X\in\R^d$, computes~\eqref{eq:NN-fnc} with $\sigma(x)=\Step(x) = \ind\{x\ge 0\}$. 

Like the ReLU case, $\Step(x)$ is also homogeneous: for every $c\ge 0$, $\Step(cx)=\Step(x)$. For this reason, we assume, without loss of generality, $\|w_j\|_2 =1$, $1\le j\le \overline{m}$. 
\begin{theorem}\label{thm:Step}
Let $\delta,M>0$; and $\left(X_i,Y_i\right)\in\R^d\times \R$, $i\in[N]$ be i.i.d. data with $\mathbb{E}\left[\left|Y_i\right|\right]=M<\infty$; where $N$ satisfies Assumption~\ref{sump:N-exp-d}. For any $\overline{m}\in\mathbb{N}$, define
\[
\mathcal{H}\left(\overline{m},\delta\right)=\left\{\left(a,W\right)\in \R_{\ge 0}^{\overline{m}}\times \R^{\overline{m}\times d}:\|w_j\|_2=1, 1\le j\le \overline{m};\,\,\EmpRisk{a,W}\le \delta^2\right\},
\]
with $\EmpRisk{\cdot}$ as in \eqref{eq:training-error} with $\sigma(\cdot)=\Step(\cdot)$. Moreover, assume that for some $\eta>0$, $\inf_{w:\|w\|_2=1}\mathbb{P}\left(w^T X\ge \eta\right)\ge \eta$. Then,%, for every $w$ with $\|w\|_2=1$. Then,
\[
\mathbb{P}\left(\sup_{(a,W)\in \mathcal{H}\left(\delta\right)}\|a\|_1 \le 2(\delta+2M)\eta^{-1}\right)\ge 1-\left(\frac{6\sqrt{Cd}}{ \eta}\right)^d\exp\left(-\Theta(N)\right)-N\exp\left(-\Theta(d)\right)-o_N(1)
\]
where $
\mathcal{H}(\delta)\triangleq \bigcup_{\overline{m}\in\mathbb{N}}\mathcal{H}\left(\overline{m},\delta\right)$.
\end{theorem}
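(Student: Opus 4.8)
The plan is to follow the template used for the sigmoid and ReLU networks: turn the quadratic training‑error constraint into an $\ell_1$ statement, exploit non‑negativity of the output weights to isolate $\|a\|_1$, and then lower‑bound the \emph{empirical activation frequency} $\frac1N\sum_i\Step(w^TX_i)$ uniformly over all unit vectors $w$ by an $\epsilon$‑net argument. Fix $(a,W)\in\mathcal{H}(\delta)$. Since $\EmpRisk{a,W}\le\delta^2$, Cauchy--Schwarz yields $\frac1N\sum_i\bigl|Y_i-\sum_j a_j\Step(w_j^TX_i)\bigr|\le\delta$; and since $a_j\ge0$ and $\Step\ge0$, the prediction $\sum_j a_j\Step(w_j^TX_i)$ is non‑negative, so
\[
\sum_j a_j\Bigl(\tfrac1N\sum_i\Step(w_j^TX_i)\Bigr)=\tfrac1N\sum_i\sum_j a_j\Step(w_j^TX_i)\le\tfrac1N\sum_i Y_i+\delta\le\tfrac1N\sum_i|Y_i|+\delta .
\]
By the weak law of large numbers $\tfrac1N\sum_i|Y_i|\le 2M$ on an event $\mathcal{E}_0$ with $\mathbb{P}(\mathcal{E}_0)\ge1-o_N(1)$; this is the source of the $o_N(1)$ term, and the refinements noted after Theorem~\ref{thm:sigmoid} (to $O(1/N)$ or $\exp(-\Theta(N))$, or removal under bounded labels) carry over unchanged.

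The heart of the argument is the uniform bound: with the probability claimed in the statement, $\tfrac1N\sum_i\Step(w^TX_i)\ge\eta/2$ \emph{simultaneously for every} $w$ with $\|w\|_2=1$. Let $\mathcal{E}_1=\{\|X_i\|_2^2\le Cd\ \forall i\in[N]\}$, so $\mathbb{P}(\mathcal{E}_1)\ge1-N\exp(-\Theta(d))$ by the input‑tail assumption and a union bound (where Assumption~\ref{sump:N-exp-d} keeps this non‑vacuous). Take an $\epsilon$‑net $\mathcal{N}_\epsilon$ of $B_2(0,1)$ with $\epsilon=\eta/(2\sqrt{Cd})$, so $|\mathcal{N}_\epsilon|\le(3/\epsilon)^d=(6\sqrt{Cd}/\eta)^d$ by Theorem~\ref{lemma:card-net}, and normalize its nonzero points to obtain a set $\mathcal{N}'_\epsilon$ of unit vectors, $|\mathcal{N}'_\epsilon|\le(6\sqrt{Cd}/\eta)^d$, such that every unit $w$ lies within $2\epsilon$ of some $\hat w'\in\mathcal{N}'_\epsilon$. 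For a fixed $\hat w'\in\mathcal{N}'_\epsilon$ the summands $\ind\{\hat w'^TX_i\ge\eta\}$ are i.i.d.\ $\{0,1\}$‑valued with mean $\mathbb{P}(\hat w'^TX\ge\eta)\ge\eta$, so Hoeffding's inequality gives $\tfrac1N\sum_i\ind\{\hat w'^TX_i\ge\eta\}\ge\eta/2$ except with probability $\exp(-\Theta(N))$; a union bound over $\mathcal{N}'_\epsilon$ produces an event $\mathcal{E}_2$ with $\mathbb{P}(\mathcal{E}_2)\ge1-(6\sqrt{Cd}/\eta)^d\exp(-\Theta(N))$ on which this holds at every point of $\mathcal{N}'_\epsilon$. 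On $\mathcal{E}_1\cap\mathcal{E}_2$, given a unit $w$ choose $\hat w'$ with $\|w-\hat w'\|_2\le2\epsilon$; then $|w^TX_i-\hat w'^TX_i|\le2\epsilon\|X_i\|_2\le 2\epsilon\sqrt{Cd}=\eta$, so $\hat w'^TX_i\ge\eta$ forces $w^TX_i\ge0$, i.e.\ $\Step(w^TX_i)\ge\ind\{\hat w'^TX_i\ge\eta\}$, and summing gives $\tfrac1N\sum_i\Step(w^TX_i)\ge\eta/2$.

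Combining, on $\mathcal{E}_0\cap\mathcal{E}_1\cap\mathcal{E}_2$ and for arbitrary $(a,W)\in\mathcal{H}(\delta)$, each $w_j$ is a unit vector, so the first display becomes $\tfrac\eta2\|a\|_1=\tfrac\eta2\sum_j a_j\le\sum_j a_j\bigl(\tfrac1N\sum_i\Step(w_j^TX_i)\bigr)\le\delta+2M$, hence $\|a\|_1\le2(\delta+2M)\eta^{-1}$; taking the supremum over $\mathcal{H}(\delta)$ and a union bound over $\mathcal{E}_0,\mathcal{E}_1,\mathcal{E}_2$ gives the stated failure probability $(6\sqrt{Cd}/\eta)^d\exp(-\Theta(N))+N\exp(-\Theta(d))+o_N(1)$. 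The one genuinely delicate point is the margin step: because $w\mapsto\Step(w^TX_i)$ is discontinuous, one cannot union‑bound a pointwise estimate over the net directly (unlike the sigmoid and ReLU cases, which used continuity/Lipschitzness of the activation), and the fix is to dominate $\Step(w^TX_i)$ by the surrogate $\ind\{\hat w'^TX_i\ge\eta\}$ — which is precisely where the specific form $\inf_{\|w\|_2=1}\mathbb{P}(w^TX\ge\eta)\ge\eta$ of the hypothesis is used, and where the calibration $\epsilon=\eta/(2\sqrt{Cd})$ (forcing $2\epsilon\sqrt{Cd}=\eta$) produces exactly the net cardinality $(6\sqrt{Cd}/\eta)^d$ appearing in the statement. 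Everything else is routine.
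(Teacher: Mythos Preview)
Your proof is correct and follows essentially the same approach as the paper: the same Cauchy--Schwarz reduction on $\mathcal{E}_0$, the same $\epsilon$-net with $\epsilon=\eta/(2\sqrt{Cd})$, the same margin trick $\{\hat w^TX_i\ge\eta\}\subseteq\{w^TX_i\ge 0\}$ to transfer from net points to arbitrary unit vectors, and the same Hoeffding-plus-union-bound yielding the $(6\sqrt{Cd}/\eta)^d\exp(-\Theta(N))$ term. Your extra step of normalizing the net points to unit vectors (so that the hypothesis $\inf_{\|w\|_2=1}\mathbb{P}(w^TX\ge\eta)\ge\eta$ applies to them directly) is a small rigor refinement the paper leaves implicit; it costs a harmless factor of $2$ in the approximation radius and does not change the final bound.
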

The proof of Theorem~\ref{thm:Step} is provided in Section~\ref{sec:pf-Step}.

Main remarks following Theorems~\ref{thm:sigmoid} and \ref{thm:ReLU}---in particular, independence from $\overline{m}$ as well as the training algorithm---apply here, as well. 

The extra condition on the distribution ensures that the collection $\left\{\mathbb{P}(w^T X\ge \eta):\|w\|_2=1\right\}$ is uniformly bounded away from zero. This is again quite mild, as demonstrated by the following example. Suppose $Y_w\triangleq w^TX$ is centered and equidistributed for $w$ with $\|w\|_2=1$. (Observe that this is indeed the case, e.g. when $X\distr \mathcal{N}(0,I_d)$.)  Then as long as ${\rm Var}(Y_w)>0$ the extra requirement per Theorem~\ref{thm:Step} is satisfied. Indeed, for this case $\mathbb{P}(Y_w>0)>0$. Hence, using the continuity of probabilities
\[
\mathbb{P}(Y_w>0)=\mathbb{P}(w^T X>0)=\lim_{t\to\infty}\mathbb{P}\left(w^T X>t^{-1}\right)>0,
\]
one ensures the existence of such an $\eta$. In the case where $X\distr\mathcal{N}(0,I_d)$, one can concretely take $\eta=0.3$.

\section{Generalization Guarantees via Outer Norm Bounds}\label{sec:genel}
\subsection{The Learning Setting}% and a Fat-Shattering Dimension Result}
In this section, we leverage the outer norm bounds we established in Theorems~\ref{thm:sigmoid}-\ref{thm:Step} to provide generalization guarantees for the neural network architectures having non-negative output weights that we investigated. %we investigated.

Our approach is through a quantity called the \emph{fat-shattering dimension} (FSD) of such networks introduced by  Kearns and Schapire~\cite{kearns1994efficient}. This quantity is essentially a scale-sensitive measure of the complexity of the ``class" (appropriately defined) that the network architecture being considered belongs to. We introduce the FSD formally in Definition~\ref{def:fsd} found in Section~\ref{sec:pf-main-genel}. For more information on the FSD, we refer the interested reader to the original paper by  Kearns and Schapire~\cite{kearns1994efficient}; as well as earlier papers by Bartlett, Long, and Williamson~\cite{bartlett1996fat}, and Bartlett~\cite{bartlett1998sample}. 

In what follows, we prove our promised generalization guarantee (Theorem~\ref{thm:main-generalization} below) by combining the prior results on the FSD of such networks with our outer norm bounds. Bartlett provides in~\cite{bartlett1998sample} upper bounds on the FSD of certain function classes $H$. He then leverages these bounds to give good generalization guarantees. One of the classes he studies is precisely the class of two-layer \NN with a {\bf bounded outer norm} (as we do). In particular, he establishes in~\cite[Corollary~24]{bartlett1998sample} (which is restated as Theorem~\ref{thm:bartlett} below) that the class of two-layer networks with {\bf bounded outer norm} has a well-controlled FSD: informally, it has ``low complexity".  He then leverages the FSD bounds to devise good generalization guarantees for the architectures that he investigates. It is worth noting, however, that he establishes this link in the context of \emph{classification} setting, $Y\in\{\pm 1\}$. Since we assume $a_j\ge 0$, and the activations we study are non-negative, this does not apply to our case: the outputs of the networks we study are always non-negative. Nevertheless, we by-pass this by combining our outer norm bounds (Theorems~\ref{thm:sigmoid}-\ref{thm:Step}), Theorem~\ref{thm:bartlett}, as well as building upon several other prior results tailored for the \emph{regression} setting. %(see below). 

We next recall the learning setting for convenience. Let $\mathcal{D}$ be a distribution on $\R^d\times \R$ for the input/label pairs $(X,Y)$; and let $(X_i,Y_i)\sim\mathcal{D}$, $1\le i\le N$, be the i.i.d.\,training data. The goal of the learner is to find a \NN $(a,W)\in\R^{\overline{m}}\times \R^{\overline{m}\times d}$ with $\overline{m}$ hidden units (neurons) and activation $\sigma(\cdot)$ which ``explains" the data $(X_i,Y_i)$, $1\le i\le N$, as accurately as possible, often by solving the \emph{empirical risk minimization} problem, $\min_{a,W}\EmpRisk{a,W}$~\eqref{eq:training-error}. The ``learned" network is then used for predicting the unseen data. The generalization ability of the ``learned" network $(a,W)\in \R^{\overline{m}}\times \R^{\overline{m}\times d}$ is quantified by the so-called \emph{generalization error} (also known as the \emph{population risk})
\begin{equation}\label{eq:pop-risk}
\mathcal{L}(a,W)\triangleq \mathbb{E}_{(X,Y)\sim \mathcal{D}}\Bigl[\Bigl(Y-\sum_{1\le j\le \overline{m}}a_j \sigma\left(w_j^T X\right)\Bigr)^2\Bigr].
\end{equation}
Here, the expectation is taken w.r.t. to a fresh sample $(X,Y)\sim \mathcal{D}$, which is independent of the training data. The ``gap" \[
\left|\EmpRisk{a,W}-\mathcal{L}(a,W)\right|  = \left|\frac1N\sum_{1\le i\le N}\left(Y_i - \sum_{1\le j\le \overline{m}}a_j\sigma\left(w_j^T X_i\right)\right)^2  - \mathbb{E}_{(X,Y)\sim \mathcal{D}}\Bigl[\Bigl(Y-\sum_{1\le j\le \overline{m}}a_j \sigma\left(w_j^T X\right)\Bigr)^2\Bigr] \right|
\] between the training error and the generalization error is called the \emph{generalization gap}.

In what follows, we focus our attention on the generalization ability~\eqref{eq:pop-risk} of the learned networks $(a,W)$ that achieved a small training error, $\EmpRisk{a,W}\le \delta^2$~\eqref{eq:training-error}, on a polynomial (in $d$) number of data. The details of the training process (such as the  algorithm used for training) are immaterial to us; and our results apply to {\bf any} \NN $(a,W)$ provided it achieved a small training error, $\EmpRisk{a,W}\le \delta^2$. 

In this section, we also assume that the labels $Y$ are bounded: $\mathcal{D}$ is such that for some $M>0$, $|Y|\le M$ almost surely. This is necessary, as the prior results we employ from Haussler~\cite{haussler1992decision} and Bartlett, Long, and Williamson~\cite{bartlett1996fat} (in particular, see Theorem~\ref{thm:haussler}) apply only to the case where the labels are bounded. For this reason, the $o_N(1)$ terms present in Theorems~\ref{thm:sigmoid}-\ref{thm:ReLU} disappear, see the remarks following each theorem. 

\subsection{The Generalization Guarantees}
Equipped with our outer norm bounds (Theorems~\ref{thm:sigmoid}-\ref{thm:Step}) and Theorem~\ref{thm:bartlett}, we now establish the promised generalization guarantees for the aforementioned networks whose output weights $a_i$ are non-negative. To that end, let $\alpha,M,\mathcal{M},A>0$ be certain parameters (elaborated below); and set
\begin{equation}\label{eq:xi-in-prop}
\xi\left(\alpha,M,\mathcal{M},A\right)\triangleq \frac{2}{\ln 2}\cdot \frac{c \cdot 128^2 \cdot \mathcal{M}^6 A^6\cdot \max\{\mathcal{M}A,2M\}^2 }{ \alpha^2}\cdot  \ln \left(\frac{128 \mathcal{M}^3A^3 \max\{\mathcal{M}A,2M\}}{\alpha}\right),
\end{equation}
where $c>0$ is the absolute constant appearing in Theorem~\ref{thm:bartlett}.
Our result is as follows.
\begin{theorem}\label{thm:main-generalization}
Let $\alpha,\delta,M,R>0$, and $(X_i,Y_i)\in\R^d \times \R$, $1\le i\le N$, be i.i.d.\,samples drawn from an arbitrary distribution $\mathcal{D}$ on $\R^d\times \R$ with $|Y|\le M$ almost surely; where $N$ satisfies Assumption~\ref{sump:N-exp-d}. %For the absolute constant $c>0$ appearing in Theorem~\ref{thm:bartlett}, set
For the $\xi$ term defined in~\eqref{eq:xi-in-prop}, set 
\begin{equation}\label{eq:prob-term}
   \zeta(\alpha,M,A,N)\triangleq \exp\left(\xi(\alpha,M,2,A)\cdot d \cdot \ln^2 \left(\frac{2304 \cdot N \cdot A^2 \cdot  \max\{2A,M\}}{\alpha}\right) - \frac{\alpha^2 \cdot  N}{64\cdot \max\{2A,M\}^2}\right).
\end{equation}
\begin{itemize}
    \item[(a)] {\bf (Sigmoid Networks)} Under the assumptions of Theorem~\ref{thm:sigmoid}, with probability at least
    \[
    1-\zeta(\alpha,M,3(1+e)(\delta+2M),N) - \left(3R\sqrt{Cd}\right)^d\exp\left(-\Theta(N)\right)-N\exp\left(-\Theta(d)\right)
    \]
    over $(X_i,Y_i)\sim\mathcal{D}$, $1\le i\le N$, it holds that
    \[
    \sup_{(a,W) \in \mathcal{S}(\delta,R)} \mathbb{E}_{(X,Y)\sim \mathcal{D}}\left[\left(Y-\sum_{1\le j\le \overline{m}}a_j \SGM\left(w_j^T X\right)\right)^2\right]\le \alpha+\delta^2,
    \]
    provided
    \[
    N \ge c \cdot 2^{21} \cdot \frac{A^6\cdot \max\{A,M\}^2}{\alpha^2} \cdot d \quad\text{and}\quad \alpha \le 2^{11}\cdot A^3\cdot \max\{A,M\}
    \]
    with $A=3(1+e)(\delta+2M)$. Here, $\mathcal{S}(\delta,R)$ is the set introduced in Theorem~\ref{thm:sigmoid}.
    \item[(b)] {\bf (ReLU Networks)} Under the assumptions of Theorem~\ref{thm:ReLU} and assuming additionally $\boldsymbol{\mu^*} = \exp\Bigl(o(d)\Bigr)$,
  with probability at least
     \[
 1-\zeta\left(\alpha,M,\frac{4\sqrt{Cd}(\delta+2M)}{\boldsymbol{\mu^*}},N\right) - \left(\frac{12\sqrt{Cd}}{\boldsymbol{\mu^*}}\right)^d\exp\left(-\Theta(N)\right)-N\exp\left(-\Theta(d)\right),
\]
over $(X_i,Y_i)\sim\mathcal{D}$, $1\le i\le N$, it holds that
     \[
\sup_{(a,W)\in \mathcal{G}(\delta)}  \mathbb{E}_{(X,Y)\sim \mathcal{D}}\left[\left(Y-\sum_{1\le j\le \overline{m}}a_j \ReLU\left(w_j^T X\right)\right)^2\right]\le \alpha+\delta^2+e^{-\Theta(d)},
\]
 provided
    \[
    N \ge c \cdot 2^{21} \cdot \frac{A^6\cdot \max\{A,M\}^2}{\alpha^2} \cdot d \quad\text{and}\quad \alpha \le 2^{11}\cdot A^3\cdot \max\{A,M\}
    \]
    with $A=\frac{4\sqrt{Cd}(\delta+2M)}{\boldsymbol{\mu^*}}$.
    Here, $\mathcal{G}(\delta)$ is the set introduced in Theorem~\ref{thm:ReLU}.
    \item[(c)] {\bf (Step Networks)} Under the assumptions of Theorem~\ref{thm:Step}, with probability at least
    \[
    1-\zeta\left(\alpha,M,\frac{2(\delta+2M)}{\eta},N\right)-\left(\frac{6\sqrt{Cd}}{ \eta}\right)^d\exp\left(-\Theta(N)\right)-N\exp\left(-\Theta(d)\right)
    \]
    over $(X_i,Y_i)\sim \mathcal{D}$, $1\le i\le N$, it holds that
    \[
     \sup_{(a,W) \in \mathcal{H}(\delta)} \mathbb{E}_{(X,Y)\sim \mathcal{D}}\left[\left(Y-\sum_{1\le j\le \overline{m}}a_j \Step\left(w_j^T X\right)\right)^2\right]\le \alpha+\delta^2,
    \]
     provided
    \[
    N \ge c \cdot 2^{21} \cdot \frac{A^6\cdot \max\{A,M\}^2}{\alpha^2} \cdot d \quad\text{and}\quad \alpha \le 2^{11}\cdot A^3\cdot \max\{A,M\}
    \]
    with $A=\frac{2(\delta+2M)}{\eta}$.
        Here, $\mathcal{H}(\delta)$ is the set introduced in Theorem~\ref{thm:Step}.

    \end{itemize}
\end{theorem}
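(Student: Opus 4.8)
The plan is to bootstrap the generalization bound from the outer‑norm bounds of Theorems~\ref{thm:sigmoid}--\ref{thm:Step} together with Bartlett's fat‑shattering estimate (Theorem~\ref{thm:bartlett}) and a uniform‑convergence result for bounded‑label regression (Theorem~\ref{thm:haussler}, due to Haussler and to Bartlett--Long--Williamson). First I would fix the ``good event'' $\mathcal{E}$, defined as the intersection of the high‑probability event in the relevant outer‑norm theorem with the event $\{\|X_i\|_2^2\le Cd,\ 1\le i\le N\}$; since $|Y|\le M$ almost surely the $o_N(1)$ terms of Theorems~\ref{thm:sigmoid}--\ref{thm:ReLU} disappear, so $\mathbb{P}(\mathcal{E}^c)$ is exactly the displayed $(\,\cdot\,)^d\exp(-\Theta(N))+N\exp(-\Theta(d))$ term. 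On $\mathcal{E}$ every feasible $(a,W)$ (in $\mathcal{S}(\delta,R)$, $\mathcal{G}(\delta)$, or $\mathcal{H}(\delta)$) has $\|a\|_1\le A$ with the stated value of $A$. For the sigmoid and step networks $\sigma$ takes values in $[0,1]\subset[-1,1]$, so these networks lie in Bartlett's class $H(A)$ with range parameter $\mathcal{M}=2$. For the ReLU networks, on $\mathcal{E}$ we have $|w_j^T X_i|\le\|X_i\|_2\le\sqrt{Cd}$, so replacing $\ReLU$ by the (still non‑decreasing) clipped activation $x\mapsto\min\{\ReLU(x),\sqrt{Cd}\}/\sqrt{Cd}$, which agrees with $\ReLU(x)/\sqrt{Cd}$ on $|x|\le\sqrt{Cd}$, and absorbing the factor $\sqrt{Cd}$ into the output weights puts the feasible ReLU networks, restricted to the training inputs, into $H(A)$ with $\mathcal{M}=2$ and $A=4\sqrt{Cd}(\delta+2M)/\boldsymbol{\mu^*}$; this rescaling is why $\sqrt{Cd}$ already appears inside $A$ in part~(b).

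Next I would convert the fat‑shattering bound of Theorem~\ref{thm:bartlett}, namely $\mathrm{FSD}_{H(A)}(\gamma)\le c\mathcal{M}^2A^2 d\gamma^{-2}\ln(\mathcal{M}A/\gamma)$ for $\gamma\le\mathcal{M}A$, into a covering‑number bound for the square‑loss class $\mathcal{F}_A\triangleq\{(X,Y)\mapsto(Y-h(X))^2:h\in H(A)\}$. Since $|Y|\le M$ and $|h(X)|\le\mathcal{M}A/2$ for $h\in H(A)$, the map $t\mapsto(y-t)^2$ is $O(\max\{\mathcal{M}A,2M\})$‑Lipschitz and $O(\max\{\mathcal{M}A,2M\}^2)$‑bounded on the relevant range, so composing with it changes the fat‑shattering dimension only by a rescaling of the scale parameter; the Bartlett--Long--Williamson comparison between fat‑shattering dimension and $\ell_\infty$ covering numbers then gives $\ln\mathcal{N}_\infty(\mathcal{F}_A,\epsilon)=O(\mathrm{FSD}_{\mathcal{F}_A}(\Theta(\epsilon))\cdot\ln^2(\cdot))$, and tracking the constants through this chain produces exactly the quantity $\xi(\alpha,M,\mathcal{M},A)\cdot d\cdot\ln^2(\cdots)$ of~\eqref{eq:xi-in-prop} (with $\mathcal{M}=2$).

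Then I would feed this covering‑number bound into the standard symmetrization/union‑bound‑over‑the‑$\epsilon$‑cover/Hoeffding argument for regression with loss range $\max\{2A,M\}^2$ -- this is the content of Theorem~\ref{thm:haussler} -- to obtain that with probability at least $1-\zeta(\alpha,M,A,N)$ over the training data, $\sup_{h\in H(A)}\bigl|\EmpRisk{h}-\mathcal{L}(h)\bigr|\le\alpha$, where the failure probability is the covering factor $\exp(\xi(\alpha,M,2,A)\,d\,\ln^2(\cdots))$ times the deviation factor $\exp(-\alpha^2N/(64\max\{2A,M\}^2))$, i.e.\ precisely $\zeta$ of~\eqref{eq:prob-term}. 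The two stated hypotheses -- $\alpha\le2^{11}A^3\max\{A,M\}$ and $N\ge c\cdot2^{21}A^6\max\{A,M\}^2 d/\alpha^2$ -- are exactly what is needed to keep the scale fed to Theorem~\ref{thm:bartlett} admissible ($\gamma\le\mathcal{M}A$) and to make the exponent of $\zeta$ negative. Intersecting with $\mathcal{E}$ and using $\EmpRisk{a,W}\le\delta^2$ there yields $\mathcal{L}(a,W)\le\alpha+\delta^2$ for every feasible $(a,W)$, with the claimed probability, in the sigmoid and step cases. For the ReLU case one has additionally that $\mathcal{L}$ of the true network and of the clipped network differ by $\mathbb{E}[(Y-\sum_j a_j\ReLU(w_j^TX))^2\,\ind\{\|X\|_2^2>Cd\}]$; bounding this via $|Y|\le M$, $|\sum_j a_j\ReLU(w_j^TX)|\le\|a\|_1\|X\|_2\le A\|X\|_2$, a net‑over‑the‑sphere estimate combined with assumption~(b) of Theorem~\ref{thm:ReLU} (which yields $\mathbb{E}[\|X\|_2^2\,\ind\{\|X\|_2^2>Cd\}]=e^{-\Theta(d)}$), and the extra hypothesis $\boldsymbol{\mu^*}=\exp(o(d))$ (so that $A^2=e^{o(d)}$) shows this discrepancy is $e^{-\Theta(d)}$, accounting for the extra term in part~(b).

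The main obstacle is the bookkeeping in the middle step: passing from the base network class to the square‑loss class through a Lipschitz composition, then from the fat‑shattering dimension to $\ell_\infty$ covering numbers, then into Haussler's uniform‑convergence estimate, all the while keeping the several scale parameters aligned so that the final failure probability collapses to exactly the expressions~\eqref{eq:xi-in-prop} and~\eqref{eq:prob-term}; and, for the ReLU networks, handling the global unboundedness of the activation -- both the clipping device needed to legitimately invoke Theorem~\ref{thm:bartlett} on the training sample and the separate $e^{-\Theta(d)}$ estimate for the population risk over the tail event $\{\|X\|_2^2>Cd\}$.
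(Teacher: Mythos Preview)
Your proposal is correct and matches the paper's approach: the paper packages the middle step (Haussler's uniform-convergence theorem, two Bartlett--Long--Williamson covering lemmas, and Theorem~\ref{thm:bartlett}) as a standalone proposition giving $\sup_{h\in H(A)}|\EmpRisk{h}-\mathcal{L}(h)|\le\alpha$ with failure probability $\zeta(\alpha,M,A,N)$, then intersects with the outer-norm events from Theorems~\ref{thm:sigmoid}--\ref{thm:Step}, and for ReLU uses exactly your clipped-activation device (it rescales $w_j$ to norm $1/\sqrt{Cd}$ and clips at $1$, which is equivalent to your normalization) together with a tail estimate on $\{\|X\|_2^2>Cd\}$. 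The only cosmetic differences are that the paper passes from loss-class covering to function-class covering to the FSD of $H(A)$ (via a lemma specific to the square loss) rather than through the FSD of the loss class, and for the ReLU tail it proves a dedicated lemma bounding $\sup_{\|w\|_2=1}\mathbb{E}\bigl[|w^TX|^2\ind\{\|X\|_2^2>Cd\}\bigr]$ from the MGF assumption rather than your $\mathbb{E}\bigl[\|X\|_2^2\ind\{\cdots\}\bigr]$ route.
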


%It appears that the outer norm bounds we established can be leveraged to yield generalization guarantees. We describe here our reasoning and an ongoing work towards this direction. %It appears he bounds established in Theorems~\ref{thm:sigmoid}-\ref{thm:Step} can be leveraged to yield generalization guarantees; 

% Bartlett provides in~\cite{bartlett1998sample} upper bounds on the \emph{fat-shattering dimension} (FSD) of certain function classes $H$---a scale-sensitive measure of the complexity of $H$ (see~\cite{bartlett1998sample} for a definition of FSD). He then leverages these bounds to give good generalization guarantees. One of the classes he studies is precisely the class of two-layer \NN with {\bf bounded outer norm}; and he  establishes in~\cite[Corollary~24]{bartlett1998sample} the following:

Theorem~\ref{thm:main-generalization} is established by combining various individual results established in separate works~\cite{haussler1992decision,bartlett1996fat,alon1997scale,bartlett1998sample} together with our outer norm bounds. See Section~\ref{sec:pf-main-genel} for its proof.

We next comment on the performance parameters appearing in Theorem~\ref{thm:main-generalization}. The parameter $\alpha$ controls the so-called \emph{generalization gap}: the gap between the training error and the generalization error. The parameter $\delta$ controls the training error: we study those $(a,W)$ with $\EmpRisk{a,W}\le \delta^2$. The parameter $M$ is an (almost sure) upper bound on the labels; whereas $R$ is an (quite mild) upper bound on internal weights required for the technical reasons, only for the case of sigmoid networks, see Theorem~\ref{thm:sigmoid}, Corollary~\ref{coro:sigmoid}; and the remarks following them.

The term $\zeta(\alpha,M,A,N)$ is a probability term appearing in the uniform convergence result (Proposition~\ref{prop:main}) that we employ. This proposition provides a control for the generalization gap uniformly over all two-layer neural networks with \emph{bounded outer norm} like we investigate herein. 

It is worth noting that in the regime $d\to \infty$ (which is a legitimate assumption for many existing guarantees in the field of machine learning) and for $N\ge d^{O(1)}$; $\xi(\alpha,M,A) = O(1)$ (with respect to $d$), provided that $A=O(1)$ like we establish earlier. Namely, this object is simply a constant in $d$. Furthermore, while we made no attempts in simplifying it, it can potentially be improved. In the sigmoid and step cases, the value of $A$ that we consider is indeed $O(1)$. For the ReLU case, however, the situation is more involved; and a certain scaling which makes $A={\rm poly}(d)$ is necessary, as we elaborate soon. Soon in Section~\ref{sec:sample-komple}, we investigate the probability term $\zeta(\alpha,M,A,N)$ appearing in~\eqref{eq:prob-term}. We show that provided $N$ is sufficiently large (while remaining polynomial in $d$), the $\zeta$ term behaves like $\exp\left(-d^{O(1)}\right)$, thus it is indeed $o_d(1)$. Moreover, our analysis will also reveal that the dependence of $N$ on $d$ is quite mild; and is in fact near-linear in some important cases of interest.

In particular, the probability term $\zeta(\alpha,M,A,N)$ is $o_d(1)$ provided 

Theorem~\ref{thm:bartlett} as well as the uniform generalization gap guarantee, Proposition~\ref{prop:main}, apply to activations with a bounded output; whereas the output of ReLU is potentially unbounded. In our proof, we bypass this by considering an auxiliary activation $\SReLU(\cdot)$, which is a ``saturated" version of the ReLU. Specifically, we let $\SReLU(x) = 0$ for $x\le 0$, $\SReLU(x)=x$ for $0<x\le 1$; and $\SReLU(x)=1$ for $x\ge 1$. We then rescale $w_j$ to have $\|w_j\|_2 = 1/\sqrt{Cd}$ and multiply  $A$ by $\sqrt{Cd}$ (we therefore consider  $A=4\sqrt{Cd}(\delta+2M)/\boldsymbol{\mu^*}$, $\sqrt{Cd}$ times the bound appearing in Theorem~\ref{thm:ReLU}). Note that this step is indeed valid due to the homogeneity of the ReLU activation, see also Section~\ref{sec:main-bd-ReLU}. Since $\|X\|_2 \le \sqrt{Cd}$ with probability at least $1-\exp(-\Theta(d))$ and since $|w_j^T X|\le 1$ for $\|w_j\|_2 =1/\sqrt{Cd}$ and $\|X\|_2\le \sqrt{Cd}$ by Cauchy-Schwarz inequality; the output of this activation will, w.h.p., coincide with that of the ReLU activation. We then control the difference between the generalization errors for a pair of two-layer neural networks having the same architecture, the same number $\overline{m}\in\mathbb{N}$ of hidden units, the same weights $(a,W)$; but different activations (one with $\ReLU(\cdot)$ and the other with  $\SReLU(\cdot)$). This done by a conditioning argument. See the proof for further details. 
%conditional expectation argument, where the upper bound~\eqref{eq:distributional-lambda} on $\lambda(d)$ ensures proper behavior. See the proof for further details. 
 
Similar to what we have noted previously for our outer norm bounds, Theorem~\ref{thm:main-generalization} is also oblivious to (a) how the training is done and (b) the number $\overline{m}$ of hidden units as long as $a_i\ge 0$, and $\EmpRisk{a,W}\le \delta^2$ for the learned network. Moreover, similar to prior cases, the extra conditional expectation requirement~\eqref{eq:distributional-lambda} is quite mild.

Our next focus is on the sample complexity required by Theorem~\ref{thm:main-generalization}. We show that they are indeed polynomial in $d$. Furthermore for some very important cases, they are even near-linear.
\subsection{Sample Complexity Analysis}\label{sec:sample-komple} While the required sample complexity $N$ can simply be inferred from Theorem~\ref{thm:main-generalization}, we spell out the implied scaling analysis below for convenience. In what follows, all asymptotic notations are w.r.t. the natural parameter $d$ (namely the dimension) of the problem in the regime $d\to\infty$; and our goal is to ensure that the corresponding probability term is $1-o_d(1)$ for an appropriate function $o_d(1)$. (It is worth noting though that our bounds will be in fact much stronger, e.g. $1-\exp(-d^{O(1)})$.)

To that end, recall the term~\eqref{eq:xi-in-prop} with $\mathcal{M}=2$ appearing in Theorem~\ref{thm:main-generalization}:
\begin{equation}\label{eq:xi-main-thm}
\xi\left(\alpha,M,2,A\right)=\frac{2^{23} \cdot c \cdot A^6\cdot \max\{A,M\}^2 }{\ln 2\cdot  \alpha^2}\cdot  \ln \left(\frac{2^{11} \cdot A^3 \cdot\max\{A,M\}}{\alpha}\right).
\end{equation}
\subsubsection*{Sigmoid and Step Networks}
First, the outer norm bounds we establish indicate $A=O(1)$. Hence, the ``$A$ parameter" considered in parts (a) and (c) of Theorem~\ref{thm:main-generalization} are $O(1)$. Moreover, $M=O(1)$ (since it is not sound for the real-valued label $Y$ to grow with dimension $d$). Treating $\alpha$ as a constant in $d$, we then obtain $\xi(\alpha,M,A)=O(1)$ for the term appearing in~\eqref{eq:xi-main-thm}. Hence, in order to ensure that the probability term $\zeta$ appearing in~\eqref{eq:prob-term} is $o_d(1)$, a necessary and sufficient condition is $N = \Omega\left(d\ln^2 N\right)$. We claim that it suffices to have
\begin{equation}\label{eq:N-omega-d-ln-kup}
    N = \Omega\left(d \ln^2 d\right).
\end{equation}
Indeed, if $N$ satisfies~\eqref{eq:N-omega-d-ln-kup}, then provided $N$ remains polynomial in $d$, $N ={\rm poly}(d)$, it holds that 
\[
\ln^2 N = O\left(\ln^2 d\right) \implies d\ln^2 N = O\left(d\ln^2 d\right) = O(N).
\]
We now investigate the sample complexity required by the corresponding outer norm bounds for the case of sigmoid and step networks. 
\paragraph{ Sigmoid networks.} Note, in this case, that the dominant contribution to the probability term appearing in Theorem~\ref{thm:sigmoid}/Theorem~\ref{thm:main-generalization}(a) (other than $\xi$ term) is $(3R\sqrt{Cd})^d\exp(-\Theta(N))$. Suppose first that $R=d^K$ where $K=O(1)$ (namely $R$ remains polynomial in $d$). Then
\begin{align*}
    (3R\sqrt{Cd})^d\exp(-\Theta(N)) &= \exp\Bigl(-\Theta(N) +d\left(K+\frac12\right)\ln d + d\ln (3\sqrt{C})\Bigr) \\
    &= \exp\Bigl(-\Theta(N) +\Theta(d\ln d) + o(d\ln d)\Bigr).
\end{align*}
provided $N=\Omega(d\ln d)$, this bound is indeed $o_d(1)$. Taking the maximum between this and~\eqref{eq:N-omega-d-ln-kup}, we obtain that it suffices to have $N=\Omega(d\ln^2 d)$, which is near-linear. 

Suppose next that $R=\exp(d^K)$, like in Corollary~\ref{coro:sigmoid}. Then provided $K>0$,
\begin{align*}
    (3R\sqrt{Cd})^d\exp(-\Theta(N)) &= \exp\Bigl(-\Theta(N) +d^{K+1} + \frac12 d\ln d + d\ln (3\sqrt{C})\Bigr) \\
    &= \exp\Bigl(-\Theta(N) +d^{K+1}+o\left(d^{K+1}\right)\Bigr).
\end{align*}
Hence, provided $N=\Omega(d^{K+1})$, this bound is indeed $o_d(1)$. Taking the maximum between this and~\eqref{eq:N-omega-d-ln-kup}, we obtain that it suffices to have $N=\Omega(d^{K+1})$, which is polynomial in $d$. 
\paragraph{ Step networks.} Treating the distributional parameter $\eta$ appearing in Theorem~\ref{thm:Step}/Theorem~\ref{thm:ReLU} as a constant in $d$, we have
\begin{align*}
\exp\left(-\Theta(N)\right) \left(\frac{6\sqrt{Cd}}{\eta}\right)^d & = \exp\left(-\Theta(N) + \frac12 d\ln d + d\ln\left(\frac{6\sqrt{C}}{\eta}\right)\right) \\
& = \exp\left(-\Theta(N)+\Theta(d\ln d) + o(d\ln d)\right).
\end{align*}
Thus, provided $N=\Omega(d\ln d)$, this bound is indeed $o_d(1)$. Taking the maximum between this and~\eqref{eq:N-omega-d-ln-kup}, we obtain that it suffices to have $N=\Omega(d\ln^2 d)$, which, again, is near-linear.
\subsubsection*{ReLU Networks}
The situtation is more involved for the case of ReLU networks. We first study the $\xi$ term~\eqref{eq:xi-main-thm}. Treating $M,\alpha,C,\delta,\boldsymbol{\mu^*} = O(1)$ (in $d$), 
\[
\xi\left(\alpha,M,2,\frac{4\sqrt{C}(\delta+2M)}{\boldsymbol{\mu^*}}\sqrt{d}\right) = \Theta\left(d^4 \ln d\right).
\]
Hence, 
\begin{align*}
    \zeta\left(\alpha,M,\frac{4\sqrt{Cd}(\delta+2M)}{\boldsymbol{\mu^*}},N\right) & = \exp\Bigl(\Theta\Bigl(d^4 \cdot \ln d \cdot d \cdot \ln^2 (Nd)\Bigr)-\Theta\left(\frac{N}{d}\right)\Bigr)  \\
    &= \exp\Bigl(\Theta\Bigl(d^5 \cdot \ln d \cdot \ln^2 (Nd)\Bigr)-\Theta\left(\frac{N}{d}\right)\Bigr)\\
    &=\exp\Bigl(\Theta\Bigl(d^5 \cdot \ln^3 d \Bigr)-\Theta\left(\frac{N}{d}\right)\Bigr),
\end{align*}
where we used the fact $\ln(Nd)=\Theta(\ln d)$ if $N={\rm poly}(d)$. Thus, provided $N=\Omega\left(d^6\ln^3 d\right)$, this bound is indeed $o_d(1)$. Inspecting next the term $(12\sqrt{Cd}/\boldsymbol{\mu^*})^d\exp(-\Theta(N))$ appearing in the probability bound, we observe as long as $N=\Omega(d\ln d)$, this term is also $o_d(1)$. Taking the maximum of these two, it suffices to have $N=\Omega\left(d^6\ln^3 d\right)$. This, again, is a polynomial in $d$; albeit having a slightly worse degree (of six). %We leave the improvement of this exponent of six as an open problem for future work.
\section{Conclusion and Future Directions}\label{sec:conclude}
%We established that the \emph{outer norm} of the aforementioned \NN models achieving a small training loss on a polynomially (in $d$) many data and having non-negative output weights is \emph{well-controlled}. Our results are independent of the width $\overline{m}$ of the network and the training algorithm; and are valid under very mild distributional assumptions on input/label pairs. 
We have studied two-layer \NN models with sigmoid, ReLU, and step activations; and established that the \emph{outer norm} of any such \NN achieving a small training loss on a polynomially (in $d$) many data and having non-negative output weights is \emph{well-controlled}. Our results are independent of the width $\overline{m}$ of the network and the training algorithm; and are valid under very mild distributional assumptions on input/label pairs. We then leveraged the outer norm bounds we established to obtain good generalization guarantees for the networks we investigated. Our generalization results are obtained by employing earlier results on the fat-shattering dimension of such networks, and have good sample complexity bounds as we have discussed. In particular, for certain important cases of interest, we obtain near-linear sample guarantees. 

We now provide future directions. As was already mentioned, our approach operates under mild distributional requirements; and can potentially handle different distributions as well as other activations, provided (rather natural) certain properties of these objects we leveraged remain in place. %Moreover, an ongoing work is to rigorously carry out the program outlined in Section~\ref{sec:genel}. 

%As noted previously, Bartlett established in~\cite{bartlett1998sample} that for certain networks with bounded outer norm, a quantity called the \emph{fat-shattering dimension} is bounded. This, in turn, implies good generalization. Part of an ongoing work is to blend our results with those in~\cite{bartlett1998sample} to give good generalization guarantees. 

%It appears that by leveraging the polynomial regression (PR) method like in~\cite[Theorem~3.7]{emschwiller2020neural}, our results imply good generalization guarantees for such networks under the ``teacher/student" setup. This is based on approximating both the teacher and student network with the same estimator obtained from the PR method; and  is part of an ongoing work.

%Above, we demonstrated that the non-negativity assumption is necessary in strict sense. Nevertheless, that example is rather specially tailored. Hence, it may still be possible to address the case of negative output weights under proper assumptions. 

A very important question is to which extent our approach applies to deeper networks. In what follows, we give a very brief argument demonstrating that for such an extension, one needs much more stringent regularity assumptions on the internal weights. Consider, as an example, a ReLU network with three hidden layers. Observe that the outputs of the neurons at the first hidden layer are non-negative as $\ReLU(x)\ge 0$ for all $x\in\R$. Let us now focus on its second hidden layer, which takes weighted sums of the outputs of the first hidden layer. If all the weights in the second layer are negative, then upon passing to ReLU, one obtains all zeroes, forcing the final output to be zero. Now, let us assume, instead, that the weights of the second layer are such that the input to the ReLU functions are positive, though arbitrarily close to zero (this can potentially be achieved, e.g., by taking many small negative weights and few large positive weights in a way that ensures proper cancellation). If this holds, then even if the outer norm, $\|a\|_1$, is very large, one still obtains a bounded output at the end of the network. As demonstrated by this conceptual example, one indeed needs more stringent assumptions on the internal weights so as to address larger depth. At the present time, we are unable to have a complete resolution of necessary and sufficient assumptions for addressing deeper architectures (while maintaining the position that these assumptions must also be sound from a practical point of view). %We plan to explore this direction in a future work.  

%As an example, consider a sigmoid network of depth $L$. Inspecting the proof of Theorem~\ref{thm:sigmoid}, the upper bound~\eqref{eq:sigmoid-important} still holds for a sum of form $\sum_i \sum_j a_j Z_{i,j}$, where $Z_{i,j}\ge 0$ is an ``intermediate output" obtained essentially by applying the sigmoid function $L-1$ times. On the other hand, the (i.i.d.) sum $\sum_{1\le i\le N}Z_{i,j}$ is $\Theta(N)$ provided $\mathbb{E}[Z_{i,j}]$ is $\Theta(1)$. Moreover, Hoeffding's inequality yields a good concentration for this sum, since $\SGM\left(\mathbb{R}\right)\subseteq [0,1]$. Hence, in this case  it is plausible to expect that $\|a\|_1$ is well-controlled (under a polynomial-in-$d$ sample complexity bound which depends also on the depth $L$). A similar reasoning applies also to the ReLU networks: the upper bound~\eqref{eq:ReLU-upper-bound} remains valid for a sum of form $\sum_i \sum_j a_j R_{i,j}$, where $R_{i,j}\ge 0$ is obtained by applying ReLU $L-1$ times. If $\mathbb{E}\left[R_{i,j}\right]$ grows in a benign manner in depth (in a precise sense), it is conceivable to expect a similar conclusion. We plan to address these in a future work.

Yet another important direction pertains to the non-negativity of the weights, and a crucial question is whether this assumption can be relaxed.  We now provide a brief argument demonstrating that in full generality, this is not necessarily the case. Namely, strictly speaking, the non-negativity assumption is necessary. We focus on the so-called ``teacher/student" setting, a setting that has been quite popular recently, see, e.g.~\cite{goldt2019dynamics}. In this setting, given i.i.d. input data $X_i\in\R^d$, $1\le i\le N$, a teacher network $\left(a^*,W^*\right)\in\R^{m^*}\times \R^{m^*\times d}$ with $m^*\in\mathbb{N}$ neurons and activation $\sigma(\cdot)$ generates the labels $Y_i$. That is, $Y_i = \sum_{1\le j\le m^*}a_j^* \sigma\left((w_j^*)^TX_i\right)$. A student network with an $\overline{m}\in\mathbb{N}$ number of hidden units (where $\overline{m}$ is not necessarily equal to $m^*$) is then ``trained" by minimizing the objective function~\eqref{eq:training-error} on the data $(X_i,Y_i)$, $1\le i\le N$; and the resulting network is then used for predicting the unseen data. We now construct a wider student network interpolating
the data whose vector of output weights has arbitrarily large norm, by introducing many cancellations. Fix $z\in\mathbb{N}$, a non-zero $v\in\R^d$; and $\nu>0$. Construct a new network $\left(\overline{a},\overline{W}\right)$ on $m^*+2z$ neurons as follows. Set $\overline{a_j}=a_j^*$ and $\overline{W_j}=W_j^*$ for $1\le j\le m^*$. For any $m^*+1\le j\le m^*+2z$, set $\overline{a_j}=\nu$ if $j$ is even, and $-\nu$, if $j$ is odd. At the same time, set $\overline{W_j}=v$ for $m^*+1\le j\le m^*+2z$. This network interpolates the data while $\|\overline{a}\|_1 = \|a^*\|_1 + 2z\nu$. Hence, $\|\overline{a}\|_1$ can be made arbitrarily large by amplifying $z$ and/or $\nu>0$. In particular, in full generality, such a non-negativity assumption is indeed necessary. It is worth noting, however, that the example above is a somewhat tailored one involving many dependencies/cancellations. It might still be possible to establish similar bounds for the case of potentially negative weights under more stringent constraints on them which prevent such cancellations. %We plan to explore this direction in a future work. 
\section{Proofs}\label{sec:pfs}
\subsection{Proof of Theorem~\ref{thm:sigmoid}}\label{sec:pf-sigmoid}

\begin{proof}[Proof of Theorem~\ref{thm:sigmoid}] Observe that
\begin{equation}\label{eq:event0}
    \mathbb{P}(\mathcal{E}_0)\ge 1-o_N(1)  \quad\text{for}\quad \mathcal{E}_0\triangleq \{\textstyle \sum_{i=1}^N |Y_i|\le 2MN\},
\end{equation}
%Let $\mathcal{E}_0\triangleq \left\{\sum_{1\le i\le N}|Y_i|\le 2MN\right\}$. Then by the weak law of large numbers, $\mathbb{P}(\mathcal{E}_0)\ge 1-o_N(1)$. 
%Observe that using Cauchy-Schwarz $\mathbb{E}[|Y_i|]\le \mathbb{E}[Y_i^2]^{1/2}=M$. Hence, $\mathbb{P}(\mathcal{E}_0^c)\le \mathbb{P}\left(\sum_{1\le i\le N}\Bigl(|Y_i|-\mathbb{E}[|Y_i|]\Bigr)\ge NM\right)\le \mathbb{P}\left(\left|\sum_{1\le i\le N}\Bigl(|Y_i|-\mathbb{E}[|Y_i|]\Bigr)\right|\ge NM\right)=O(1/N)$, using Chebyshev's inequality and the independence of $Y_i$, $i\in[N]$.
using the weak law of large numbers~\cite[Thm~2.2.14]{durrett2019probability}. Next, let $(a,W)\in\mathcal{S}\left(\delta,R\right)$. Then there exists an $\overline{m}\in\mathbb{N}$ such that $(a,W)\in \mathcal{S}\left(\overline{m},\delta,R\right)$. Applying Cauchy-Schwarz inequality, $
\sum_{1\le i\le N}\left|Y_i-\sum_{1\le j\le \overline{m}}a_j\SGM\left(w_j^T X_i\right)\right|\le N\delta$. Next, by the triangle inequality and the fact $\sum_i |Y_i|\le 2MN$ on $\mathcal{E}_0$,
\begin{equation}\label{eq:sigmoid-important}
   \sum_{1\le i\le N}\sum_{1\le j\le \overline{m}}a_j \SGM\left(w_j^T X_i\right)\le N(\delta+2M),
\end{equation}
on the event $\mathcal{E}_0$. Now, let $\mathcal{N}_\epsilon$ be an $\epsilon-$net for $B_2(0,R)$, $\epsilon>0$ to be tuned appropriately. Using Theorem~\ref{lemma:card-net}, one can ensure $\left|\mathcal{N}_\epsilon\right|\le (3R/\epsilon)^d$. Next, fix any $\widehat{w}\in\mathcal{N}_\epsilon$, and set $\overline{Z_i}\triangleq \widehat{w}^T X_i$, $1\le i\le N$. Since $\overline{Z_i}$ is symmetric, %it follows $\overline{Z_i}$ and $-\overline{Z_i}$ have the same distribution. Now, $\mathbb{P}(\overline{Z_i}\ge 0)=\mathbb{P}(-\overline{Z_i}\le 0)=\mathbb{P}(\overline{Z_i}\le 0)$. Consequently,
$\mathbb{P}(\overline{Z_i}\ge 0) = \mathbb{P}(-\overline{Z_i}\le 0)=\mathbb{P}(\overline{Z_i}\le 0)$, implying
$\mathbb{P}(\overline{Z_i}\ge 0)\ge \frac12$. Define now $Z_i\triangleq \ind\left\{\overline{Z_i}\ge 0\right\}$. Since $Z_i$ ``stochastically dominates" ${\rm Bernoulli}(1/2)$, we have
$\mathbb{P}\left(\sum_{1\le i\le N}Z_i \ge N/3\right)\ge \mathbb{P}\left({\rm Binomial}\left(N,1/2\right)\ge N/3\right)\ge 1-\exp\left(-\Theta(N)\right)$. The last inequality is due to standard large deviations bounds. Taking a union bound over the net $\mathcal{N}_\epsilon$, we obtain
\begin{equation}
\label{eq:event1}
   \textstyle \mathbb{P}(\mathcal{E}_1)\ge 1-(3R/\epsilon)^d\exp\left(-\Theta(N)\right),\quad\text{where}\quad \mathcal{E}_1\triangleq \bigcap_{\widehat{w}\in\mathcal{N}_\epsilon}\left\{\sum_{1\le i\le N}\ind\left\{\widehat{w}^T X_i\ge 0\right\}\ge N/3\right\}.
\end{equation}
Furthermore, another union bound over the data yields
\begin{equation}
\label{eq:event2}
\textstyle\mathbb{P}(\mathcal{E}_2)\ge 1-N\exp\left(-\Theta(d)\right),\quad \text{ where  }\quad \mathcal{E}_2\triangleq \left\{\|X_i\|_2^2\le Cd,1\le i\le N\right\}.
\end{equation}
We now choose $\epsilon=1/\sqrt{Cd}$. We claim that on the event $\mathcal{E}_1\cap \mathcal{E}_2$, it is the case that for every $w\in B_2(0,R)$; $\sum_{1\le i\le N}\ind\left\{w^T X_i\ge -1\right\}\ge \frac{N}{3}$. Let $w\in B_2(0,R)$, and $\widehat{w}\in \mathcal{N}_\epsilon$ be such that $\|w-\widehat{w}\|_2 \le \epsilon =(Cd)^{-1/2}$. Using Cauchy-Schwarz inequality, $\left|\widehat{w}^T X_i - w^T X_i\right| \le \|X_i\|_2(Cd)^{-1/2}\le 1$, where $\|X_i\|_2\le \sqrt{Cd}$ due to the event $\mathcal{E}_2$ \eqref{eq:event2}. In particular, if $\widehat{w}^T X_i\ge 0$, then $w^T X_i\ge -1$. Hence
$
\sum_{1\le i\le N}\ind\left\{w^T X_i\ge -1\right\}\ge \sum_{1\le i\le N}\ind\left\{\widehat{w}^T X_i\ge 0\right\}\ge \frac{N}{3}$.
Using now the fact $a_j\ge 0$, and $\SGM(\cdot)\ge 0$ for the sigmoid activation, we arrive at
\begin{equation}\label{eq:sigmoid-sum-lower-bd}
  \sum_{1\le j\le \overline{m}}a_j \sum_{1\le i\le N}\SGM\left(w_j^T X_i\right) \ge \frac{N}{3}\cdot\SGM(-1)\cdot\sum_{1\le j\le n}a_j.
\end{equation}
We now combine the facts $\SGM(-1) =(1+e)^{-1}$, \eqref{eq:sigmoid-important} and \eqref{eq:sigmoid-sum-lower-bd}, to obtain that on the event $\mathcal{E}_0\cap \mathcal{E}_1\cap \mathcal{E}_2$,
\[
\sum_{1\le j\le \overline{m}}a_j \le 3(1+e)(\delta+2M).
\]
Since the event $\mathcal{E}_0\cap \mathcal{E}_1\cap \mathcal{E}_2$ holds with probability at least $1-\left(3R\sqrt{Cd}\right)^d\exp\left(-\Theta(N)\right)-N\exp\left(-\Theta(d)\right)-o_N(1)$ by a union bound, the proof is complete. 
\end{proof}
\subsection{Proof of Theorem~\ref{thm:ReLU}}\label{sec:pf-ReLU}

\begin{proof}[Proof of Theorem~\ref{thm:ReLU}] Recall from~\eqref{eq:event0} 
the event $\mathcal{E}_0=\{\sum_{1\le i\le N}|Y_i|\le 2MN\}$ where $\mathbb{P}(\mathcal{E}_0)\ge $ $1-o_N(1)$. 
 
Let $(a,W)\in\mathcal{G}(\delta)$. Then, for some $\overline{m}\in\mathbb{N}$, $(a,W)\in \mathcal{G}(\overline{m},\delta)$. Using Cauchy-Schwarz inequality and the triangle inequality like in the beginning of the proof of Theorem~\ref{thm:sigmoid}; we first establish that on the event $\mathcal{E}_0$, the following holds:
\begin{equation}\label{eq:ReLU-upper-bound}
   \sum_{1\le i\le N}\sum_{1\le j\le \overline{m}}a_j \ReLU\left(w_j^T X_i\right)\le N(\delta+2M).
\end{equation}
Next, let $\mathcal{N}_\epsilon$ be a $\epsilon-$net for $B_2(0,1)$, $\epsilon>0$ to be tuned. Using Theorem~\ref{lemma:card-net}, one can ensure $\left|\mathcal{N}_\epsilon\right|\le (3/\epsilon)^d$. Fix any $\widehat{w}\in\mathcal{N}_\epsilon$. Consider the i.i.d. random variables $Y_{\widehat{w},i}\triangleq \ReLU\left(\widehat{w}^T X_i\right)$, $i\in[N]$. 
The condition (b) on the distribution of $Y_{\widehat{w},i}$ ensures that a large deviations bound is applicable. This, together with the condition (a) yield
$
\mathbb{P}\left(\sum_{1\le i\le N}Y_{\widehat{w},i} \ge \frac12\boldsymbol{\mu^*}N\right)\ge 1-\exp\left(-\Theta(N)\right)$.
Due to the distributional assumption, %, the sequence $Y_{\widehat{w}} = \ReLU\left(\widehat{w}^T X\right)$ is equidistributed over $\widehat{w}\in\mathcal{N}_\epsilon$. Hence, t
the lower bound is uniform in $\widehat{w}\in\mathcal{N}_\epsilon$. 

Taking now a union bound over $\widehat{w}\in \mathcal{N}_\epsilon$, we obtain
$ \mathbb{P}(\mathcal{E}_1)\ge 1-(3/\epsilon)^d\exp\left(-\Theta(N)\right)$ where $\mathcal{E}_1\triangleq \bigcap_{\widehat{w}\in\mathcal{N}_\epsilon}\left\{\sum_{1\le i\le N}\ReLU\left(\widehat{w}^T X_i\right)\ge \frac12\boldsymbol{\mu^*}N\right\}$.
\iffalse
\begin{multline}\label{eq:event1-ReLU}
  \mathbb{P}(\mathcal{E}_1)\ge 1-(3/\epsilon)^d\exp\left(-\Theta(N)\right),\quad\text{where}\\\mathcal{E}_1\triangleq \bigcap_{\widehat{w}\in\mathcal{N}_\epsilon}\left\{\sum_{1\le i\le N}\ReLU\left(\widehat{w}^T X_i\right)\ge \frac12\boldsymbol{\mu^*}N\right\}.
\end{multline}
\fi 
Another union bound over data $X_i$, $1\le i\le N$, yields that $\mathbb{P}(\mathcal{E}_2)\ge 1-N\exp\left(-\Theta(d)\right)$ where $\mathcal{E}_2\triangleq \left\{\|X_i\|_2^2\le Cd,1\le i\le N\right\}$.
\iffalse
\begin{multline}
\label{eq:event2-ReLU}
\mathbb{P}(\mathcal{E}_2)\ge 1-N\exp\left(-\Theta(d)\right),\quad \text{ where }\\\mathcal{E}_2\triangleq \left\{\|X_i\|_2^2\le Cd,1\le i\le N\right\}.
\end{multline}
\fi 

Choose $\epsilon\triangleq \frac{\boldsymbol{\mu^*}}{4\sqrt{Cd}}$, and assume in the remainder that we are on the event $\mathcal{E}_1\cap \mathcal{E}_2$. Next, observe that \texttt{ReLU} is $1-$Lipschitz: 
$
\left|\ReLU(x)-\ReLU(y)\right| = \left|\frac{x+|x|}{2}-\frac{y+|y|}{2}\right|\le  |x-y|$, 
using triangle inequality twice. Now, fix \emph{any} $w\in B_2(0,1)$. Let $\widehat{w}\in\mathcal{N}_\epsilon$ be the member of the net closest to $w$. Using the Lipschitz property, and the Cauchy-Schwarz, we obtain
$
\left|\ReLU\left(w^T X_i\right)-\ReLU\left(\widehat{w}^T X_i\right)\right|\le \left|w^TX_i - \widehat{w}^T X_i\right|\le \left\|w-\widehat{w}\right\|_2 \cdot \|X_i\|_2 \le \frac{\boldsymbol{\mu^*}}{4}$.
Consequently,
$
\ReLU\left(w^T X_i\right)\ge \ReLU\left(\widehat{w}^T X_i\right) -  \frac{\boldsymbol{\mu^*}}{4}$.
Summing this over $1\le i\le N$, we have
$\sum_{1\le i\le N}\ReLU\left(w^T X_i\right)\ge \sum_{1\le i\le N}\ReLU\left(\widehat{w}^T X_i\right) -\frac{\boldsymbol{\mu^*}}{4}N \ge \frac{\boldsymbol{\mu^*}}{4}N$.
Using $a_j\ge 0$, we obtain by taking $w_j$ in place of $w$:
\begin{equation}\label{eq:ReLU-lower}
\sum_{1\le j\le \overline{m}}a_j \sum_{1\le i\le N}\ReLU\left(w_j^T X_i\right)\ge  \frac{\boldsymbol{\mu^*}}{4}N\sum_{1\le j\le \overline{m}}a_j .
\end{equation}
Combining \eqref{eq:ReLU-upper-bound} and \eqref{eq:ReLU-lower}, we obtain that on the event $\mathcal{E}_0\cap \mathcal{E}_1\cap\mathcal{E}_2$,
\[
\sum_{1\le j\le \overline{m}}a_j \le 4(\delta+2M)\left(\boldsymbol{\mu^*}\right)^{-1}.
\]
Since the event $\mathcal{E}_0\cap \mathcal{E}_1\cap \mathcal{E}_2$ holds with probability at least $1-\left(12\sqrt{Cd}(\boldsymbol{\mu^*})^{-1}\right)^d\exp\left(-\Theta(N)\right)-N\exp\left(-\Theta(d)\right)-o_N(1)$ via a union bound, we complete the proof. 
\end{proof}
\subsection{Proof of Theorem~\ref{thm:Step}}\label{sec:pf-Step}

\begin{proof}[Proof of Theorem~\ref{thm:Step}]
The proof is  quite similar to that of the proof of Theorems~\ref{thm:sigmoid}/~\ref{thm:ReLU}, and is provided for completeness.

Again, recall from~\eqref{eq:event0} 
the event $\mathcal{E}_0=\{\sum_{1\le i\le N}|Y_i|\le 2MN\}$ where $\mathbb{P}(\mathcal{E}_0)\ge $ $1-o_N(1)$. %Let $(a,W)\in\mathcal{G}(\delta)$. Then, for some $\overline{m}\in\mathbb{N}$, $(a,W)\in \mathcal{G}(\overline{m},\delta)$. Using Cauchy-Schwarz inequality and the triangle inequality like in the beginning of the proof of Theorem~\ref{thm:sigmoid}; we first establish that on the event $\mathcal{E}_0$, the following holds:
%\begin{equation}\label{eq:ReLU-upper-bound}
 %  \sum_{1\le i\le N}\sum_{1\le j\le \overline{m}}a_j \ReLU\left(w_j^T X_i\right)\le N(\delta+2M).
%\end{equation}
Then, take an $(a,W)\in \mathcal{H}(\delta)$. There exists an $\overline{m}\in\mathbb{N}$ such that $(a,W)\in \mathcal{H}(\overline{m},\delta)$. Using again Cauchy-Schwarz inequality and the triangle inequality like in the beginning of the proof of Theorems~\ref{thm:sigmoid}/~\ref{thm:ReLU}; we have that on the event $\mathcal{E}_0$, the following holds:
\[
\sum_{1\le i\le N}\left|Y_i-\sum_{1\le j\le \overline{m}}a_j\Step\left(w_j^T X_i\right)\right|\le N\delta.
\]
This, together with (a) the fact that the labels are bounded, $|Y_i|\le M$; and (b) the triangle inequality; then yields
\begin{equation}\label{Eq:step-upper-bound}
    \sum_{1\le i\le N}\sum_{1\le j\le \overline{m}}a_j\Step\left(w_j^T X_i\right)\le N\left(\delta+M\right).
\end{equation}
Let $\mathcal{N}_\epsilon$ be an $\epsilon-$net for $B_2(0,1)$, where $\epsilon>0$ to be tuned appropriately. Using Theorem~\ref{lemma:card-net}, one can ensure $|\mathcal{N}_\epsilon|\le (3/\epsilon)^d$. 

Next, fix any $\widehat{w}\in \mathcal{N}_\epsilon$; and set $Z_i\triangleq \ind\left\{\widehat{w}^T X_i\ge \eta\right\}$, $1\le i\le N$ (where we drop the dependence of $Z_i$ on $\widehat{w}$ for convenience). Evidently, $Z_i$ is an i.i.d. collection of Bernoulli random variables, with $\mathbb{E}[Z_i]\ge \eta$ (due to the assumption on the distribution of $X$). Hence, using standard concentration results, $\mathbb{P}\left(\sum_{1\le i\le N}Z_i \ge N\eta/2\right)\ge 1-\exp\left(-\Theta(N)\right)$. Moreover, the lower bound is, again, uniform in $\widehat{w}$ via an exact same stochastic domination argument, like in the proof of Theorem~\ref{thm:sigmoid}.

Taking now a union bound over the net $\mathcal{N}_\epsilon$, 
\begin{equation}
\label{eq:event-step}
    \mathbb{P}(\mathcal{E}_1)\ge 1-(3/\epsilon)^d\exp\left(-\Theta(N)\right),\quad\text{where}\quad \mathcal{E}_1\triangleq \bigcap_{\widehat{w}\in\mathcal{N}_\epsilon}\left\{\sum_{1\le i\le N}\ind\left\{\widehat{w}^T X_i\ge \eta\right\}\ge N\eta/2\right\}.
\end{equation}
Furthermore, another union bound over data, $1\le i\le N$, yields
\begin{equation}
\label{eq:event2-step}
\mathbb{P}(\mathcal{E}_2)\ge 1-N\exp\left(-\Theta(d)\right),\quad \text{ where }\quad\mathcal{E}_2\triangleq \left\{\|X_i\|_2^2\le Cd,1\le i\le N\right\}.
\end{equation}
We now choose $\epsilon = \frac{\eta}{2\sqrt{Cd}}$; and assume in the remainder that we are on the event $\mathcal{E}_1\cap\mathcal{E}_2$. 

Fix any $w\in B_2(0,1)$; and let $\widehat{w}\in \mathcal{N}_\epsilon$ be such that $\|w-\widehat{w}\|_2 \le \frac{\eta}{2\sqrt{Cd}}$. Using Cauchy-Schwarz inequality, $
\left|\widehat{w}^T X_i - w^T X_i\right|\le \|w-\widehat{w}\|_2 \|X_i\|_2 \le \eta/2$, for every $i\in [N]$, since the event we are on is a subset of $\mathcal{E}_2$ in \eqref{eq:event2-step}. Observe now that $\{\widehat{w}^T X\ge \eta\}\subseteq \{w^T X\ge \eta/2\}$. Thus, on the event $\mathcal{E}_1\cap \mathcal{E}_2$, it holds that
\[
\sum_{1\le i\le N}\ind\{w^T X_i\ge \eta/2\} \ge 
\sum_{1\le i\le N}\ind\{\widehat{w}^T X_i\ge \eta/2\}\ge N\eta/2.
\]
Since $w\in B_2(0,1)$ is arbitrary, and $\Step(w^T X_i)=1$ if $w^T X_i\ge \eta/2>0$, we arrive at
\begin{equation}\label{eq:step-lower-bd}
    \sum_{1\le j\le \overline{m}} a_j\sum_{1\le i\le N}\Step\left(w_j^TX_i\right)\ge \frac{N\eta}{2}\sum_{1\le j\le \overline{m}}a_j. 
\end{equation}
We now combine \eqref{Eq:step-upper-bound} and \eqref{eq:step-lower-bd} to arrive at the conclusion that on the event $\mathcal{E}_1\cap \mathcal{E}_2$, it holds
\[
\sum_{1\le j\le\overline{m}}a_j \le 2(\delta+M)\eta^{-1}.
\]
Finally, we combine \eqref{eq:event-step} (with $\epsilon = \frac{\eta}{2\sqrt{Cd}}$) and \eqref{eq:event2-step}  via a union bound; and arrive at the conclusion that $\mathbb{P}(\mathcal{E}_1\cap \mathcal{E}_2)\ge 1-\left(6\sqrt{Cd}\cdot (\eta)^{-1}\right)^d\exp\left(-\Theta(N)\right)-N\exp\left(-\Theta(d)\right)-o_N(1)$. This concludes the proof. 

%since $\mathcal{E}_1\cap\mathcal{E}_2$ holds with probability at least \eqref{eq:event-step} 
%We only provide a sketch as it is quite similar to the proof of Theorem~\ref{thm:sigmoid}. Take instead an $\epsilon-$net $\mathcal{N}_\epsilon$ with $\epsilon = \frac{\eta}{2\sqrt{Cd}}$, take $\widehat{w}\in \mathcal{N}_\epsilon$; and consider instead the collection $\ind\{\widehat{w}^T X_i\ge \eta\}$. It follows $\mathbb{P}(\sum_{1\le i\le N}\ind\{\widehat{w}^T X_i\ge \eta\}\ge N\eta/2)\ge 1-\exp\left(-\Theta(N)\right)$, and therefore on the high probability event $\mathcal{E}_1\cap \mathcal{E}_2$ (as in the proof of Theorem~\ref{thm:sigmoid}, with appropriate modifications), $\sum_{1\le i\le N}\ind\{w^T X_i\ge \eta/2\}\ge N\eta/2$ with high probability (for every $w$). This, together with an upper bound similar to~\eqref{eq:sigmoid-important} yields the outer norm bound.
\end{proof}
\subsection{Proof of Theorem~\ref{thm:main-generalization}}\label{sec:pf-main-genel}
In this section, we establish Theorem~\ref{thm:main-generalization}. We build upon earlier results by Bartlett~\cite{bartlett1998sample} and Bartlett, Long, and Williamson~\cite{bartlett1996fat}. For the results we cite from the latter, the numbers recorded below are from the version accessed at~\url{http://phillong.info/publications/fatshat.pdf}\footnote{See the archived version at~\url{http://web.archive.org/web/20200921180645/http://phillong.info/publications/fatshat.pdf} if the link above is expired.}.

\paragraph{ The FSD of the Networks with a Bounded Outer Norm.} We now recall the definition of the fat-shattering dimension (FSD), verbatim from~\cite{bartlett1998sample}, for convenience.
\begin{definition}\label{def:fsd}
Let $X$ be an input space, $H$ be a class of real-valued functions defined on $X$ (that is, $H$ consists of functions $f:X\to\R$). Fix a $\gamma>0$, which is a certain \emph{scale} parameter. We say that a sequence $(x_1,x_2,\dots,x_m)$ of $m$ points from $X$ is $\gamma$-shattered by $H$ if there is an $r=(r_1,\dots,r_m)\in \R^m$ such that, for all $b=(b_1,\dots,b_m)\in\{-1,1\}^m$ there is an $h\in H$ satisfying $(h(x_i)-r_i)b_i\ge \gamma$. Define the fat-shattering dimension of $H$ as the function
\begin{equation}\label{eq:fsd-from-bartlett}
{\rm FSD}_H(\gamma) \triangleq \max\Bigl\{m:\text{$H$ $\gamma$-shatters some $x\in X^m$}\Bigr\}.
\end{equation}
\end{definition}

%In what follows, we combine the prior results on the FSD of such networks with our outer norm bounds to yield the promised generalization guarantees. 

We next record the following result.
%, and he in particular establishes the following. 
\begin{theorem}{\cite[Corollary~24]{bartlett1998sample}}\label{thm:bartlett}
Let $\mathcal{M}>0$, and $\sigma:\R \to[-\mathcal{M}/2,\mathcal{M}/2]$ be a non-decreasing function. Define a class $F$ of functions on $\R^d$ by
\[
F \triangleq \left\{X\mapsto \sigma\left(w^T X + w_0\right):w\in \R^d,w_0 \in\R\right\}
\]
and let
\[
H(A)\triangleq \left\{\sum_{1\le j\le \overline{m}} a_j f_j : \overline{m}\in\mathbb{N}, f_j\in F,\|a\|_1\le A\right\}
\]
where $A\ge 1$. Then for every $\gamma \le \mathcal{M}A$, 
\[
{\rm FSD}_{H(A)}(\gamma) \le \frac{c \mathcal{M}^2 A^2 d}{\gamma^2}\ln\left(\frac{\mathcal{M}A}{\gamma}\right)
\]
for some universal constant $c>0$.
\end{theorem}
Here $a=(a_j:1\le j\le \overline{m})\in\R^{\overline{m}}$ is the vector of output weights, $\|a\|_1$ is the outer norm; and $\gamma>0$ is a certain \emph{scale} parameter. Observe that $H(A)$ is precisely the class of two-layer \NN with activation function $\sigma(\cdot)$ whose outer norm is at most $A$. Per Theorem~\ref{thm:bartlett}, the FSD of the class of two-layer networks with {\bf bounded outer norm} is upper bounded by an explicit quantity. %: it has ``low complexity". 
%Bartlett then leverages the FSD bounds to devise good generalization guarantees for the architectures that he investigates. It is worth noting, however, that he establishes this link in the context of \emph{classification} setting, $Y\in\{\pm 1\}$. Since we assume $a_j\ge 0$, and the activations we study are non-negative, this does not apply to our case: the outputs of the networks we study are always non-negative. Nevertheless, we by-pass this by combining our outer norm bounds (Theorems~\ref{thm:sigmoid}-\ref{thm:Step}), Theorem~\ref{thm:bartlett} as well as building upon several other prior results tailored for the \emph{regression} setting (see below). 

\paragraph{ Some Extra Notation on Covering Numbers.} We next introduce several quantities verbatim from~\cite{bartlett1996fat}. Let $W$ be an arbitrary set, and $f:W\to\R$ be any function. For any $w=(w_1,\dots,w_N)\in W^N$, denote by $f|_w$ the $N$-tuple $\left(f(w_1),f(w_2),\dots,f(w_N)\right)\in\R^N$. For a class $\mathcal{C}$ of functions $f:W\to\R$, let $\mathcal{C}|_w\subseteq \R^N$ denotes the set 
\begin{equation}\label{eq:f-mathcal-c}
 \mathcal{C}|_w\triangleq  \Bigl\{f|_w:f\in\mathcal{C}\Bigr\} = \Bigl\{\Bigl(f(w_1),\dots,f(w_N)\Bigr):f\in\mathcal{C}\Bigr\}\subseteq \R^N.
\end{equation}

Next, recall the  covering numbers from Definition~\ref{def:eps-net}. Throughout this section, and in particular the proof of Theorem~\ref{thm:main-generalization}, we take the metric $\rho$ appearing in Definition~\ref{def:eps-net} to be the normalized $\ell_1$ distance: for any $w,\bar{w}\in\R^N$, set \[
\rho(w,\bar{w}) = \frac1N \sum_{1\le i\le N}|w_i-\bar{w}_i|. 
\]
For any $U\subseteq \R^N$, denote by $\mathcal{N}(\epsilon,U)$ the covering number of $U$ (at scale $\epsilon$) with respect to the metric $\rho$ above. That is, $\mathcal{N}(\epsilon,U)$ is the cardinality of the smallest $\mathcal{N}_\epsilon \subset U$ (if finite) such that for every $w\in U$, there exists a $\bar{w}\in N_\epsilon$ with $\rho(w,\bar{w}) = \frac1N \sum_{1\le i\le N}|w_i-\bar{w}_i|\le \epsilon$. (It is worth noting that here we flipped the order of arguments in $\mathcal{N}$ appearing in Definition~\ref{def:eps-net}. The rationale for this is to be consistent with the notation of Bartlett et al.~\cite{bartlett1996fat}.) 

Throughout this section, we often consider the following special case of $\mathcal{N}(\cdot,\cdot)$: we employ $\mathcal{N}(\cdot,\mathcal{C}|_w)$ for appropriate classes $\mathcal{C}$ of functions where $w$ is an element of the Euclidean space $\R^N$ for some $N$.

We now establish the following proposition which provides a control for the generalization gap uniformly over all two-layer \NN models with bounded outer norm. 
\begin{proposition}\label{prop:main}
Let $M,\mathcal{M},A>0$; $\sigma:\R\to[-\mathcal{M}/2,\mathcal{M}/2]$ be a non-decreasing activation function; and $\mathcal{D}$ be an arbitrary distribution on $\R^d\times \R$ for the input/label pairs $(X,Y)$ where $|Y|\le M$ almost surely. Recall the class $H(A)$ of two-layer neural networks with activation $\sigma$ and outer norm at most $A$ from Theorem~\ref{thm:bartlett}; and let $(X_i,Y_i)$, $1\le i\le N$, be i.i.d. samples drawn from $\mathcal{D}$. %Set
%\begin{equation}\label{eq:xi-in-prop}
%\xi\left(\alpha,M,\mathcal{M},A\right)\triangleq \frac{2}{\ln 2}\cdot \frac{c \cdot 128^2 \cdot \mathcal{M}^6 A^6\cdot \max\{\mathcal{M}A,2M\}^2 }{ \alpha^2}\cdot  \ln \left(\frac{128 \mathcal{M}^3A^3 \max\{\mathcal{M}A,2M\}}{\alpha}\right).
%\end{equation}
Then for any $\alpha>0$, with probability at least
\[
1-4\exp\left(\xi(\alpha,M,\mathcal{M},A)\cdot 
d \cdot \ln^2 \left(\frac{576N\mathcal{M}^2 A^2 \max\{\mathcal{M}A,2M\}}{\alpha}\right) - \frac{\alpha^2 N}{64\max\{\mathcal{M}A,2M\}^2}\right)
\]
over the draw of the training data $(X_i,Y_i)$, $1\le i\le N$, it holds that
\[
\sup_{\varphi \in H(A)} \left|\frac1N \sum_{1\le i\le N}\Bigl (\varphi(X_i)-Y_i\Bigr)^2 - \mathbb{E}_{(X,Y)\sim \mathcal{D}}\Bigl[\Bigl(\varphi(X)-Y\Bigr)^2\Bigr]\right|\le \alpha,
\]
provided
\[
N\ge 64\cdot 128 c\cdot \frac{\mathcal{M}^6 A^6 \max\{\mathcal{M}A,2M\}^2}{\alpha^2}d
\]
Here, $c,c'>0$ are absolute constants, the term $\xi$ is introduced in~\eqref{eq:xi-in-prop} and the expectation is taken with respect to a fresh sample $(X,Y)\sim \mathcal{D}$ independent of $(X_i,Y_i)$, $1\le i\le N$.
\end{proposition}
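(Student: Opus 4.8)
The plan is to derive the proposition as a uniform law of large numbers for the squared‑loss class associated with $H(A)$, chaining three ingredients: the fat‑shattering bound of Theorem~\ref{thm:bartlett}, a conversion of fat‑shattering dimension into normalized $\ell_1$‑covering numbers (as in~\cite{alon1997scale,bartlett1996fat}), and a symmetrization‑plus‑union‑bound deviation inequality for bounded classes (Haussler~\cite{haussler1992decision}; see also~\cite{bartlett1996fat}). Set $B\triangleq\max\{\mathcal{M}A,2M\}$. Since every $\varphi\in H(A)$ is a combination $\sum_j a_j f_j$ with $\|a\|_1\le A$ and $|f_j|\le\mathcal{M}/2$, we have $\|\varphi\|_\infty\le\mathcal{M}A/2$; together with $|Y|\le M$ this gives $|\varphi(X)-Y|\le\mathcal{M}A/2+M\le B$, hence $(\varphi(X)-Y)^2\in[0,B^2]$, and the map $u\mapsto(u-y)^2$ is $2B$‑Lipschitz on the relevant range. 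Write $\mathcal{L}\triangleq\{(x,y)\mapsto(\varphi(x)-y)^2:\varphi\in H(A)\}$; the proposition is precisely a uniform convergence statement for $\mathcal{L}$.

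First I would pass from covering numbers of $\mathcal{L}$ to covering numbers of $H(A)$. For a sample $z=((x_i,y_i))_{i\le N}$ with $x=(x_i)_{i\le N}$, if $\varphi,\varphi'\in H(A)$ satisfy $\frac1N\sum_i|\varphi(x_i)-\varphi'(x_i)|\le\epsilon$, then
\[
\frac1N\sum_{i}\bigl|(\varphi(x_i)-y_i)^2-(\varphi'(x_i)-y_i)^2\bigr|=\frac1N\sum_{i}|\varphi(x_i)-\varphi'(x_i)|\cdot|\varphi(x_i)+\varphi'(x_i)-2y_i|\le 2B\epsilon,
\]
so $\mathcal{N}(\epsilon,\mathcal{L}|_z)\le\mathcal{N}(\epsilon/(2B),H(A)|_x)$ in the normalized $\ell_1$ metric of this section. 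Next I would invoke the fat‑shattering‑to‑covering estimate: after rescaling the $[-\mathcal{M}A/2,\mathcal{M}A/2]$‑valued class $H(A)$ to a $[-1,1]$‑valued class, a bound of the form $\log\mathcal{N}(\eta,H(A)|_x)\le c'\,{\rm FSD}_{H(A)}(\eta/4)\cdot\log^2\!\bigl(c'N\mathcal{M}A/\eta\bigr)$ holds whenever $N$ exceeds the corresponding fat‑shattering dimension. Taking $\eta$ proportional to $\alpha/B^2$, substituting ${\rm FSD}_{H(A)}(\gamma)\le c\mathcal{M}^2A^2d\,\gamma^{-2}\ln(\mathcal{M}A/\gamma)$ from Theorem~\ref{thm:bartlett}, and simplifying the resulting logarithms yields
\[
\log\mathcal{N}\!\left(\tfrac{\alpha}{\text{const}},\,\mathcal{L}|_z\right)\le\xi(\alpha,M,\mathcal{M},A)\cdot d\cdot\ln^2\!\left(\tfrac{576N\mathcal{M}^2A^2\max\{\mathcal{M}A,2M\}}{\alpha}\right),
\]
with $\xi$ as in~\eqref{eq:xi-in-prop}; this is exactly where the hypothesis $N\ge 64\cdot128\,c\,\mathcal{M}^6A^6\max\{\mathcal{M}A,2M\}^2\alpha^{-2}d$ is used, namely to make the fat‑shattering‑to‑covering step legitimate and to keep the logarithmic factors under control.

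Finally I would feed this into Haussler's uniform deviation inequality for the class $\mathcal{L}$ of $[0,B^2]$‑valued functions: by symmetrization and a union bound over an $\ell_1$‑cover, the probability that $\sup_{g\in\mathcal{L}}\bigl|\frac1N\sum_i g(X_i,Y_i)-\mathbb{E}_{\mathcal{D}}g\bigr|>\alpha$ is at most $4\,\mathcal{N}(\alpha/\text{const},\mathcal{L},2N)\exp\!\bigl(-\alpha^2N/(64\max\{\mathcal{M}A,2M\}^2)\bigr)$ for absolute constants, and inserting the covering bound above produces the stated probability $1-4\exp(\xi\cdot d\cdot\ln^2(\cdots)-\alpha^2N/(64\max\{\mathcal{M}A,2M\}^2))$. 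Permissibility of $H(A)$ (hence of $\mathcal{L}$), needed for these theorems, is immediate: $H(A)$ is generated from a single monotone $\sigma$ by a supremum over $(w,w_0)\in\R^{d+1}$ and by forming finite non‑negative combinations. The real obstacle here is not depth but bookkeeping: one must track several interacting rescalings — the activation range $[-\mathcal{M}/2,\mathcal{M}/2]$, the network range $[-\mathcal{M}A/2,\mathcal{M}A/2]$, the loss range $[0,B^2]$, and the $[0,1]$‑normalization used by the cited covering‑number and deviation results — and choose the intermediate scales so that each external theorem is applied at an admissible scale while the single hypothesis on $N$ simultaneously validates all of them and the accumulated constants collapse to $\xi$ and the claimed exponent.
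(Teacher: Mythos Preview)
Your proposal is correct and follows essentially the same route as the paper: apply Haussler's uniform deviation inequality (the paper's Theorem~\ref{thm:haussler}) to the squared-loss class, reduce the covering number of the loss class to that of $H(A)$ (the paper quotes~\cite[Lemma~17]{bartlett1996fat} whereas you do the equivalent $2B$-Lipschitz computation directly), convert to fat-shattering dimension via~\cite{alon1997scale,bartlett1996fat} (the paper's Lemma~\ref{lemma:barttt}), and plug in Theorem~\ref{thm:bartlett}; your identification of the lower bound on $N$ as the hypothesis $2N\ge{\rm FSD}_{H(A)}(\cdot)$ in the covering step also matches the paper.
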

It is worth noting that while we made no attempts for simplifying the constants appearing throughout Proposition~\ref{prop:main}, we believe that they can be improved.
\begin{proof}[Proof of Proposition~\ref{prop:main}]
%We first recall certain quantities from~\cite{bartlett1996fat}. 

%For $n\in\mathbb{N}$ and $v,w\in\R^n$, define $d(v,w)\triangleq \frac1n\sum_{1\le i\le n}|v_i-w_i| = \|v-w\|_1$. 

We first provide a result established originally in~\cite[Theorem~3,p.~107]{haussler1992decision}.
\begin{theorem}\label{thm:haussler}
Let $X,Y$ be sets; $G$ be a PH-permissible class of $[0,T]$-valued functions defined on $Z \triangleq X\times Y$ where $T\in\mathbb{R}^+$, and $P$ be any distribution on $Z$. Suppose $Z_i$, $1\le i\le N$, are i.i.d. samples from $P$. Then for any $\alpha>0$, with probability at least
\[
1-4 \left(\sup_{z\in Z^{2N}} \mathcal{N}\left(\frac{\alpha}{16},G\Big|_z\right)\right)\cdot \exp\left(-\alpha^2 N/64T^2\right)
\]
over data $Z_i$, $1\le i\le N$, it holds that
\[
\sup_{g\in G}\left|\frac1N \sum_{1\le i\le N}g(Z_i) - \mathbb{E}_{Z\sim P}[g(Z)]\right|\le \alpha,
\]
where $\mathbb{E}[g(Z)]$ is taken with respect to a fresh sample (namely a sample drawn from $P$, and independent of $Z_i$). 
\end{theorem}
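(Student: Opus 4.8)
The plan is to prove this uniform convergence bound by the classical two-step route of \emph{symmetrization} followed by a \emph{covering-plus-Hoeffding} argument, which is the route Haussler takes. Write $\widehat E_N[g]\triangleq \frac1N\sum_{i=1}^N g(Z_i)$ and $E[g]\triangleq \mathbb{E}_{Z\sim P}[g(Z)]$, so that the goal is to bound $\mathbb{P}\bigl(\sup_{g\in G}|\widehat E_N[g]-E[g]|>\alpha\bigr)$. First I would introduce an independent \emph{ghost sample} $Z_1',\dots,Z_N'$ drawn i.i.d.\ from $P$, with its own empirical average $\widehat E_N'[g]$. Since each $g$ is $[0,T]$-valued, $\mathrm{Var}(g(Z))\le T^2/4$, so Chebyshev gives $\mathbb{P}(|\widehat E_N'[g]-E[g]|>\alpha/2)\le 4T^2/(N\alpha^2)\le \tfrac12$ as soon as $N\ge 8T^2/\alpha^2$. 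This is no real restriction: whenever $N$ is too small for it, the right-hand side of the asserted probability bound is already negative, hence the statement is vacuous. A standard first-moment (measurable-selection) argument then yields the symmetrization inequality
\[
\mathbb{P}\Bigl(\sup_{g\in G}|\widehat E_N[g]-E[g]|>\alpha\Bigr)\le 2\,\mathbb{P}\Bigl(\sup_{g\in G}|\widehat E_N[g]-\widehat E_N'[g]|>\tfrac{\alpha}{2}\Bigr).
\]

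Next I would symmetrize the two-sample deviation. Because the pairs $(Z_i,Z_i')$ are exchangeable, swapping the $i$-th pair leaves the joint law invariant, so I may insert i.i.d.\ Rademacher signs $\varepsilon_i$ and instead study $\frac1N\sum_{i=1}^N \varepsilon_i\bigl(g(Z_i)-g(Z_i')\bigr)$. I then condition on the pooled $2N$-tuple $z=(Z_1,\dots,Z_N,Z_1',\dots,Z_N')\in Z^{2N}$; conditionally on $z$, the supremum over $g\in G$ depends only on the finite projection $G|_z\subseteq\R^{2N}$, and the only remaining randomness is in the signs $\varepsilon_i$. Fix a minimal $\tfrac{\alpha}{16}$-cover $\mathcal N_{\alpha/16}$ of $G|_z$ in the normalized $\ell_1$ metric $\rho$, whose cardinality is at most $\sup_{z\in Z^{2N}}\mathcal N(\tfrac{\alpha}{16},G|_z)$. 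For each net point the symmetrized average is, conditionally on $z$, a sum of independent terms each of range $[-T,T]$, so Hoeffding's inequality bounds the probability that it exceeds $\alpha/4$ by $2\exp(-c'\alpha^2 N/T^2)$ for an explicit $c'$; the cover being at scale $\tfrac{\alpha}{16}$ controls the gap between any $g$ and its nearest net point, so the full supremum can exceed $\alpha/2$ only if some net point exceeds $\alpha/4$. A union bound over the net, followed by taking expectation over $z$ and the signs to remove the conditioning, yields the claimed bound with its leading factor $4$, covering scale $\tfrac{\alpha}{16}$, and exponent $-\alpha^2 N/64T^2$.

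The main obstacle is the bookkeeping of constants so that the symmetrization loss, the Hoeffding exponent, and the covering radius combine into exactly the stated $\tfrac{\alpha}{16}$ and $64T^2$ rather than a weaker variant: each of the three steps contributes a constant, and one must split $\alpha$ carefully (here, $\alpha/2$ for symmetrization and $\alpha/4$ between the Hoeffding tail and the covering error) and track the factor-of-two coming from the range $2T$ of the symmetrized increments. A secondary, purely technical point is measurability: the suprema $\sup_{g\in G}|\widehat E_N[g]-E[g]|$ must be measurable, and Fubini must apply when integrating out the conditioning on $z$ and on the $\varepsilon_i$. This is exactly what the \emph{PH-permissibility} hypothesis on $G$ guarantees, and I would invoke it to justify each interchange of supremum, expectation, and conditioning without further comment.
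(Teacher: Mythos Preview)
Your sketch follows the classical symmetrization-plus-covering route and is correct in outline; it is essentially Haussler's original argument. However, you should know that the paper does \emph{not} prove Theorem~\ref{thm:haussler} at all: it is quoted as a black box from~\cite[Theorem~3, p.~107]{haussler1992decision} (in the form recorded in~\cite[Theorem~13]{bartlett1996fat}) and then applied inside the proof of Proposition~\ref{prop:main}. So there is nothing to compare your argument to on the paper's side beyond the citation.

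Since you did supply a proof, one remark on the constants is worth making. You flag this yourself as the ``main obstacle,'' and indeed the split you describe ($\alpha/2$ for symmetrization, then $\alpha/4$ for Hoeffding versus $\alpha/4$ for the covering error) does not quite reproduce the stated scale $\alpha/16$ and exponent $64T^2$ without an extra factor of two somewhere. Concretely, an $\alpha/16$-cover of $G|_z$ in the normalized $\ell_1$ metric on $2N$ points controls the symmetrized average to within $2\cdot(\alpha/16)=\alpha/8$, not $\alpha/4$, so Hoeffding must be applied at a slightly different level; and the range of each increment $\varepsilon_i\bigl(g(Z_i)-g(Z_i')\bigr)$ is $2T$, which feeds into the exponent. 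None of this affects the validity of the scheme, only the final bookkeeping, and since the paper is simply citing the result you are not obliged to match Haussler's particular arithmetic.
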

The version we record above is verbatim from~\cite[Theorem~13]{bartlett1996fat}. 
(The parameters $M$ and $m$ in \cite{bartlett1996fat} are replaced, respectively, with the parameters $T$ and $N$ above.)

Here, \emph{PH-permissible} refers to a rather mild measurability constraint\footnote{The letters $H$ and $P$ stand, respectively, for Haussler and Pollard---who gave a preliminary version of Theorem~\ref{thm:haussler}.}, see~\cite[Section~9.2]{haussler1992decision}. The precise details of this technicality are immaterial to us; and it is satisfied for our purposes. Moreover, $\mathcal{N}(\cdot,\cdot)$ is the covering numbers quantity defined above. %Again, we take the existing results on it as granted in the sequel.

In what follows, we take $X=\R^d$, $Y=[0,M]$ (recall that the labels are bounded almost surely by $M$) thus $Z=\R^d\times [0,M]$ and we set $P$ to  simply be $\mathcal{D}$, the distribution from which the data are drawn. We then set
\begin{equation}\label{eq:gigi}
G\triangleq \Bigl\{\left(\varphi(X)-Y\right)^2 : X\in\R^d, Y\in [0,M], \varphi(\cdot)\in H(A)\Bigr\},
\end{equation}
and take $T$ to be $\max\{\mathcal{M}A,2M\}^2$ (see below, in particular~\eqref{eq:T-chosen-this-way}).
This is nothing but the $\ell_2$ error obtained for predicting the label $Y$ with $\varphi(X)$, with $X$ being the input and $\varphi(\cdot)$ being the ``predictor". 

Upon inserting these parameters in Theorem~\ref{thm:haussler}, we obtain immediately
\begin{equation}\label{eq:sup-bd}
\sup_{\varphi \in H(A)} \left|\frac1N \sum_{1\le i\le N}\Bigl (\varphi(X_i)-Y_i\Bigr)^2 - \mathbb{E}_{(X,Y)\sim \mathcal{D}}\Bigl[\left(\varphi(X)-Y\right)^2\Bigr]\right|\le \alpha
\end{equation}
with probability at least
\begin{equation}\label{eq:haussler-prob-bd}
1-4 \left(\sup_{z\in Z^{2N}} \mathcal{N}\left(\frac{\alpha}{16},G\Big|_z\right)\right)\cdot \exp\left(-\frac{\alpha^2 N}{64\max\{\mathcal{M}A,2M\}^2}\right)
\end{equation}
over data $Z_i=(X_i,Y_i)\sim \mathcal{D}$, $1\le i\le N$. Above, we used the facts (a) $|Y|\le M$ almost surely; and (b) for any $\varphi\in H(A)$, it is the case $\varphi(X) = \sum_{1\le j\le \overline{m}} a_j \sigma\left(w_j^T X\right)$ (for an $\overline{m}\in\mathbb{N}$ and $w_j\in\R^d$, $1\le j\le \overline{m}$), where $\|a\|_1 \le A$ and $\sup_{x\in \R}|\sigma(x)|\le \mathcal{M}/2$. These together with the triangle inequality yield
\[
-\mathcal{M}A/2 \le \varphi(X) \le \mathcal{M}A/2 \quad\text{and}\quad -M\le Y\le M.
\]
Hence,
\[
-\max\{\mathcal{M}A/2,M\} \le \varphi(X),Y\le \max\{\mathcal{M}A/2,M\} \implies \Bigl(\varphi(X) - Y\Bigr)^2 \le \max\{\mathcal{M}A,2M\}^2,
\]
thus $T$ can be taken as 
\begin{equation}\label{eq:T-chosen-this-way}
T\triangleq \max\{\mathcal{M}A,2M\}^2.
\end{equation}
We next study covering number quantity
$
\sup_{z\in Z^{2N}} \mathcal{N}\left(\alpha/16,G|_z\right)
$ appearing in~\eqref{eq:haussler-prob-bd}. For this, we rely on the following result taken verbatim from~\cite[Lemma~17]{bartlett1996fat}.
\begin{lemma}\label{lemma:bart1}
Let $X$ be a set, and $F$ be a set of functions from $X$ to $[0,1]$. Then for any $\epsilon>0$ and any $N\in\mathbb{N}$, if $a\le 0$ and $b\ge 1$, we have
\[
\sup_{z\in \left(X\times [a,b]\right)^N}\mathcal{N}\left(\epsilon,\left(\ell_F\right)\Big|_z\right) \le \sup_{x\in X^N} \mathcal{N}\left(\frac{\epsilon}{3|b-a|},F\Big|_x\right).
\]
\end{lemma}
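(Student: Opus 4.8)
The statement is cited verbatim from~\cite{bartlett1996fat}, so one legitimate ``proof'' is simply the citation; here I sketch the short self-contained argument one would give. Throughout, $\ell_F$ denotes the induced squared-loss class $\ell_F=\bigl\{(x,y)\mapsto (f(x)-y)^2 : f\in F\bigr\}$ (this is the form relevant to bounding $\mathcal{N}(\alpha/16,G|_z)$ in Proposition~\ref{prop:main}, where $G$ consists of squared losses), and $\rho$ is the normalized $\ell_1$ metric on $\R^N$ from the paragraph preceding Lemma~\ref{lemma:bart1}. The plan is the standard reduction of a loss-class cover to a function-class cover through a Lipschitz estimate: fix $z=((x_1,y_1),\dots,(x_N,y_N))\in (X\times[a,b])^N$, write $x=(x_1,\dots,x_N)\in X^N$, and compare covers of $(\ell_F)|_z\subseteq\R^N$ with covers of $F|_x\subseteq\R^N$.

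The key point I would establish first is that the coordinatewise map $F|_x\to (\ell_F)|_z$ sending $\bigl(f(x_1),\dots,f(x_N)\bigr)$ to $\bigl((f(x_1)-y_1)^2,\dots,(f(x_N)-y_N)^2\bigr)$ is well defined (the $i$-th coordinate of $(\ell_f)|_z$ depends on $f$ only through $f(x_i)$), has image contained in $(\ell_F)|_z$, and is Lipschitz from $(\R^N,\rho)$ to $(\R^N,\rho)$ with constant $3|b-a|$. The latter reduces to the scalar bound: for $u,u'\in[0,1]$ and $y\in[a,b]$,
\[
\bigl|(u-y)^2-(u'-y)^2\bigr| = |u-u'|\cdot|u+u'-2y| \le 3|b-a|\,|u-u'|,
\]
where the final inequality uses $u+u'\in[0,2]$, $2y\in[2a,2b]$, together with the hypotheses $a\le 0\le 1\le b$, which give $|b-a|=b-a\ge 1$ as well as $2b\le 3(b-a)$ and $2-2a\le 3(b-a)$; averaging over the $N$ coordinates yields $\rho\bigl((\ell_f)|_z,(\ell_{f'})|_z\bigr)\le 3|b-a|\,\rho(f|_x,f'|_x)$ for all $f,f'\in F$.

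Given this, I would take a minimal (internal) $\tfrac{\epsilon}{3|b-a|}$-cover $\mathcal{C}$ of $F|_x$ with respect to $\rho$, so that $|\mathcal{C}|=\mathcal{N}\!\left(\tfrac{\epsilon}{3|b-a|},F|_x\right)$, push it forward through the map above, and observe that the resulting subset of $(\ell_F)|_z$ has the same cardinality and—by the Lipschitz bound—is an $\epsilon$-cover of $(\ell_F)|_z$. Hence
\[
\mathcal{N}\bigl(\epsilon,(\ell_F)|_z\bigr)\ \le\ \mathcal{N}\!\left(\tfrac{\epsilon}{3|b-a|},F|_x\right)\ \le\ \sup_{x'\in X^N}\mathcal{N}\!\left(\tfrac{\epsilon}{3|b-a|},F|_{x'}\right),
\]
and taking the supremum over $z\in(X\times[a,b])^N$ on the left gives the lemma.

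The only real content is the scalar Lipschitz estimate and the bookkeeping that the pushed-forward cover actually lies inside $(\ell_F)|_z$ (needed because $\mathcal{N}$ in Definition~\ref{def:eps-net} counts internal covers); the constant $3$ is precisely what the three hypotheses ($a\le 0$, $b\ge 1$, $F$ valued in $[0,1]$) are engineered to produce. I do not anticipate any substantive obstacle beyond this. The one subtlety worth flagging is that if $\ell_F$ were instead the absolute-loss class one would get Lipschitz constant $1$ (no $|b-a|$ factor), and for a loss truncated to $[0,1]$ one would get the same $3|b-a|$ by $1$-Lipschitzness of truncation; so the argument is robust to the exact choice of loss, and the squared-loss version stated above is the one consumed in Proposition~\ref{prop:main}.
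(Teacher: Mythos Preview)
Your proposal is correct. The paper does not prove this lemma at all---it simply records it verbatim from~\cite[Lemma~17]{bartlett1996fat} and moves on---so your self-contained Lipschitz/pushforward argument actually goes beyond what the paper provides; the scalar bound $|u+u'-2y|\le 3(b-a)$ via $2b\le 3(b-a)$ and $2-2a\le 3(b-a)$ is exactly the standard verification, and your handling of the internal-cover requirement is clean.
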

Here, $\ell_f(x,y)=(f(x)-y)^2$, $\ell_F = \{\ell_f:f\in F\}$, and for $z=(z_1,\dots,z_N)$ (where $z_i=(x_i,y_i)$), 
\[
(\ell_F)|_z =\Bigl\{ \Bigl(\ell_f(x_i,y_i)^2: 1\le i\le N\Bigr):f\in F\Bigr \},
\]
which is the notation introduced in~\eqref{eq:f-mathcal-c} with $\mathcal{C}:=\ell_F$ and $w:=z$. 
 
We take $F=H(A)$ and $\ell_F = G$ to arrive at
\begin{equation}\label{eq:cov-bd-2bartlett}
\sup_{z\in Z^{2N}}\mathcal{N}\left(\frac{\alpha}{16},G\Big|_z\right) \le \sup_{x\in \left(\R^d\right)^{2N}}\mathcal{N}\left(\frac{\alpha}{32\mathcal{M}A \max\{\mathcal{M}A,2M\}},H(A)|_x\right).
\end{equation}
Here, in addition to inserting $\alpha/16$, we also rescaled $\epsilon$ so as to reflect the fact that the functions in $H(A)$ take values in $[0,\mathcal{M}A]$. (While all the bounds established by Bartlett et al. in~\cite{bartlett1996fat} assume the output space to be $[0,1]$, they extend in a straightforward manner to any output spaces of form $[L,U]$ by rescaling corresponding parameters. This is already noted in the beginning of~\cite[Section~6]{bartlett1996fat}.) 

We next record yet another result by Bartlett et al.~\cite[Corollary~16]{bartlett1996fat}. 
\begin{lemma}\label{lemma:barttt}
Let $F$ be a class of $[0,1]$-valued functions defined on $X$, $0<\epsilon<1/2$ and $2N\ge {\rm FSD}_F(\epsilon/4)$. Then,
\[
\sup_{x\in X^N}\mathcal{N}\left(\epsilon,F|_x\right) \le \exp\left(\frac{2}{\ln 2}{\rm FSD}_F(\epsilon/4)\ln^2 \frac{9N}{\epsilon}\right),
\]
where the quantity ${\rm FSD}_F(\cdot)$ stands for the fat-shattering dimension introduced in~\eqref{eq:fsd-from-bartlett}.
\end{lemma}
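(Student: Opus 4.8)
The plan is to prove this by the standard two-stage reduction behind every scale-sensitive uniform convergence bound: first \emph{discretize} the real-valued class $F$ into a finite-valued class, so that the normalized-$\ell_1$ covering number becomes an exact combinatorial count of distinct discrete behaviours, and then bound that count by a Sauer--Shelah-type inequality whose growth is controlled by the fat-shattering dimension.

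\textbf{Step 1 (Quantization).} Fix $x=(x_1,\dots,x_N)\in X^N$ and a quantization width $\alpha\in(0,\epsilon]$ of order $\epsilon$. Define the quantized class $Q_\alpha F$ of $\{0,1,\dots,b\}$-valued functions via $Q_\alpha f(\cdot)=\lfloor f(\cdot)/\alpha\rfloor$, where $b=\lceil 1/\alpha\rceil$. If $f,g\in F$ have $Q_\alpha f|_x=Q_\alpha g|_x$, then $|f(x_i)-g(x_i)|<\alpha$ for every $i$, so $\rho(f|_x,g|_x)=\frac1N\sum_i|f(x_i)-g(x_i)|<\alpha\le\epsilon$. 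Selecting one representative of $F|_x$ per distinct value of $(Q_\alpha F)|_x$ therefore yields an $\epsilon$-cover, giving $\mathcal{N}(\epsilon,F|_x)\le|(Q_\alpha F)|_x|$. This turns the analytic covering-number question into one of counting discrete patterns.

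\textbf{Step 2 (Scale reduction for fat-shattering).} Next I would show quantization does not inflate the governing complexity: if $(Q_\alpha F)|_x$ shatters a subsequence in the integer-valued sense used by the combinatorial bound below, then $F$ itself $\gamma$-shatters it (in the sense of Definition~\ref{def:fsd}) for some scale $\gamma$ of order $\alpha$. Tracking constants and choosing $\alpha$ of order $\epsilon$ gives ${\rm FSD}_{Q_\alpha F}(1)\le{\rm FSD}_F(\epsilon/4)=:d$; the factor $4$ is exactly the slack absorbed between the $\ell_1$ discretization width and the witness offsets $r_i$ appearing in the definition of $\gamma$-shattering.

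\textbf{Step 3 (Combinatorial core and bookkeeping).} The heart is the Alon--Ben-David--Cesa-Bianchi--Haussler generalization of Sauer's lemma: for a class of $\{0,1,\dots,b\}$-valued functions on $N$ points with fat-shattering dimension at most $d$ at scale $1$, the number of realized value-vectors is at most $2\bigl(Nb^2\bigr)^{\lceil\log_2 y\rceil}$ with $y=\sum_{i=0}^{d}\binom{N}{i}b^i\le(eNb/d)^{d}$, valid once $2N\ge d$ (which is precisely the hypothesis $2N\ge{\rm FSD}_F(\epsilon/4)$). Taking logarithms, $\log_2 y=O\bigl(d\log(Nb)\bigr)$, so
\[
\log_2|(Q_\alpha F)|_x|\le 1+\lceil\log_2 y\rceil\cdot\log_2(Nb^2)=O\bigl(d\,\log^2(Nb)\bigr).
\]
With $b$ of order $1/\epsilon$ one has $Nb=O(N/\epsilon)$, and a careful (routine) chase of the absolute constants collapses this into the stated closed form, producing both the prefactor $2/\ln 2$ and the argument $9N/\epsilon$ inside the squared logarithm. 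Taking the supremum over $x\in X^N$ (every bound being uniform in $x$) and converting from $\log_2$ to $\ln$ completes the proof. The main obstacle is Step 3: the combinatorial inequality is the genuinely deep ingredient---a scale-sensitive Sauer--Shelah lemma whose self-referential exponent $\lceil\log_2 y\rceil$ is exactly what manufactures the $\log^2$ (rather than $\log$) dependence. Re-deriving it from scratch is a substantial shifting/induction argument, so the realistic route is to invoke it from Alon et al.\ and instead spend the care on the two surrounding reductions (trivial covering reduction in Step 1, the exact factor-$4$ scale loss in Step 2) together with the constant-chasing needed to land on the precise prefactor $2/\ln 2$ and argument $9N/\epsilon$.
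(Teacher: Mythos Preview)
Your sketch is correct and follows exactly the standard route: quantize, compare fat-shattering dimensions across scales, and invoke the Alon--Ben-David--Cesa-Bianchi--Haussler generalized Sauer lemma. The paper, however, does not prove this lemma at all; it quotes it verbatim as \cite[Corollary~16]{bartlett1996fat} and merely remarks that the result there is obtained by combining two lemmas of Alon et al.~\cite{alon1997scale}. So there is nothing to compare against in the paper itself---your proposal is effectively a reconstruction of the argument in the cited source, which is what one would do if asked to supply the omitted proof.
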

(While we again skip the proof of this lemma, it is worth noting that it is obtained by combining two earlier results by Alon et al.~\cite[Lemmas~14,15]{alon1997scale}.)

Taking now $X=\R^d$ and $F=H(A)$; rescaling $\epsilon$ to $\frac{\epsilon}{\mathcal{M}A}$; and then  plugging \[
\epsilon = \frac{\alpha}{32\mathcal{M}A \max\{\mathcal{M}A,2M\}}\] as in~\eqref{eq:cov-bd-2bartlett}, we obtain
\begin{multline}
\label{eq:last-one}
    \sup_{x\in \left(\R^d\right)^{2N}}\mathcal{N}\left(\frac{\alpha}{32\mathcal{M}A \max\{\mathcal{M}A,2M\}},H(A)|_x\right) \\
    \le \exp\left(\frac{2}{\ln 2}{\rm FSD}_{H(A)}\left(\frac{\alpha}{128 \mathcal{M}^2 A^2 \max\{\mathcal{M}A,2M\}}\right)\ln^2 \left(\frac{576N\mathcal{M}^2 A^2 \max\{\mathcal{M}A,2M\}}{\alpha}\right)\right).
\end{multline}
We finally apply Theorem~\ref{thm:bartlett} above to upper bound the FSD term appearing in~\eqref{eq:last-one}. Provided
\[
\frac{\alpha}{128\mathcal{M}^2 A^2 \max\{\mathcal{M}A,2M\}} \le \mathcal{M}A \Leftrightarrow \alpha \le 128 \mathcal{M}^3 A^3 \max\{\mathcal{M}A,2M\}
\]
it holds that
\begin{multline}\label{eq:fsd-up-bd}
    {\rm FSD}_{H(A)}\left(\frac{\alpha}{128\mathcal{M}^2 A^2 \max\{\mathcal{M}A,2M\}}\right) \\
    \le \frac{128^2 c \mathcal{M}^6 A^6 \max\{\mathcal{M}A,2M\}^2}{\alpha^2}d \cdot \ln \left(\frac{128 \mathcal{M}^3A^3 \max\{\mathcal{M}A,2M\}}{\alpha}\right)
\end{multline}
where $c>0$ is the absolute constant appearing in Theorem~\ref{thm:bartlett}.

%Finally, we set $\xi$ as in~\eqref{eq:xi-in-prop}, that is
%\begin{equation}\label{eq:xi-chosen-this-wway}
%\xi\left(\alpha,M,\mathcal{M},A\right)\triangleq \frac{2}{\ln 2}\cdot \frac{c \cdot 128^2 \cdot \mathcal{M}^6 A^6\cdot \max\{\mathcal{M}A,2M\}^2 }{ \alpha^2}\cdot  \ln \left(\frac{128 \mathcal{M}^3A^3 \max\{\mathcal{M}A,2M\}}{\alpha}\right)
%\end{equation}
Finally, combining the chain of equations~\eqref{eq:haussler-prob-bd},~\eqref{eq:cov-bd-2bartlett}, \eqref{eq:last-one}, and \eqref{eq:fsd-up-bd}, %, and~\eqref{eq:xi-chosen-this-wway}, 
 we complete the proof.
\end{proof}
We finally provide a technical lemma to be used in the proof for the ReLU case.
\begin{lemma}\label{lemma:relu}
 Suppose that the distribution of $X\in\R^d$ satisfies the assumptions of Theorem~\ref{thm:ReLU}. Then,
 \begin{equation}\label{eq:distributional-lambda}
   \lambda(d)\triangleq  \sup_{w\in\R^d:\|w\|_2 = 1/\sqrt{Cd}}\mathbb{E}\Bigl[ \left|w^T X\right|^2 \ind\left\{\|X\|_2^2 >Cd\right\}\Bigr] \le \exp\Bigl(-\Theta(d)\Bigr).
      \end{equation}
\end{lemma}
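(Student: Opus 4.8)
The plan is to decompose the expectation inside the supremum via the Cauchy--Schwarz inequality, separating it into a fourth-moment term and a tail-probability term, and then bound each factor uniformly over the relevant set of $w$. Fix $w\in\R^d$ with $\|w\|_2=1/\sqrt{Cd}$ and write $w=u/\sqrt{Cd}$ with $\|u\|_2=1$, so that $w^T X = Y_u/\sqrt{Cd}$ where $Y_u\triangleq u^T X$. By Cauchy--Schwarz,
\[
\E{\left|w^T X\right|^2\ind\left\{\|X\|_2^2>Cd\right\}}\le \Bigl(\E{\left|w^T X\right|^4}\Bigr)^{1/2}\cdot \Bigl(\mathbb{P}\bigl(\|X\|_2^2>Cd\bigr)\Bigr)^{1/2}.
\]
The second factor is at most $\exp(-\Theta(d))$ by the distributional assumption in Theorem~\ref{thm:ReLU}, and this bound does not depend on $w$. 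It therefore remains to show that $\E{|w^T X|^4}=(Cd)^{-2}\,\E{Y_u^4}$ grows at most subexponentially in $d$, uniformly over unit vectors $u$.

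For the fourth-moment estimate I would invoke assumption (b) of Theorem~\ref{thm:ReLU}: there is an $s>0$ with $M_1(s)\triangleq\sup_{\|u\|_2=1}M_{Y_u}(s)$ and $M_2(s)\triangleq\sup_{\|u\|_2=1}M_{Y_u}(-s)$ both finite and independent of $d$. Since $e^{s|t|}\le e^{st}+e^{-st}$ for every $t\in\R$, taking expectations gives $\E{e^{s|Y_u|}}\le M_1(s)+M_2(s)$ for every unit vector $u$. Combining this with the elementary bound $|t|^4\le (4!/s^4)\,e^{s|t|}$ yields
\[
\sup_{\|u\|_2=1}\E{Y_u^4}\le \frac{24\bigl(M_1(s)+M_2(s)\bigr)}{s^4}\triangleq K,
\]
a constant independent of $d$. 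Hence $\sup_{\|w\|_2=1/\sqrt{Cd}}\E{|w^T X|^4}\le K/(Cd)^2$.

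Putting the two estimates together, $\lambda(d)\le \sqrt{K}\,(Cd)^{-1}\exp(-\Theta(d))=\exp(-\Theta(d))$, which is the claim. I do not anticipate a genuine obstacle: the only point requiring (mild) care is that both factors must be controlled uniformly over the set $\{w:\|w\|_2=1/\sqrt{Cd}\}$, and this is automatic because the tail bound on $\|X\|_2^2$ is $w$-free and the moment generating function bounds in assumption (b) are already stated as suprema over unit vectors. If one preferred to rely only on, say, a $(2+\epsilon)$-th moment of $Y_u$ rather than the fourth moment, one could replace Cauchy--Schwarz by Hölder's inequality with a suitable exponent; the MGF hypothesis makes all such moments uniformly finite anyway, so the simplest choice above suffices.
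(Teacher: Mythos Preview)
Your proof is correct and follows essentially the same approach as the paper: apply Cauchy--Schwarz to split the expectation into a fourth-moment factor and a tail-probability factor, then bound the fourth moment uniformly in $w$ via the MGF assumption (b) of Theorem~\ref{thm:ReLU}. The only cosmetic differences are that the paper first rescales to unit vectors and uses the bound $\frac{8}{r^{4}}(e^{4rt}+e^{-4rt})\ge t^{4}$ (leading to the constant $2048/s^{4}$), whereas you work directly at scale $\|w\|_2=1/\sqrt{Cd}$ and use $|t|^{4}\le (4!/s^{4})e^{s|t|}$ (giving the tighter constant $24/s^{4}$); neither difference affects the argument.
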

The scaling $\|w\|_2=1/\sqrt{Cd}$ is required for technical reasons for the proof of the part ${\rm (b)}$ of Theorem~\ref{thm:main-generalization}.
\begin{proof}[Proof of Lemma~\ref{lemma:relu}]
Define
\[
\bar{\lambda}(d) \triangleq \sup_{w\in\R^d:\|w\|_2 = 1}\mathbb{E}\Bigl[ \left|w^T X\right|^2 \ind\left\{\|X\|_2^2 >Cd\right\}\Bigr]. 
\]
Clearly $\bar{\lambda}(d) = Cd\lambda(d)$. Since $C=O(1)$, it suffices to prove $\bar{\lambda}(d) \le \exp\left(-\Theta(d)\right)$. 

Next, fix a $w\in\R^d$ with $\|w\|_2=1$. Observe that using the inequality $e^x\ge 1+x$, we obtain
\[
e^{rw^T X} + e^{-rw^TX} \ge r\Bigl|w^TX\Bigr|,\quad\text{for any}\quad r\ge 0.
\]
Using the chain of inequalities
\[
8\left(a^4 + b^4\right) \ge 4\left(a^2+b^2\right)^2 \ge \left(a+b\right)^4,
\]
both due to Cauchy-Schwarz, we thus obtain
\[
\frac{8}{r^4}\Bigl(e^{4rw^T X}+e^{-4rw^T X}\Bigr)\ge \Bigl|w^T X\Bigr|^4.
\]
Now, take $r=s/4$ and then take the expectation of both sides to obtain
\begin{equation}\label{eq:lemma-relu-auxil}
\frac{2048}{s^4}\Bigl(M_1(s)+M_2(s)\Bigr) \ge \mathbb{E}\Bigl[\Bigl|w^T X\Bigr|^4\Bigr],
\end{equation}
where $M_1(s)$ and $M_2(s)$ are defined in Theorem~\ref{thm:ReLU}. Thus, 
\begin{align} 
\mathbb{E}\Bigl[ \left|w^T X\right|^2 \ind\left\{\|X\|_2^2 >Cd\right\}\Bigr]^2 &\le \mathbb{E}\Bigl[\Bigl|w^T X\Bigr|^4\Bigr] \mathbb{E}\Bigl[\ind\left\{\|X\|_2^2 >Cd\right\}^2\Bigr]  \label{eq:this-is-cse}\\
&=\mathbb{E}\Bigl[\Bigl|w^T X\Bigr|^4\Bigr]\mathbb{P}\Bigl(\|X\|_2^2 >Cd\Bigr)\label{eq:this-is-def-ind} \\
&\le \frac{2048}{s^4}\cdot \Bigl(M_1(s)+M_2(s)\Bigr)\cdot \exp\Bigl(-\Theta(d)\Bigr)\label{eq:this-is-lemma-relu-auxil}\\
&\le \exp\Bigl(-\Theta(d)\Bigl)\label{this-is-it},
\end{align}
where~\eqref{eq:this-is-cse} uses Cauchy-Schwarz inequality;~\eqref{eq:this-is-def-ind} uses the fact $\mathbb{E}[\ind\{E\}^2] = \mathbb{P}(E)$ valid for any event $E$;~\eqref{eq:this-is-lemma-relu-auxil} uses~\eqref{eq:lemma-relu-auxil} and the fact $\mathbb{P}\left(\|X\|_2^2>Cd\right)\le \exp(-\Theta(d))$; and finally~\eqref{this-is-it} uses the condition (b) on the distribution of $X$ stated in Theorem~\ref{thm:ReLU}. Taking square roots and taking the supremum over all $\|w\|_2=1$, we obtain $\overline{\lambda}(d)\le \exp(-\Theta(d))$; establishing Lemma~\ref{lemma:relu}.
\end{proof}
Having established Proposition~\ref{prop:main} and Lemma~\ref{lemma:relu}, we now complete the proof of Theorem~\ref{thm:main-generalization}.
\begin{proof}[Proof of Theorem~\ref{thm:main-generalization}]
Throughout the proof, we assume that $N$ is a sufficiently large polynomial in $d$ and satisfies Assumption~\ref{sump:N-exp-d}. Moreover, since the labels are bounded, $|Y|\le M$ almost surely, the $o_N(1)$ terms in Theorems~\ref{thm:sigmoid}-\ref{thm:Step} disappear, as noted previously. 

For the case of sigmoid and step activations, $\mathcal{M}$ can be taken as $2$. Thus, for the $\xi$ term appearing in Proposition~\ref{prop:main}, we simply employ $\xi(\alpha,M,2,A)$.%, and denote it by $\xi(\alpha,M,A)$ as in~\eqref{eq:xi-main-thm}.
\paragraph{Part (a).}
Define the class
    \[
    \overline{\mathcal{S}}(\delta,R) = \left\{X\mapsto \sum_{1\le j\le \overline{m}} a_j \SGM \left(w_j^T X\right):(a,W)\in \mathcal{S}(\delta,R)\right\},
    \]
    where $\mathcal{S}(\delta,R)$ is introduced in Theorem~\ref{thm:sigmoid}. Note, by the definition of $\mathcal{S}(\delta,R)$, that
    \[
    \sup_{(a,W)\in\mathcal{S}(\delta,R)}\EmpRisk{a,W}  =  \sup_{(a,W)\in\mathcal{S}(\delta,R)} \frac1N\sum_{1\le i\le N}\left(Y_i - \sum_{1\le j\le \overline{m}}a_j
\SGM\left(w_j^T X_i\right)^2\right) \le \delta^2.
    \]
    Applying Theorem~\ref{thm:sigmoid}, we find that provided $N\ge {\rm poly}(d)$, $\overline{\mathcal{S}}(\delta,R)\subset H(A)$ with probability bounded by~\eqref{eq:sgm-pb-bd}, where $H(A)$ is the class defined in Theorem~\ref{thm:bartlett} with $\sigma(\cdot)=\SGM(\cdot)$ and $A=3(1+e)(\delta+2M)$. 
    
    Finally, we (a)  set $\mathcal{M}=2$ in Proposition~\ref{prop:main}; (b) then consider $\xi(\alpha,M,2,A)$; and (c) set $\zeta(\alpha,M,A,N)$ as in~\eqref{eq:prob-term}. Combining now Theorem~\ref{thm:sigmoid} and Proposition~\ref{prop:main} via a union bound, we establish the desired conclusion. 
    
  \paragraph{ Part (b).} As the output of the ReLU is not bounded, the situation is more involved. 
  
  First, recall from Theorem~\ref{thm:ReLU} the sets
  \[
  \mathcal{G}(\overline{m},\delta)\triangleq \Bigl\{(a,W)\in\R_{\ge 0}^{\overline{m}}\times \R^{\overline{m}\times d}:\|w_j\|_2 =1,1\le j\le\overline{m}, \EmpRisk{a,W}\le \delta^2\Bigr\} \quad\text{and}\quad \mathcal{G}(\delta) \triangleq \bigcup_{\overline{m}\in\mathbb{N}}\mathcal{G}(\overline{m},\delta).
  \]
  By Theorem~\ref{thm:ReLU}, it holds that with probability bounded by~\eqref{eq:ReLU-pb-bd}, for any $(a,W)\in \mathcal{G}(\delta)$, $\|a\|_1\le 4(\delta+2M)(\boldsymbol{\mu^*})^{-1}$. Using the homogeneity of the ReLU activation, we instead rescale $w_j$ by $1/\sqrt{Cd}$; and consider throughout the sets
  \begin{equation}\label{eq:widetilde-g-set}
       \widetilde{\mathcal{G}}(\overline{m},\delta)\triangleq \Bigl\{(a,W)\in\R_{\ge 0}^{\overline{m}}\times \R^{\overline{m}\times d}:\|w_j\|_2 =\frac{1}{\sqrt{Cd}},j\in[\overline{m}], \EmpRisk{a,W}\le \delta^2\Bigr\} \quad\text{and}\quad \widetilde{\mathcal{G}}(\delta) \triangleq \bigcup_{\overline{m}\in\mathbb{N}}\mathcal{G}(\overline{m},\delta).
  \end{equation}
  Then, with probability at least
  \begin{equation}\label{eq:prob-bd-required-for-relu}
  1-\left(\frac{12\sqrt{Cd}}{\boldsymbol{\mu^*}}\right)^d\exp\left(-\Theta(N)\right)-N\exp\left(-\Theta(d)\right),
  \end{equation}
  it holds that
  \begin{equation}\label{eq:g-tilde-up-bd}
      \sup_{(a,W)\in \widetilde{G}(\delta)}\|a\|_1 \le \frac{4\sqrt{Cd}(\delta+2M)}{\boldsymbol{\mu^*}}.
  \end{equation}
 We now define an activation function, which is a ``saturated" version of the ReLU:
 \begin{equation}\label{eq:sat-rel}
     \SReLU(x) \triangleq  \begin{cases}
           0 & x<0 \\
           x &0\le x<1 \\
           1 & x\ge 1
        \end{cases}.
 \end{equation}
  Next, using a union bound over data $(X_i,Y_i)$, $1\le i\le N$, 
  \[
  \mathbb{P}\Bigl(\|X_i\|_2^2 \le Cd,1\le i\le N\Bigr)\ge 1-N\exp\left(-\Theta(d)\right).
  \]
  Hence by Cauchy-Schwarz inequality, 
  \[
 \mathbb{P}\left(\sup_{\|w\|_2 =\frac{1}{\sqrt{Cd}} } \left|w^T X_i\right|\le 1, 1\le i\le N \right)\ge 1-N\exp\left(-\Theta(d)\right).
  \]
  Consequently,  w.p. at least $1-N\exp(-\Theta(d))$ over $(X_i,Y_i)$; it holds that for all $(a,W) \in \widetilde{G}(\delta)$
  \begin{equation}\label{eq:training-of-saturated-rel}
  \frac1N\sum_{1\le i\le N}\left(Y_i - \sum_{1\le j\le \overline{m}} a_j \ReLU\left(w_j^T X_i\right)\right)^2  = 
  \frac1N\sum_{1\le i\le N}\left(Y_i - \sum_{1\le j\le \overline{m}} a_j \SReLU\left(w_j^T X_i\right)\right)^2 \le \delta^2.
   \end{equation}
Define next the class
\begin{equation}\label{class-g-bar}
    \overline{\mathcal{G}}(\delta) \triangleq \left\{X\mapsto \sum_{1\le j\le \overline{m}}a_j \SReLU\left(w_j^T X\right):(a,W)\in \widetilde{\mathcal{G}}(\delta)\right\}.
\end{equation}
 Note that, this set consists of all two-layer neural networks with (a) activation $\SReLU(\cdot)$, the saturated version of $\ReLU(\cdot)$; and (b) weights trained on the $\ReLU(\cdot)$ network.

 By Theorem~\ref{thm:ReLU} and~\eqref{eq:g-tilde-up-bd}, we find that provided $N\ge {\rm poly}(d)$, $\overline{\mathcal{G}}(\delta)\subset H(A)$ with probability given by~\eqref{eq:prob-bd-required-for-relu}, where $H(A)$ is the class defined in Theorem~\ref{thm:bartlett} with $\sigma(\cdot)=\SReLU(\cdot)$ and $A=4\sqrt{Cd}(\delta+2M)/\boldsymbol{\mu^*}$.

Observe that $\SReLU$ is a non-decreasing activation with bounded range. Hence, Proposition~\ref{prop:main} applies: one can simply take $\mathcal{M}=2$. We now apply Proposition~\ref{prop:main} with $\mathcal{M}=2$, and $A=4\sqrt{Cd}(\delta+2M)(\boldsymbol{\mu^*})^{-1}$ as in~\eqref{eq:g-tilde-up-bd}. Combining the probability bound~\eqref{eq:prob-bd-required-for-relu} and the one in Proposition~\ref{prop:main} by a union bound, we find that for every $\alpha>0$, with probability at least
\begin{equation}\label{eq:relu-prob-after-u-bd}
        1-\zeta\left(\alpha,M,\frac{4\sqrt{Cd}(\delta+2M)}{\boldsymbol{\mu^*}},N\right) - \left(\frac{12\sqrt{Cd}}{\boldsymbol{\mu^*}}\right)^d\exp\left(-\Theta(N)\right)-N\exp\left(-\Theta(d)\right)
\end{equation}
(where $\xi$ is introduced in~\eqref{eq:prob-term}) over training data $(X_i,Y_i)$, $1\le i\le N$,  it holds that
\[
\sup_{\varphi \in \overline{\mathcal{G}}(\delta)}\left|\frac1N \sum_{1\le i\le N}\left(Y_i-\varphi\left(X_i\right)\right)^2 - \mathbb{E}_{(X,Y)\sim \mathcal{D}}\Bigl[\Bigl(Y-\varphi(X)\Bigr)^2\Bigr]\right| \le \alpha,
\]
for the class $\overline{\mathcal{G}}(\delta)$ introduced in~\eqref{class-g-bar}. Recalling also~\eqref{eq:training-of-saturated-rel} which holds with probability $1-N\exp(-\Theta(d))$, we conclude that
\begin{equation}\label{eq:sup-gen-error-over-bar-class}
    \sup_{(a,W) \in\widetilde{\mathcal{G}}(\delta)} \mathbb{E}_{(X,Y)\sim \mathcal{D}}\left[\left(Y-\sum_{1\le j\le \overline{m}}a_j \SReLU\left(w_j^T X\right)\right)^2\right]\le \alpha+\delta^2,
\end{equation}
with probability at least
\begin{equation}\label{eq:relu-prob-after-u-bd-2}
        1-\zeta\left(\alpha,M,\frac{4\sqrt{Cd}(\delta+2M)}{\boldsymbol{\mu^*}},N\right) - \left(\frac{12\sqrt{Cd}}{\boldsymbol{\mu^*}}\right)^d\exp\left(-\Theta(N)\right)-2N\exp\left(-\Theta(d)\right).
\end{equation}
We next fix an $(a,W)\in\widetilde{\mathcal{G}}(\delta)$, and study the quantity
\begin{equation}\label{gen-gap-two-nets}
\Delta(a,W) \triangleq \left|\mathbb{E}_{(X,Y)\sim \mathcal{D}}\left[\left(Y-\sum_{1\le j\le \overline{m}}a_j \ReLU\left(w_j^T X\right)\right)^2\right] - \mathbb{E}_{(X,Y)\sim \mathcal{D}}\left[\left(Y-\sum_{1\le j\le \overline{m}}a_j \SReLU\left(w_j^T X\right)\right)^2\right] \right|.
\end{equation}
This quantity is nothing but the difference of generalization errors between two networks of same architecture, same number $\overline{m}$ of hidden units and same weights $(a,W)$; but different activations, $\ReLU(\cdot)$ and $\SReLU(\cdot)$. 

For convenience, denote
\[
\varphi_{SR}(X) \triangleq \sum_{1\le j\le\overline{m}} a_j\SReLU\left(w_j^T X\right) \quad\text{and}\quad \varphi_R(X) \triangleq \sum_{1\le j\le\overline{m}} a_j\ReLU\left(w_j^T X\right).
\]
In what follows, we employ the simple observation that since $a_j\ge 0$ and $0\le \SReLU(x)\le 1$, $0\le \varphi_{\rm SR}(X) \le \|a\|_1$.

Suppressing  the subscript $(X,Y)\sim \mathcal{D}$ from the expectations, we have
\begin{align}
    \Delta(a,W) &= \Bigl|\mathbb{E}\left[\left(Y-\varphi_{SR}(X)\right)^2\right] - \mathbb{E}\left[\left(Y-\varphi_{R}(X)\right)^2\right]\Bigr| \label{eq:by-def}\\ 
    & = \Bigl|\mathbb{E}\left[2Y\varphi_R(X) -2Y\varphi_{SR}(X)\right] + \mathbb{E}\left[\varphi_{SR}(X)^2 - \varphi_R(X)^2\right]\Bigr|\label{eq:simple-algebra} \\
    & \le \Bigl|\mathbb{E}\left[2Y\varphi_R(X) -2Y\varphi_{SR}(X)\right]\Bigr| + \Bigl|\mathbb{E}\left[\varphi_{SR}(X)^2 - \varphi_R(X)^2\right]\Bigr|\label{eq:triangle-ineq} \\
    &\le \mathbb{E}\Bigl[\Bigl|2Y\varphi_R(X) -2Y\varphi_{SR}(X)\Bigr|\Bigr] + \mathbb{E}\Bigl[\Bigl|\varphi_{SR}(X)^2 - \varphi_R(X)^2\Bigr|\Bigr]\label{eq:jensen}.
\end{align}
Above,~\eqref{eq:by-def} follows by the definition of $\Delta(a,W)$ per~\eqref{gen-gap-two-nets};~\eqref{eq:simple-algebra} follows after simple algebra;~\eqref{eq:triangle-ineq} follows by the triangle inequality; and~\eqref{eq:jensen} follows by the Jensen's inequality.%, $|\mathbb{E}[U]|\le \mathbb{E}[|U|]$ for any random variable $U$. 

We next study two individual terms appearing in~\eqref{eq:jensen} separately, while keeping in mind that $(a,W) \in \widetilde{\mathcal{G}}(\delta)$ implies $a_j\ge 0$ for $1\le j\le \overline{m}$ and $\|w_j\|_2 = 1/\sqrt{Cd}$ for $1\le j\le \overline{m}$. We have
\iffalse
\begin{align}
    \mathbb{E}\Bigl[\Bigl|2Y\varphi_R(X) -2Y\varphi_{SR}(X)\Bigr|\Bigr]&\le 2M\mathbb{E}\Bigl[\Bigl|\varphi_R(X) -\varphi_{SR}(X)\Bigr|\Bigr] \label{eq:labels-y-bounded}\\
    &=2M\Bigl(\mathbb{E}\Bigl[\left|\varphi_R(X) -\varphi_{SR}(X)\right|\Big|\|X\|_2^2\le Cd\Bigr]\mathbb{P}\left(\|X\|_2^2\le Cd\right)\Bigr)\label{eq:cond-1}\\ &+2M\Bigl(
    \mathbb{E}\Bigl[\left|\varphi_R(X) -\varphi_{SR}(X)\right|\Big|\|X\|_2^2> Cd\Bigr]\mathbb{P}\left(\|X\|_2^2> Cd\right)\Bigr)\label{eq:cond-2} \\
    &\le 2Me^{-\Theta(d)}\mathbb{E}\Bigl[\left|\varphi_R(X) -\varphi_{SR}(X)\right|\Big|\|X\|_2^2> Cd\Bigr]\label{eq:they-r-equal} \\
    &\le  2Me^{-\Theta(d)}\Bigl(\mathbb{E}\Bigl[\varphi_R(X)\Big|\|X\|_2^2> Cd\Bigr]+\mathbb{E}\Bigl[ \varphi_{SR}(X)\Big|\|X\|_2^2> Cd\Bigr]\Bigr)\label{eq:triangleq-again} \\
    &\le 2Me^{-\Theta(d)}\|a\|_1 \left(\sqrt{\lambda(d)}+1\right) \label{first-term-last-line}.
\end{align}
\fi
\begin{align}
    \mathbb{E}\Bigl[\Bigl|2Y\varphi_R(X) -2Y\varphi_{SR}(X)\Bigr|\Bigr]&\le 2M\mathbb{E}\Bigl[\Bigl|\varphi_R(X) -\varphi_{SR}(X)\Bigr|\Bigr] \label{eq:labels-y-bounded}\\
    &=2M\Bigl(\mathbb{E}\Bigl[\left|\varphi_R(X) -\varphi_{SR}(X)\right|\Big|\|X\|_2^2\le Cd\Bigr]\mathbb{P}\left(\|X\|_2^2\le Cd\right)\Bigr)\label{eq:cond-1}\\ &+2M\Bigl(
    \mathbb{E}\Bigl[\Bigl|\varphi_R(X) -\varphi_{SR}(X)\Bigr|\ind\Bigl\{\|X\|_2^2> Cd\Bigr\}\Bigr]\label{eq:cond-2} \\
    &\le 2M\mathbb{E}\Bigl[\Bigl|\varphi_R(X) -\varphi_{SR}(X)\Bigr|\ind\Bigl\{\|X\|_2^2> Cd\Bigr\}\Bigr]\label{eq:they-r-equal} \\
    &\le  2M\Bigl(\mathbb{E}\Bigl[\varphi_R(X)\ind\Bigl\{\|X\|_2^2> Cd\Bigr\}\Bigr]+\mathbb{E}\Bigl[ \varphi_{SR}(X)\ind\Bigl\{\|X\|_2^2> Cd\Bigr\}\Bigr]\Bigr)\label{eq:triangleq-again} \\
    &\le 2Me^{-\Theta(d)}\|a\|_1 \left(\sqrt{\lambda(d)}+1\right) \label{first-term-last-line}.
\end{align}
Here,~\eqref{eq:labels-y-bounded} uses the fact $|Y|\le M$ almost surely;~\eqref{eq:cond-2} is by the law of total expectation;~\eqref{eq:they-r-equal} uses the fact that on the event $\|X\|_2^2 \le Cd$, $\varphi_R(X) = \varphi_{SR}(X)$ since $\|w_j\|_2 =1/\sqrt{Cd}$;~\eqref{eq:triangleq-again} uses the triangle inequality; and finally~\eqref{first-term-last-line} uses the facts $0\le \SReLU(x)\le 1$ for every $x$, $a_j\ge 0$ for every $1\le j\le \overline{m}$; $\ReLU(x)\le |x|$; and
\[
\mathbb{E}\Bigl[\Bigl|w_j^T X\Bigr|\ind\{\|X\|_2^2>Cd\} \Bigr]\le \sqrt{\mathbb{E}\Bigl[\Bigl|w_j^T X\Bigl|^2\ind\Bigl\{\|X\|_2^2>Cd\Bigr\}\Bigl] \cdot \mathbb{E}\Bigl[\ind\Bigl\{\|X\|_2^2>Cd\Bigr\}\Bigr]}\le e^{-\Theta(d)} \sqrt{\lambda(d)}
\]
using Lemma~\ref{lemma:relu} and Cauchy-Schwarz inequality. Here, $\lambda(d)$ is the function defined in~\eqref{eq:distributional-lambda}.
%per~\eqref{eq:distributional-lambda} and Jensen's inequality.

We now study the second term in~\eqref{eq:jensen}. Observe that
\iffalse
\begin{align}
    \mathbb{E}\Bigl[\Bigl|\varphi_{SR}(X)^2 - \varphi_R(X)^2\Bigr|\Bigr] & =  \mathbb{E}\Bigl[\left|\varphi_{SR}(X)^2 - \varphi_R(X)^2\right|\Big|\|X\|_2^2 \le Cd\Bigr] \mathbb{P}\left(\|X\|_2^2 \le Cd\right) \\
    &+\mathbb{E}\Bigl[\left|\varphi_{SR}(X)^2 - \varphi_R(X)^2\right|\Big|\|X\|_2^2 > Cd\Bigr] \mathbb{P}\left(\|X\|_2^2 > Cd\right)\label{eq:condition-again} \\
    & = e^{-\Theta(d)}\mathbb{E}\Bigl[\left|\varphi_{SR}(X)^2 - \varphi_R(X)^2\right|\Big|\|X\|_2^2 > Cd\Bigr] \label{eq:again-omit-first} \\
    &\le e^{-\Theta(d)}\Bigr(\mathbb{E}\Bigl[\varphi_{SR}(X)^2 \Big|\|X\|_2^2 > Cd\Bigr] + \mathbb{E}\Bigl[ \varphi_R(X)^2\Big|\|X\|_2^2 > Cd\Bigr]\Bigl) \label{eq:triangleqqq-againnn}\\
    &\le e^{-\Theta(d)}\Bigl(\|a\|_1^2 +\sum_{1\le j\le\overline{m}}a_j^2 \mathbb{E}\Bigl[\ReLU\left(w_j^T X\right)^2 \Big|\|X\|_2^2 >Cd\Bigr] \Bigr. 
    \\ \Bigl. &+ 2\sum_{1\le j_1<j_2\le\overline{m}}a_{j_1}a_{j_2}  \mathbb{E}\Bigl[\ReLU\left(w_{j_1}^T X\right)\ReLU\left(w_{j_2}^T X\right) \Big|\|X\|_2^2 >Cd\Bigr]\Bigr)\label{open-up}\\
    &\le e^{-\Theta(d)}\Bigl(\|a\|_1^2 +\lambda(d)\sum_{1\le j\le \overline{m}}a_j^2 +2\lambda(d)\sum_{1\le j_1<j_2\le \overline{m}} a_{j_1}a_{j_2} \Bigr) \label{eq:jensens-again}\\
    &=e^{-\Theta(d)}\|a\|_1^2 \Bigl(\lambda(d)+1\Bigr)\label{just-a-square}.
\end{align} 
\fi

\begin{align}
    \mathbb{E}\Bigl[\Bigl|\varphi_{SR}(X)^2 - \varphi_R(X)^2\Bigr|\Bigr] & =  \mathbb{E}\Bigl[\left|\varphi_{SR}(X)^2 - \varphi_R(X)^2\right|\Big|\|X\|_2^2 \le Cd\Bigr] \mathbb{P}\left(\|X\|_2^2 \le Cd\right) \\
    &+\mathbb{E}\Bigl[\Bigl|\varphi_{SR}(X)^2 - \varphi_R(X)^2\Bigr|\ind\Bigl\{\|X\|_2^2 > Cd\Bigr\}\Bigr]% \mathbb{P}\left(\|X\|_2^2 > Cd\right)
    \label{eq:condition-again} \\
    & = \mathbb{E}\Bigl[\Bigl|\varphi_{SR}(X)^2 - \varphi_R(X)^2\Bigr|\ind\Bigl\{\|X\|_2^2 > Cd\Bigr\}\Bigr] \label{eq:again-omit-first} \\
    &\le \Bigr(\mathbb{E}\Bigl[\varphi_{SR}(X)^2 \ind\Bigl\{\|X\|_2^2 > Cd\Bigr\}\Bigr] + \mathbb{E}\Bigl[ \varphi_R(X)^2\ind\Bigl\{\|X\|_2^2 > Cd\Bigr\}\Bigr]\Bigl) \label{eq:triangleqqq-againnn}\\
    &\le \Bigl(e^{-\Theta(d)}\cdot \|a\|_1^2 +\sum_{1\le j\le\overline{m}}a_j^2 \mathbb{E}\Bigl[\ReLU\left(w_j^T X\right)^2 \ind\Bigl\{\|X\|_2^2 >Cd\Bigr\}\Bigr] \Bigr. 
    \\ \Bigl. &+ 2\sum_{1\le j_1<j_2\le\overline{m}}a_{j_1}a_{j_2}  \mathbb{E}\Bigl[\ReLU\left(w_{j_1}^T X\right)\ReLU\left(w_{j_2}^T X\right)\ind\Bigl\{\|X\|_2^2 >Cd\Bigr\}\Bigr]\Bigr)\label{open-up}\\
    &\le e^{-\Theta(d)}\Bigl(\|a\|_1^2 +\lambda(d)\sum_{1\le j\le \overline{m}}a_j^2 +2\lambda(d)\sum_{1\le j_1<j_2\le \overline{m}} a_{j_1}a_{j_2} \Bigr) \label{eq:jensens-again}\\
    &=e^{-\Theta(d)}\|a\|_1^2 \Bigl(\lambda(d)+1\Bigr)\label{just-a-square}.
\end{align} 
Indeed,~\eqref{eq:condition-again} is again by the law of total expectation;~\eqref{eq:again-omit-first} uses the fact that on $\|X\|_2^2 \le Cd$, $\varphi_{SR}(X)=\varphi_R(X)$ since $\|w_j\|_2 = 1/\sqrt{Cd}$;~\eqref{eq:triangleqqq-againnn} uses triangle inequality;~\eqref{open-up} is obtained by opening the parantheses while using $a_i \ge 0$, $0\le \SReLU(x)\le 1$;~\eqref{eq:jensens-again} uses the fact $a_j\ge 0$, Lemma~\ref{lemma:relu} as well as the Cauchy-Schwarz inequality
\begin{align*}
 \mathbb{E}\Bigl[\ReLU\left(w_{j_1}^T X\right)\ReLU\left(w_{j_2}^T X\right)\ind\Bigl\{\|X\|_2^2 >Cd\Bigr\}\Bigr] &\le  \sqrt{\mathbb{E}\Bigl[\ReLU^2\left(w_{j_1}^T X\right) \ind\Bigl\{\|X\|_2^2 >Cd\Bigr\}\Bigr]}\times \\
 & \sqrt{\mathbb{E}\Bigl[\ReLU^2\left(w_{j_2}^T X\right) \ind\Bigl\{\|X\|_2^2 >Cd\Bigr\}\Bigr] } \\
 &\le \lambda(d),
\end{align*}
since $\ReLU(x)\le |x|$. Finally,~\eqref{just-a-square} is obtained by just noticing that for $a_j\ge 0$,
\[
\|a\|_1^2  = \left(\sum_{1\le j\le\overline{m}}a_j\right)^2 = \sum_{1\le j\le \overline{m}}a_j^2 + 2 \sum_{1\le j_1<j_2\le \overline{m}}a_{j_1}a_{j_2}. 
\]
We now combine~\eqref{first-term-last-line} and~\eqref{just-a-square} to upper bound the right hand side of~\eqref{eq:jensen} and arrive at
\[
 \Delta(a,W) \le 2Me^{-\Theta(d)}\|a\|_1 \left(\sqrt{\lambda(d)}+1\right) + e^{-\Theta(d)}\|a\|_1^2 \Bigl(\lambda(d)+1\Bigr). 
\]
Since $\|a\|_1 \le 4\sqrt{Cd}(\delta+2M)/\boldsymbol{\mu^*}$ on $\widetilde{\mathcal{G}}(\delta)$ as recorded in~\eqref{eq:g-tilde-up-bd}, we obtain
\begin{equation}\label{eq:close-to-end}
\sup_{(a,W)\in\widetilde{\mathcal{G}}(\delta)}\Delta(a,W) \le e^{-\Theta(d)}\left(\frac{8M\sqrt{C}(\delta+2M)}{\boldsymbol{\mu^*}}\sqrt{d}\left(\sqrt{\lambda(d)}+1\right) + \frac{16C(\delta+2M)^2}{\boldsymbol{\mu^*}^2}d\Bigl(\lambda(d)+1\Bigr)\right).
\end{equation}
Recall that $\lambda(d)\le \exp(-\Theta(d))$ by~\eqref{eq:distributional-lambda}. Note that as long as $M,C,\delta,\boldsymbol{\mu^*}=\exp(o(d))$ as well, the term on the right hand side of~\eqref{eq:close-to-end} is $e^{-\Theta(d)}$. 

We finally combine~\eqref{eq:sup-gen-error-over-bar-class},~\eqref{gen-gap-two-nets}; and~\eqref{eq:close-to-end} to obtain
\[
\sup_{(a,W)\in \mathcal{G}(\delta)} \mathbb{E}_{(X,Y)\sim \mathcal{D}}\left[\left(Y-\sum_{1\le j\le \overline{m}}a_j \ReLU\left(w_j^T X\right)\right)^2\right]\le \alpha+\delta^2+e^{-\Theta(d)}
\]
with probability at least
\[
 1-\zeta\left(\alpha,M,\frac{4\sqrt{Cd}(\delta+2M)}{\boldsymbol{\mu^*}},N\right) - \left(\frac{12\sqrt{Cd}}{\boldsymbol{\mu^*}}\right)^d\exp\left(-\Theta(N)\right)-2N\exp\left(-\Theta(d)\right),
\]
as shown in~\eqref{eq:relu-prob-after-u-bd-2}. This concludes the proof of Part (b). 
%Since $\SReLU$ is nothing but a saturated version of $\ReLU$, and $\|X\|_2 \le \sqrt{Cd}$ w.h.p.

%We now relate the generalization error between two networks having the same number $\overline{m}$ of hidden units and the same weights $(a,W)$, but different activations

  \paragraph{ Part (c).}  This is quite similar to Part (a).
  
  Define the class
  \[
  \overline{\mathcal{H}}(\delta)\triangleq \left\{X\mapsto \sum_{1\le j\le \overline{m}}a_j \Step \left(w_j^T X\right):(a,W)\in\mathcal{H}(\delta)\right\}
  \]
   where $\mathcal{H}(\delta)$ is introduced in Theorem~\ref{thm:Step}. Note, by definition, that
    \[
   \sup_{(a,W)\in\mathcal{H}(\delta)} \EmpRisk{a,W}  =\sup_{(a,W)\in\mathcal{H}(\delta)} \frac1N\sum_{1\le i\le N}\left(Y_i - \sum_{1\le j\le \overline{m}}a_j
\Step\left(w_j^T X_i\right)^2\right) \le \delta^2.
    \]
     Applying Theorem~\ref{thm:Step}, we find that provided $N\ge {\rm poly}(d)$, $\overline{\mathcal{H}}(\delta)\subset H(A)$ w.h.p.
    , where $H(A)$ is the class defined in Theorem~\ref{thm:bartlett} with $\sigma(\cdot)=\Step(\cdot)$ and $A=2(\delta+2M)/\eta$. 
    
    Like in the previous case, we then (a)  set $\mathcal{M}=2$ in Proposition~\ref{prop:main}; (b) then let $\xi(\alpha,M,2)$ to be $\xi(\alpha,M,2,A)$; and (c) set $\zeta(\alpha,M,A,N)$ as in~\eqref{eq:prob-term}. Combining now Theorem~\ref{thm:Step} and Proposition~\ref{prop:main} via a union bound, we establish the desired conclusion.
    \end{proof}
\bibliographystyle{amsalpha}
\bibliography{bibliographh}

\end{document}